\newtheorem{theorem}{Theorem}
\newtheorem{property}{Property}
\newtheorem{definition}{Definition}
\newtheorem{corollary}{Corollary}[theorem]
\newtheorem{example}{Example}
\newtheorem{proof}{Proof}
\title{Monotone Boolean Functions, Feasibility/Infeasiblity, LP-Type Problems and MaxCon}
\author{
 David Suter \\
  School of Computing and Security\\
  School of Science\\  Edith Cowan University \\
  \texttt{d.suter `at' ecu with edu.au} \\
   \And
  Ruwan Tennakoon\\
  Computer Science and Software Engineering\\ School of Science\\ RMIT University \\
  \And
  Erchuan Zhang\\
  School of Computing and Security\\
  School of Science\\  Edith Cowan University \\
  \And
  Tat-Jun Chin\\
  School of Computer Science\\
  The University of Adelaide\\
  \And
 Alireza Bab-Hadiashar \\
  School of Engineering\\  RMIT University 
}
\begin{document}
\maketitle

\begin{abstract}
This paper outlines connections between Monotone Boolean Functions, LP-Type problems and the Maximum Consensus  Problem. The latter refers to a particular type of robust fitting characterisation, popular in Computer Vision (MaxCon). Indeed, this is our main motivation but we believe the results of the study of these connections are more widely applicable to LP-type problems (at least ``thresholded versions'', as we describe), and perhaps even more widely.

We illustrate, with examples from Computer Vision, how the resulting perspectives suggest new algorithms.
Indeed, we focus, in the experimental part, on how the Influence (a property of Boolean Functions that takes on a special form if the function is Monotone) can guide a search for the MaxCon solution. 

{Keywords: {\bf Monotone Boolean Functions, Consensus Maximisation, LP-Type Problem, Computer Vision, Robust Fitting}}
\end{abstract}

\tableofcontents

\section{Introduction}

The popular Maximum Consensus (MaxCon) criterion for robust fitting (as typified by that of RANSAC \cite{10.1145/358669.358692}), seeks the maximum sized feasible set. 
Here feasible means that all data points belonging to the ``structure'' (the inlier set) fits the model within some tolerance level. We call this ``being within an epsilon-band of a model''.

The set of all possible subsets of the given data, is related to the Boolean Cube.
Thus statements about which of the subsets are feasible (can be contained in tolerance-level defined region around an instance of the model) is a statement of the evaluation of a Boolean Function, (say, which outputs 1 for infeasible, and 0 for feasible) over the Boolean Cube.

This view does not seem to have been specifically emphasised before - though we don't claim no one has observed that view. 
Such a view immediately opens the huge theory and very many associated mathematical tools, developed during the study of Boolean Functions; for the purposes of the analysis of the MaxCon problem and for devising algorithms for solving it.

However, there is one more crucial observation. We are not just talking about any Boolean Function: the inherent properties of feasibility and infeasibility dictate that we are talking about {\em special} Boolean Functions - the Monotone Boolean Functions \cite{Korshunov_2003}. 
This special class of Boolean Functions has received particular attention in the literature.

In this paper we, in a sense, begin the journey, of investigating what the theory of Monotone Boolean Functions has to say about the MaxCon problem and about what algorithms the theory may suggest.
For this purpose we concentrate on a property of all Boolean Functions, called Influence \cite{o'donnell_2014}; a property that has a particular relationship with the Fourier Transform of a Boolean Function {\em when that Boolean Function is Monotone}. 

Before proceeding with details of the paper, we make a slightly tongue-in-cheek remark. Since we show that MaxCon, when viewed in the above sketched framework, is nothing more than the search for the maximum zero (see definition and explanation below) of the above-mentioned Monotone Infeasibility Function, and since Monotone Boolean Functions have been studied for decades, it could be said that MaxCon was studied even before it was defined. 
Indeed, as long ago as the 1970's, the question of how to search the Boolean Cube for the maximum zero of a Monotone Boolean Function was studied \cite{KULYANOV1975267}.
This abstract problem includes MaxCon as a special case (the special case where the Monotone Boolean Function in question is the one that returns, for a given input subset of the data, the feasibility or infeasibility of that subset being contained in an epsilon band around any model).
However, the concepts and tools we take advantage of in this paper, are more recent - though some of them have still been in the literature for a considerable time.

Whilst Monotone Boolean Functions (MBFs) have been extensively studied for a variety of application domains,
including learning theory (and computer vision is, these days, highly dominated by learning style approaches),
there appears to be relatively little attention to MBFs in Computer Vision: and more specifically in model-based computer vision. Reference \cite{RAMALINGAM20171} appears to be a recent exception - but even this is tackling  very different considerations from our main areas of interest (namely, \cite{RAMALINGAM20171} is concerned with efficient Conditional Random Field (CRF) calculations, and CRF modelling is very different to the geometric modelling we refer to).

Lastly, just as we now know that MaxCon can be placed within the framework of LP-type problems \cite{7873535}, much of what we present in this paper, has analogues in that broader class of problems.
Indeed, we highlight that since Monotone (though not necessarily Boolean) functions  are an inherent property, of LP-Type problems, there is a clear link to Monotone Boolean Functions (through thresholding - if that function in question is not already Boolean). Of course, we have to consider not just the range of the function, but also the domain: and for an arbitrary LP-type problem the domain of the Monotone function  is not always the Boolean Cube. We leave exploring how the ideas presented here need to be elaborated for more general domains (e.g., lattices) for further work.


We conclude this introduction by giving some concrete examples of LP-type problems: see Figure \ref{fig:fig}, and sketching some terminology and concepts.
\begin{figure}
  {  \centering
\subfigure[]{
  \includegraphics[width=.5\linewidth]{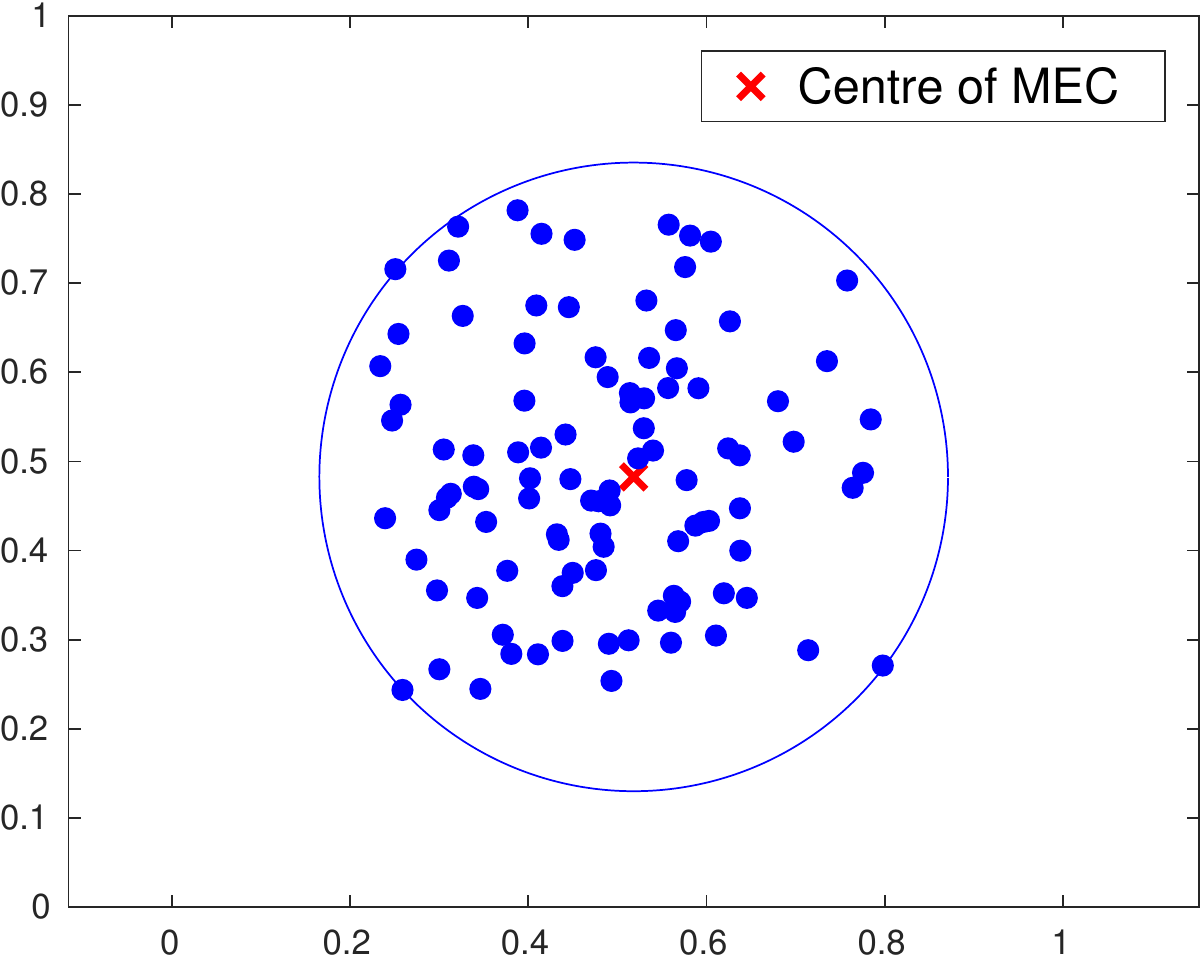}  
  \label{fig:sub-first}
}
\subfigure[]{
  \includegraphics[width=.5\linewidth]{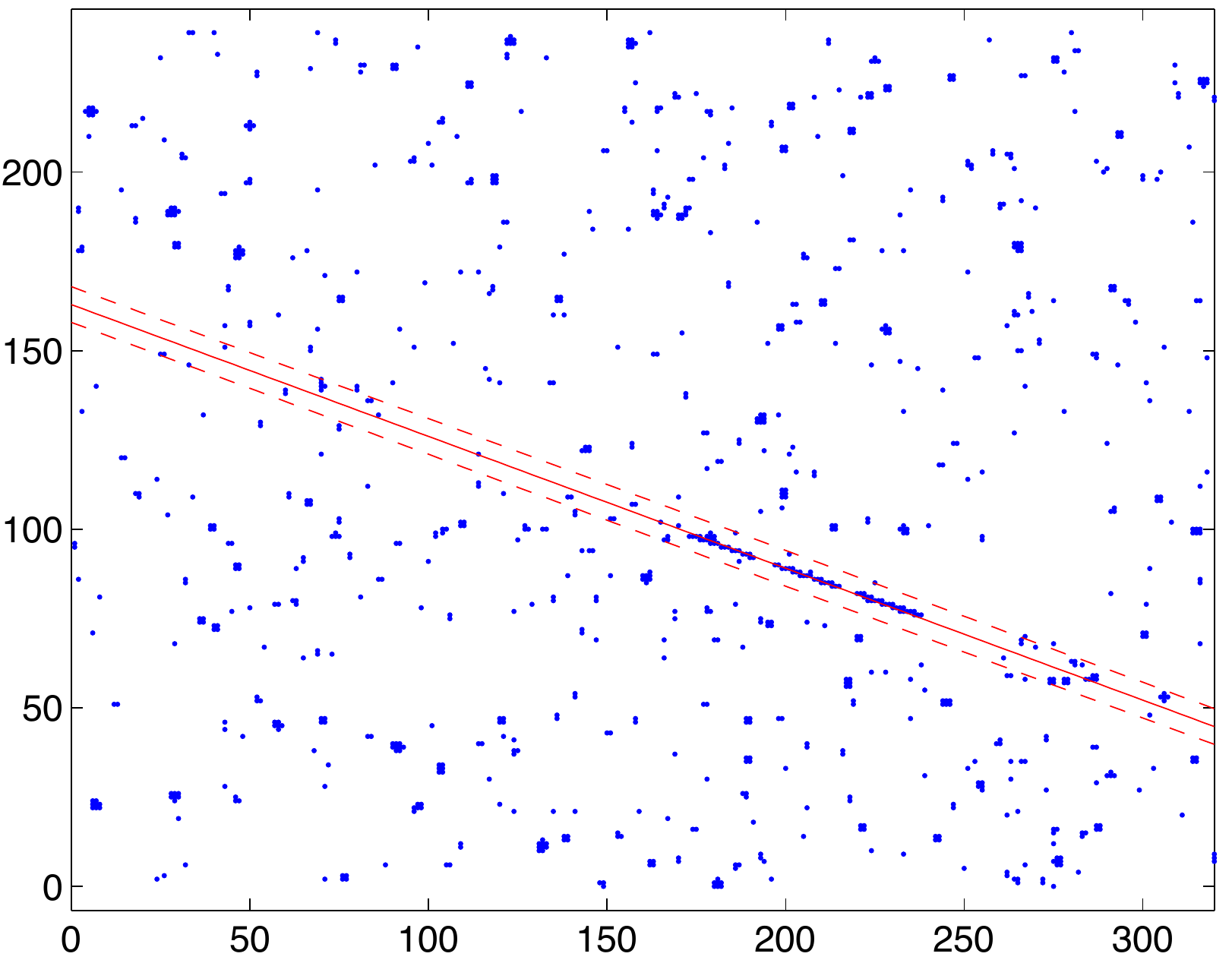}  
  \label{fig:sub-second}
}}
\caption{Illustrative Examples: Problem {\em instances} of MEC and Linear Slab}
\label{fig:fig}
(Left) Given a large number of data points, in some high dimension (here only 2 for illustration) - find the minimum enclosing ball (circle in 2-dim, so called Minimum Enclosing Circle problem). (Right) Given a FIXED ``slab width'' find the position of the slab that encloses the most points. This example is a sattelite track found amongst stars \cite{7873535}.
These are {\em not} indicative of the breadth of problems, nor of the complexity of problems. They are merely chosen to be easy to understand, concrete, and of use in explanation below.
\end{figure}

At the heart of our motivation is (robust) model fitting. The model could be that of a familiar geometric object (as in Figure \ref{fig:fig}, a circle or a line). Or it could be some other mathematical model in higher dimensions and with less familiar geometry - in the main, we are interested in computer vision type applications (see sections \ref{FM-est} \ref{motion-est} for a couple of examples).

We think of instances of the models as {\em ``structures''}. A line, a circle, a cylinder, a plane, are all familiar such structures. So in some general sense, we are concerned with finding ``structures'' in data. What makes the task difficult (compared with the obvious situation where a least squared fit of the model to the data would suffice) is that there may be more than one structure in the data, and there may be {\em ``outliers''} (data that belong no structure - or no structure of any interest). 

\begin{figure}
   \centering

  \includegraphics[width=.9\linewidth]{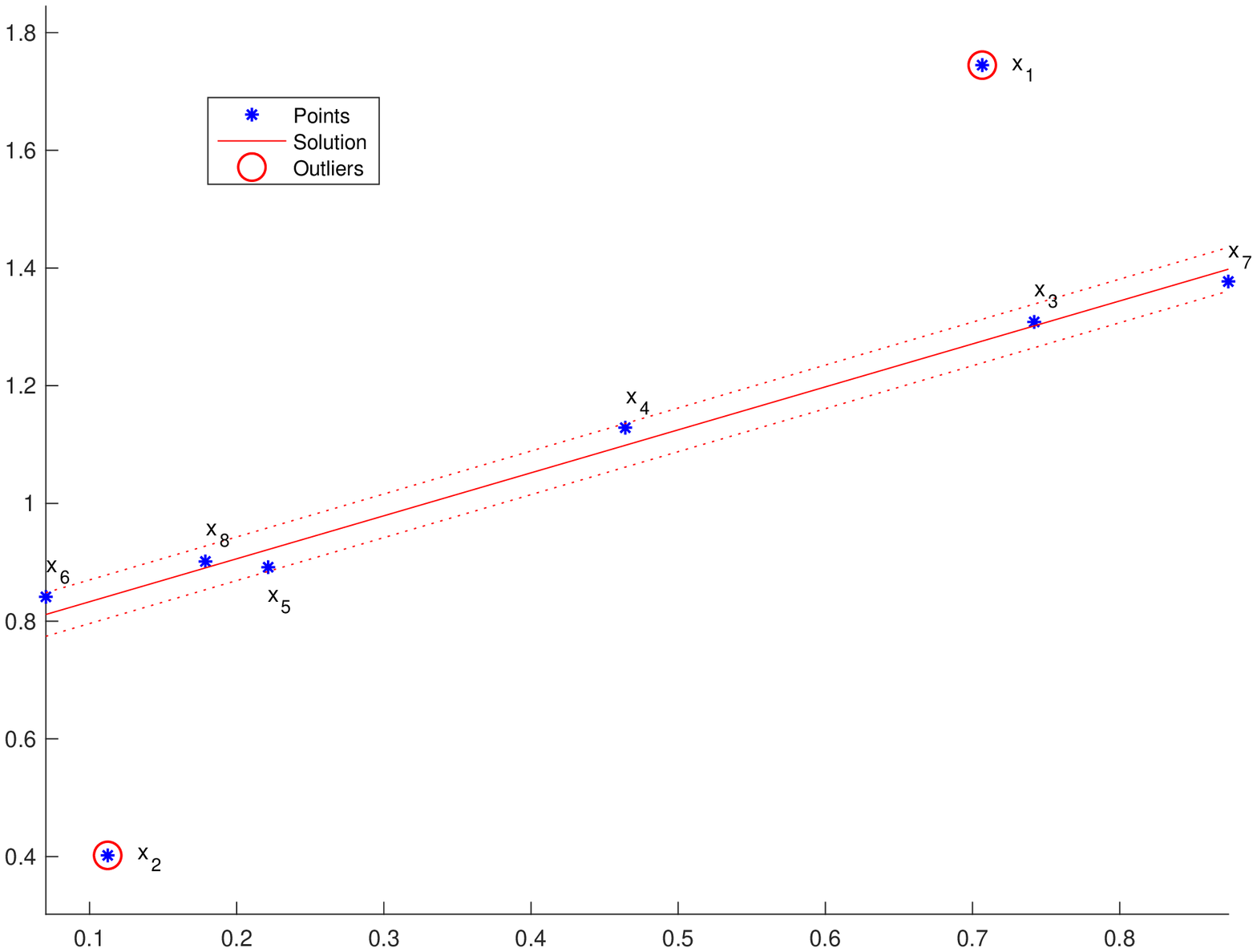}  
\caption{Line Fitting - Simple Example}
\label{linesimple:fig}
8 points in a plane and a MaxCon solution.

\end{figure}

Taking line fitting as our running example: any two points lie on a line, so for us to declare structure, we require three or more points. However, with real data, any three points are unlikely to lie {\em exactly} on a line. So we must allow some tolerance - effectively line fitting becomes ``slab'' fitting.

Figure \ref{linesimple:fig} shows 8 points in a plane. With a certain tolerance, only 6 of the points can be fitted to a line (so that is the MaxCon solution, leaving 2 outliers).
If we were given the data, and plotted the points (not knowing the line), this is the solution our visual system would suggest. One single structure and two outliers. However, one can see that there are other subsets of that data that fit to a line with the same tolerance (and have more than 2 points): for example $\{1,5,6,8\}$, $\{2,3,7\}$  and $\{1,2,4\}$ which involve what we might call (using the first interpretation) ``accidental alignment of outliers with some of the inliers''.
Of course, that interpretation (that these are only accidental) is open to challenge. MaxCon doesn't concern itself with such - it only seeks the largest possible structure and (implicitly) one is accepting that as an interpretation of the data.

Note, the aforementioned list of subsets of points that are larger in size than 2, fitting a line with tolerance, omitted any subsets completely contained in any of the other listed subsets. For example, any subset of size 3 or more taken from the 6 inliers to the line - they will also obviously fit that same line. We would consider these as redundant to the bigger structure interpretation.

We will fully analyse this example later - and we will see that the four mentioned (the MaxCon solution and the three others that are smaller but incorporate an outlier to the MaxCon solution) are the only 4 possible solutions. That is, we will show that these four, in a combinatorial sense, fully describe the MaxCon landscape for this set of data: only subsets (which includes the sets themselves) of size 3 or more, of these four sets, are feasible solutions - other than trivial solutions (of size 2 or less) or redundant solutions (subsidiary to larger structures) are feasible (fit within tolerance to a line).

Of course, we might have situations where the evidence of more than one structure (rather than the accidental alignment type interpretation) appears realistic/desirable. That is, we might expect, or readly believe, that the data contains more than one meaningful structure: we illustrate that next.

Figure \ref{realise:fig} shows seven points. Also shown are some possible linear structures (at some set tolerance). On the left of the figure, we use three linear structures to explain the data. The red encompasses 4 points, the blue and the green both 5 points.
However, There are other ``explanations'' of the data - including the example on the right of the figure that uses only two structures (indeed the same red and the green as before). By Occam's razor we would prefer the second interpretation. Note: with the given tolerance, the reader should be able to convince themselves that no fewer than two structures can be used to cover all points.

\begin{figure}
  {  \centering
\subfigure[]{
  \includegraphics[width=.45\linewidth]{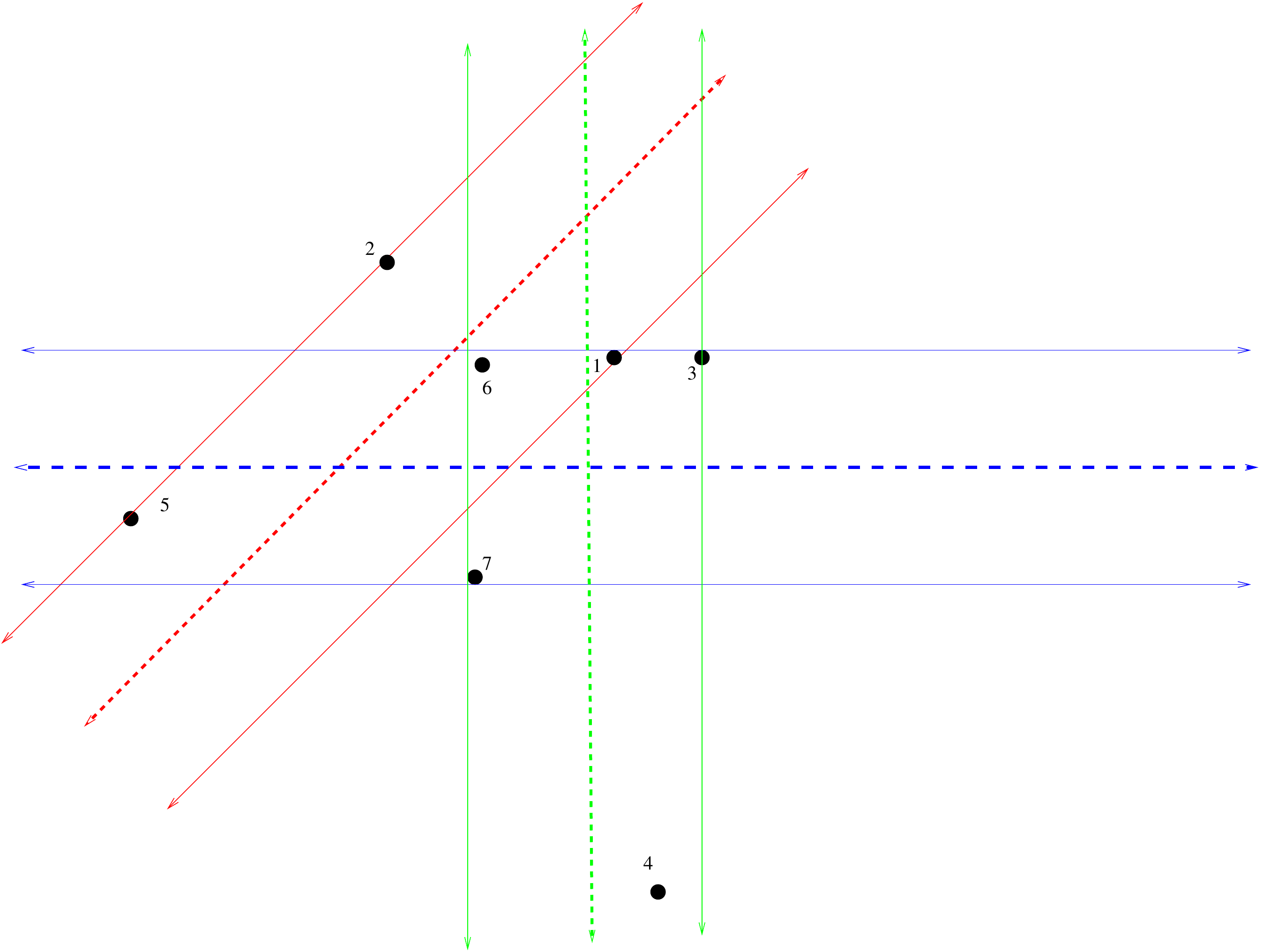}  
  \label{realisefig:sub-first}
}
\subfigure[]{
  \includegraphics[width=.3\linewidth]{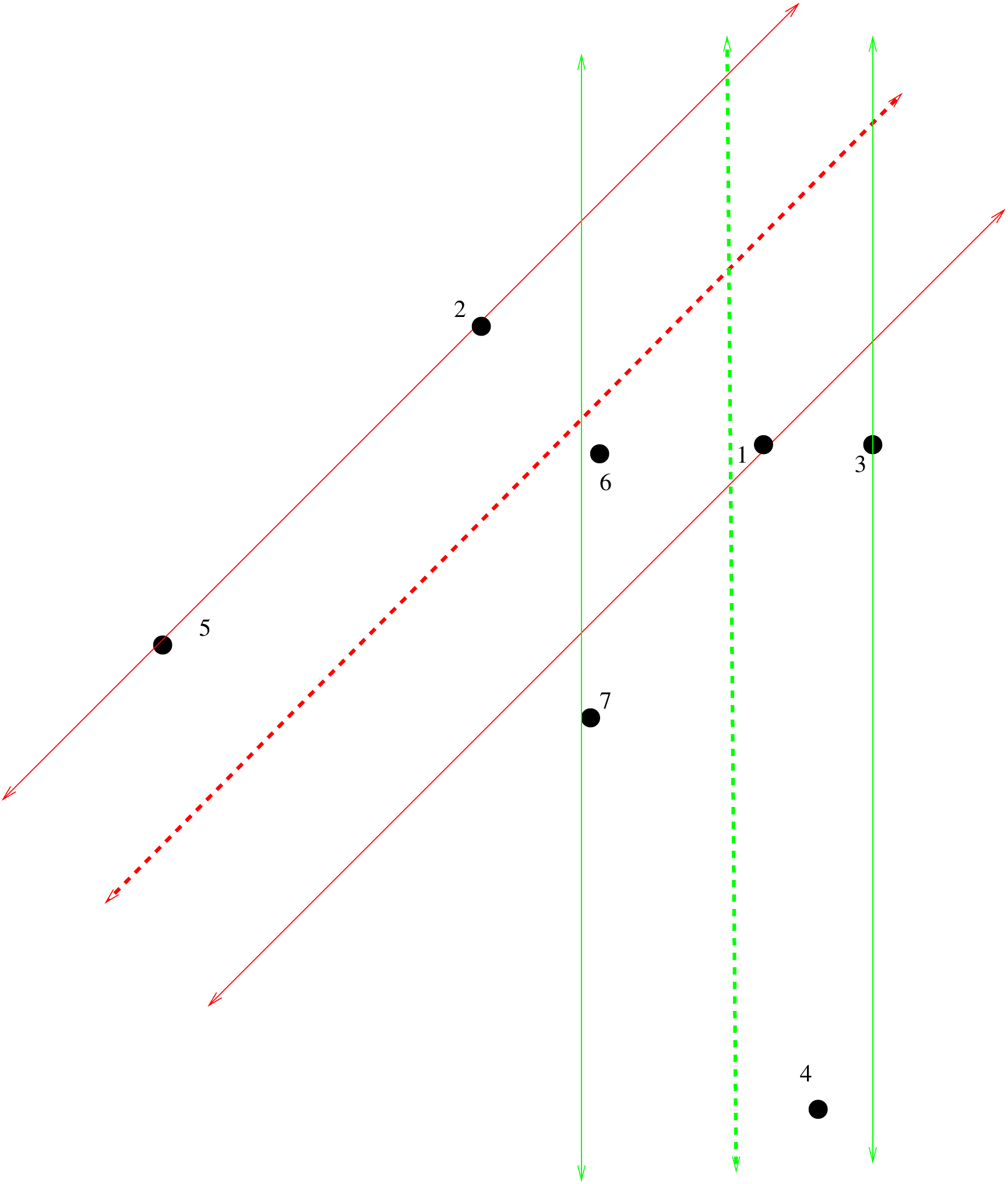}  
  \label{realisefig:sub-second}
}}
\caption{Concept of Structure In Line Fitting}
\label{realise:fig}
(Left) 7 points in a plane and interpretation of structure (line with tolerance, i.e. linear slab enclosing) using 3 lines. (Right) same points with only two lines.

\end{figure}

In the previous example we saw how one linear structure was redundant - the points it explained are already explained by the other two structures (put another way, there are no points in the redundant blue structure that are not in the intersection of that structure with another). If our purpose was to "fit the minimum number of structures that cover the data'' - a multi-structure fitting problem, then the solution depicted in Figure \ref{realisefig:sub-second} would seem to be a good solution. If our purpose was to fit only the largest structure (the one that covers the most points), then the same example shows that there are two alternative answers (each with 5 structures) and our ``reasonable multi-fitting solution'' would have discarded one of them(!). 
MaxCon is inherently a single structure fitting formulation (though it has been adapted for multi-structure recovery) and therefore would not {\em a priori} prefer either the blue (that the multi-structure reasonably discarded) over the green.

In this paper we take only the single structure fitting problem as the goal - albeit possibly in settings where multi-structure interpretations are natural. The only object though, is to recover the largest (by population - note this is correlated with but not necessarily the same as spatial extent) structure is sought.

That said, though the {\em objective} is to recover the largest structure, from a practical point of view one doesn't really care whether the solution returned has a few less data points than the (typically) hundreds or more data actually belonging to the largest structure. Put loosely, we may be willing to trade a little of optimality for substantial gains in speed/reduction in computation.

The above serves as a motivational definition of MaxCon. In effect a large part of the objective of this paper is to give a precise (and abstract) definition of the MaxCon problem. In this abstract definition real world problems (instances of data and the object - e.g., fit a line to to fit as many of the data points as possible) will be mapped to the abstract problem (a Monotone Boolean function over vertices of the Boolean Cube: vertices correspond to subsets of the data and the value of the function correspond to whether the subset can feasibly covered by a line with set tolerance). The abstract problem then becomes one of finding ``maximum upper zeros''. It should be noted that many real word problems map to the same abstract problem (it's a many to one mapping). Moreover, there are abstract problems that it seems (we are sure by looking at examples but have not yet set out to prove) have no realisation - at least for the ``bag of example-types'' we are interested in (line and circle fitting, homography fitting, fundamental matrix fitting etc.).  

\section{Outline}
The paper broadly falls into two parts: section \ref{background-section} is mostly a theoretical sketch of some of the theory we have identified as being relevant, while section \ref{prop-method-section} is mainly experimental - it can be considered as a ``proof'' of concept section where we show the theory has plausible algorithmic implications.

In more detail, section \ref{background-section} has the following elements. We first (section \ref{LP-monotone})draw attention to the link between monotone functions and LP-type problems.

We then (section \ref{MBF-section}) introduce the central character - Boolean-valued Monotone functions over the Boolean Cube. We will drop the qualification ``Boolean Valued'' in the text. In so doing we also introduce the Hasse-Diagram and relate these structures (Boolean Cube/Hasse Diagram) to characterisations of Maxinmum Concensus (MaxCon) solutions and searches over the structures just mentioned.

Next (section \ref{LP-type-subclass}) we note that the features of the MaxCon problems (and LP-Type problems in general, indeed MaxCon is just a special case) inherently dictate that we are particularly interested in a particular subclass of Boolean Monotone Functions. 

In order to make quantitative statements, and in general to reason, about the properties of MaxCon solutions, we need to define some ``ideal'' cases (section \ref{ideal-structures}). These are ``simplified'' versions of what one would intuitively characterise by terms such ``single structure'' (all the inliers are data fitting some single model, and there are no other structures, nor any ``accidental alignments of outliers'' that would be identified with small structure).

At present, we have only explored making precise statements about the relative sizes of influences.
Thus in section \ref{influence_analysis}
we take this ideal single structure and use it to derive the influences of outliers and inliers (to that structure). This enables us to prove a result that is the seed for a whole class of completely new algorithms to solve MaxCon (a class of algorithms that has at its core the estimation of influences). The key result is that the influence of outliers is greater than that of inliers. 
Further in that section we generalise to two ideal structures and then to an arbitrary ``K'' number of ideal structures in the data. 
This leads to a remarkable generalisation of the aforementioned outlier/inlier influence ordering.
In short, the inliers to larger structures have smaller influence to the inliers of smaller structures. Moreover, the influence of outliers (to all structures) has the largest influence of all. 

That section concludes with the remarkable observation that the aforementioned ordering of influences defines another boolean cube (where now the bits in the representation of a vertex define inlier to one of the ``K'' structures) {\em and another monotone function over that cube} - giving the influence of that type of data. This is a real valued monotone-decreasing function, but it is also an example of how ubiquitous monotone functions seem to be!

Since we need to {\em estimate} influences, section \ref{q-weighted} briefly discusses the notion of sampling and the related notion of ``q-weighted influences''.

Section \ref{metric-reg} returns to the ideal single structure case. It invokes a concept called ``metric regular sets'' and identifies a metric regular pairing involving the inliers to the single structure and another set of points on the cube. This concept has implications for search strategies (for MaxCon solutions) but we do not claim anything detailed or profound at this stage of research into the geometry of the search space. However, this idea, and the overall symmetries involve, do suggest that there are four basic strategies (section \ref{searches-section}) for searches that start from the top or bottom (two search strategies for each) of the Hasse-Diagram. The four are essentially made from the obvious combinations of include/exclude inlier/outlier.
Of these four, exclude outlier seem preferable because the common oracle available in MaxCon solutions not only returns feasible/infeasible but - for the infeasible case - also returns a small number of data (a basis) amongst which at least one is an outlier.

The ``theory part'' of the paper concludes (section \ref{grand_bool}) with an observation that - for the special type of Monotone Boolean Functions relevant to LP-type problems (and hence MaxCon), there is another natural Boolean Cube - which is the collection of {\em all} incarnations of MaxCon for that number of data items. Moreover, over this cube there is another monotone-decreasing integer valued function - the size of the solution of that MaxCon problem. We do not take this observation further but clearly this is the ``place/setting'' to study the properties of all MaxCon problems/solutions (as a collection).

The crucial characteristic (that some collection of sets are closed under taking subsets - in our case, feasible sets of points), is such a commonly occurring characteristic, that it is unsurprising it underlies other well known mathematical structures. 
In section \ref{matroid} we note some of these. Indeed, the notion itself goes under the name Independence Sets and is part (but only part) of the definition of a matroid. Thus, we show that MaxCon can also be seen (generally) as a problem defined on independence sets, and sometimes (but rarely) on matroids. The latter, noting the rarity, is a direct way to see why the naivie approach (greedy approach) of adding points to a set until the set becomes infeasible is almost never going to work - it will work precisely when the MaxCon problem data defines a matroid - rather than the more general independence set/BMF. While the result itself holds no surprise - no-one with any experience in MaxCon expects the greedy algorithm to work often, {\em this paper is the first paper to explicitly state when and where the greedy strategy for MaxCon will work - precisely when the associated independence set - lower part of the BMF - defines a matroid.}
Yet another (new) view of MaxCon is that it defines an abstract simplicial complex and that the MaxCon solution, itself, is the maximum sized face in this complex.

Section \ref{prop-method-section} is devoted to a first exploration of algorithms that estimate influence of data points in order to rapidly approximate the MaxCon solution. The core idea is that {\em estimates} of influence should be ordered so that (mostly) the influence of outliers to all structure should be largest and the influence of inliers to the biggest structure smallest (a result we established theoretically for the ideal cases). Of course estimates are noisy and moreover what we may consider to be outliers to all structures may have some accidental alignment (arguably a small structure): and thus the {\em estimated} influences {\em for real} data, will be a perturbation away from ideal. Nonetheless, we are able to show, using some elaborations on the basic idea, that promising results can be achieved.

\section{LP-Type Problems, Boolean Monotone Functions, MaxCon and Searches on the Boolean Cube}
\label{background-section}
LP-type problems are necessarily associated with a Monotone function (section \ref{LP-monotone}). If this function is not inherently Boolean valued, it is simple to transform the problem into a related one where a Monotone Boolean Valued function is involved - simply threshold the function of interest.

However, LP-Type problems are not necessarily formulated with reference to the Boolean Cube (though many are). 
Since our motivating problem (Maximum Consensus fitting in Computer Vision), is an LP-type problem that does naturally ``live'' on the Boolean Cube, we restrict attention to that setting in this paper.
Besides MaxCon, such a setting is already suitable for many LP-type problems, and we leave more general settings for investigation in future work.

Hence we begin with the basics of the Boolean Cube and associated Hasse-Diagram of subset inclusion,
and the concept of Monotone Boolean Functions defined over the Boolean Cube in section \ref{MBF-section}.
Of course, the Monotone Boolean Function defined over the Boolean Cube (with slight abuse of terminology we will often not distinguish between the Boolean Cube and the associated lattice or Hasse-Diagram), may represent many things, depending on the application. Again, motivated by {\em our} main application, we immediately begin talking of the function representing the feasibility or infeasiblity of some problem, but all that is really required is that the application domain property respect monotonicity (which we show holds for feasibility/infeasbility).
This immediately relates the characterisation of MaxCon (maximum sized feasible subset) as that of finding the ``highest'' zero (feasible), of a Monotone Boolean Function over the Boolean Cube.

At that point we have made clear that MaxCon is inherently connected with theory of Monotone Boolean Functions. Next we address whether we are interested in the whole class, or some clearly identified subclass. In section \ref{LP-type-subclass} we show that it is indeed a special subclass that MaxCon relates to (and for the same basic reason - related to the concept of a basis in LP-type problems - a similar statement holds more generally for any problem that can be considered as derived from an LP-type problem formulation).
The fact that our intended application relies on the properties of this special subclass, holds the promise that  for this subclass, some of the known properties of Monotone Boolean Functions (already generally more favourable than the larger class of ``just'' Boolean) may have even more favourable forms for this subclass. At the time of writing, we are not able to prove this holds, in any way directly relating to algorithms that we know of - but we are working on such issues.

We begin the detailed investigation of this sub-class in sections \ref{ideal-structures} and \ref{influence_analysis}, where we give a major contribution in showing a number of things: the influences of outlier data points are higher than those of inliers, and that the influences are in general quantised into certain values depending precisely on how the data relate to the ``shadows'' of ``upper zeros'' - essentially the structures.
This observation gives us starting ideas for a search strategy on the Boolean Cube: sections \ref{q-weighted}, \ref{metric-reg} and \ref{searches-section}, that will ultimately lead to our algorithms in section \ref{prop-method-section}.

Section \ref{grand_bool} is perhaps not so essential to those interested in our algorithmic aspects. It points out, but does not further explore, that each abstract {\em instance} of our problem (a Boolean cube of subsets of N) is the vertex in another Boolean Cube that describes all possible instances.

\subsection{Monotone Functions and LP-Type Problems}
\label{LP-monotone}
The presence of a {\em monotone} function is integral to the definition of LP-type problems as it is one of the axiomatic properties:
\begin{property}[Monotonicity]\label{prop:monotonicity}
For every sets, $P$, $Q$, $S$:  $P \subseteq Q \subseteq S$, the inequalities $f(P) \le f(Q) \le f(S)$ hold.
\end{property}


So clearly every LP-type problem is associated with a monotone function, though not necessarily one whose domain is the Boolean cube, nor whose range is also Boolean valued.
However, at least as far as the range of the function is concerned, it is a small step to turn a general Monotone function (whose range is just some totally ordered set) into a Boolean-valued Monotone function:
simply choose a threshold and now the thresholded function is Boolean and the associated LP-type problem is cast in terms of optimisation over a Monotone Boolean Function.
Indeed, this can be viewed as the way the LP-Type problem of finding the minimum width ``slab'' (region between hyperplanes) that encloses {\em all} given data points, becomes one of finding the maximum sized feasible set, feasible being defined by fitting inside a slab of fixed width (threshold) $\epsilon$.

\subsection{Monotone Boolean Function (MBF) and The Boolean Cube}
\label{MBF-section} 

\begin{figure}[ht!]
\begin{center}
\includegraphics[width=0.4\textwidth]{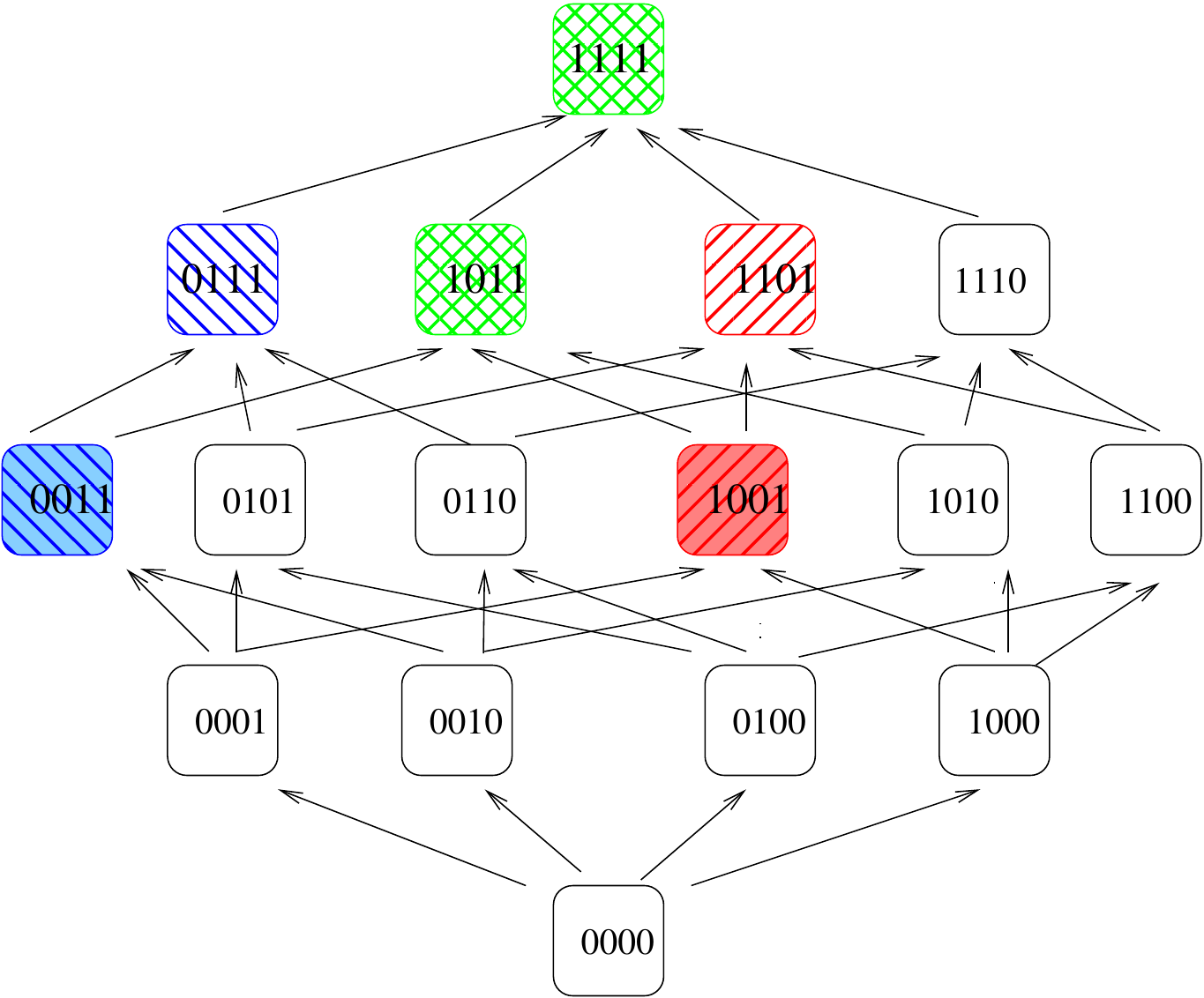}  
\end{center}

\caption{ Boolean Cube and MBF. The Boolean Cube is a generalisation of the 2D unit square (vertices (0,0), (0,1), (1,0) and (1,1)). 
The 4D Boolean Cube is drawn flattened onto 2D; and oriented so that the higher up a vertex appears, the larger is the number of 1's in the coordinates.
Interpreting the bits as coding membership in a set associated with that vertex (bit $i$ set to one if element $i$ belongs in the subset) relates the Boolean Cube to the Hasse Diagram (ordering of subsets by inclusion). In essence, the Boolean cube is this diagram ``without the arrowheads''.
A Boolean Function maps the Boolean cube to 0 or 1. We illustrate by colouring:  ``white''  nodes map to 0, coloured node (see the text below for explanation of the different colours) map to 1.
This example is a {\em Monotone} Boolean Function moving up the picture, the value of the function only ever increases, it {\em never} transitions in the opposite direction.}
\label{MBFfig}
\end{figure}

A simple example (figure \ref{MBFfig}) illustrates the concepts involved. 
For $N$ data items, the subsets can be represented by bit-strings (1 for inclusion, 0 for exclusion of data item $i$ in $i$'th position in the bit-string). Thus, each subset can be represented by a vertex of the N-dimensional Boolean Cube.
Boolean Functions (over the Boolean Cube) are just mappings to binary labels (0 and 1) that we can simply represent in figure \ref{MBFfig} with ``coloured'' (for output 1) and ``uncoloured'' (for output 0) vertices.

Associated to the concept of susbset inclusion is the Hasse Diagram: where arrows show which subsets are (minimally) included in which others. Inclusion is transitive (if subset $A$ is included in subset $B$ and $B$ is included in subset $C$ then $A$ is included in $C$) and so we say ``minimally included'' because we do not include arrows implied by transitivity. Thus the Hasse Diagram in  figure \ref{MBFfig} is the directed version of the Boolean Cube. 

Since MaxCon is a search for the maximum sized subset of the data that can be fit by the model (with given tolerance), MaxCon is inherently a search on the Boolean Cube/Hasse Diagram. Indeed, though not expressed that way in the original works, the $A^*$ ``tree searches'' of \cite{tj_2015} and \cite{Cai_2019} are in fact searches on this cube (where nodes reached by different paths become {\em repeated nodes} in the tree constructed by starting from the top of the Hasse Diagram). In this context, it is interesting to note that RANSAC (and it's many derivatives) search amongst subsets towards the bottom of the Hasse Diagram (minimal sized - $p$-sized - subsets, $p$ the minimum required for the unique determination of the model sought); and use these to ``index'' up to higher subsets by greedily including all data points the fit within tolerance of the model found on the $p$-subset.

Yet the Boolean Function that encodes whether a subset of Boolean Cube is feasible (for a given model and tolerance) is a {\em special} Boolean Function. It is a {\em Monotone} Boolean Function. That is, along paths going up the Hasse Diagram, the function can only stay constant or increase (never decrease).
This is easy to see. If a subset is feasible (can be fit by a model with given tolerance), then adding data to that subset (travelling up the Hasse Diagram) can only move towards infeasibility and - once infeasible - adding more points will not change the subset back to feasible. Likewise, deleting points from a subset can only move towards feasible (and once feasible will stay feasible under further deletion). 

One of the many special properties of Monotone Boolean Functions are that they are totally determined by their set of minimum sized subsets where the function is `1' valued. Alternatively, they are totally defined by their maximum sized subsets where the function is `0' valued. Here, by maximum and minimum sized we mean a version of ``local'' maxima/minima. For a maxima, no node/vertex immediately above, in the Hasse Diagram has a value 0; likewise, for a minima, no node immediately below has a value 1.

In figure \ref{MBFfig} the lower blue and red coloured nodes are the local minima. The whole MBF can be tabulated by simply labelling each node above (``in the upward shadow'') with 1, and by labelling every other node with zero. (In figure \ref{MBFfig} nodes coloured green are in the upward shadow of both minima). Conversely, one can identify the local maxima (there are three such nodes (1110,0110,0101) ) and label every node in their downward shadows with 0 and every other node with 1.

As an abstract representation of a MaxCon problem figure \ref{MBFfig} shows that 1110 is that MaxCon solution (as it is the highest feasible subset in the Hasse Diagram). 

Researchers have studied Monotone Boolean Functions for reasons/applications that have little to do with our application domain: (game theory \cite{Halman2007}, circuit design, complexity theory, social choice models, cryptography
- to name but a few \cite{o'donnell_2014}).

\subsection{Subclass of  Monotone Boolean Functions}
\label{LP-type-subclass}
\begin{figure}[t!]
  {  \centering
\subfigure[]{
    \includegraphics[scale=.27]{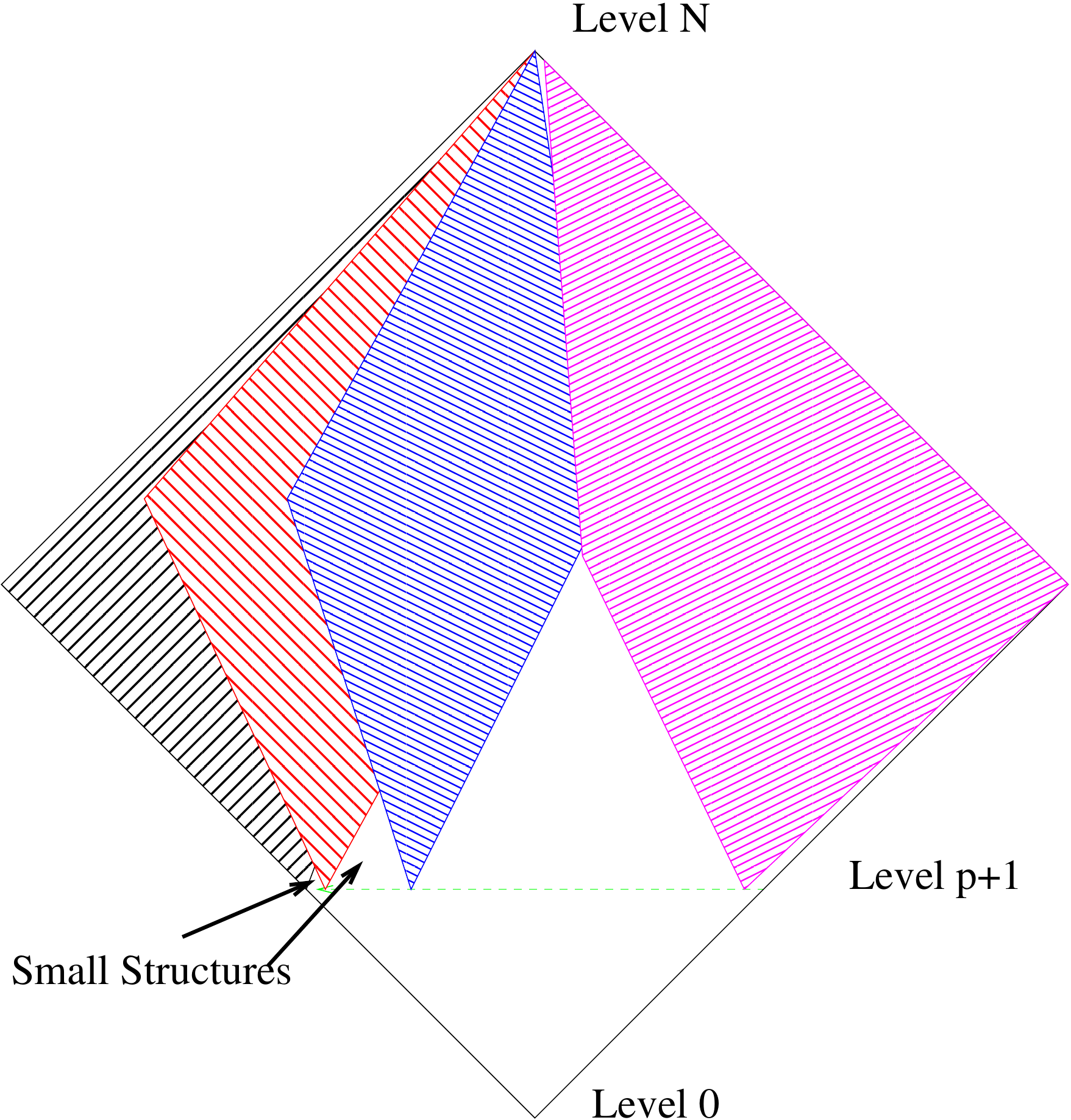}
    \label{shadow:sub-first}
}
\subfigure[]{
    \includegraphics[scale=.27]{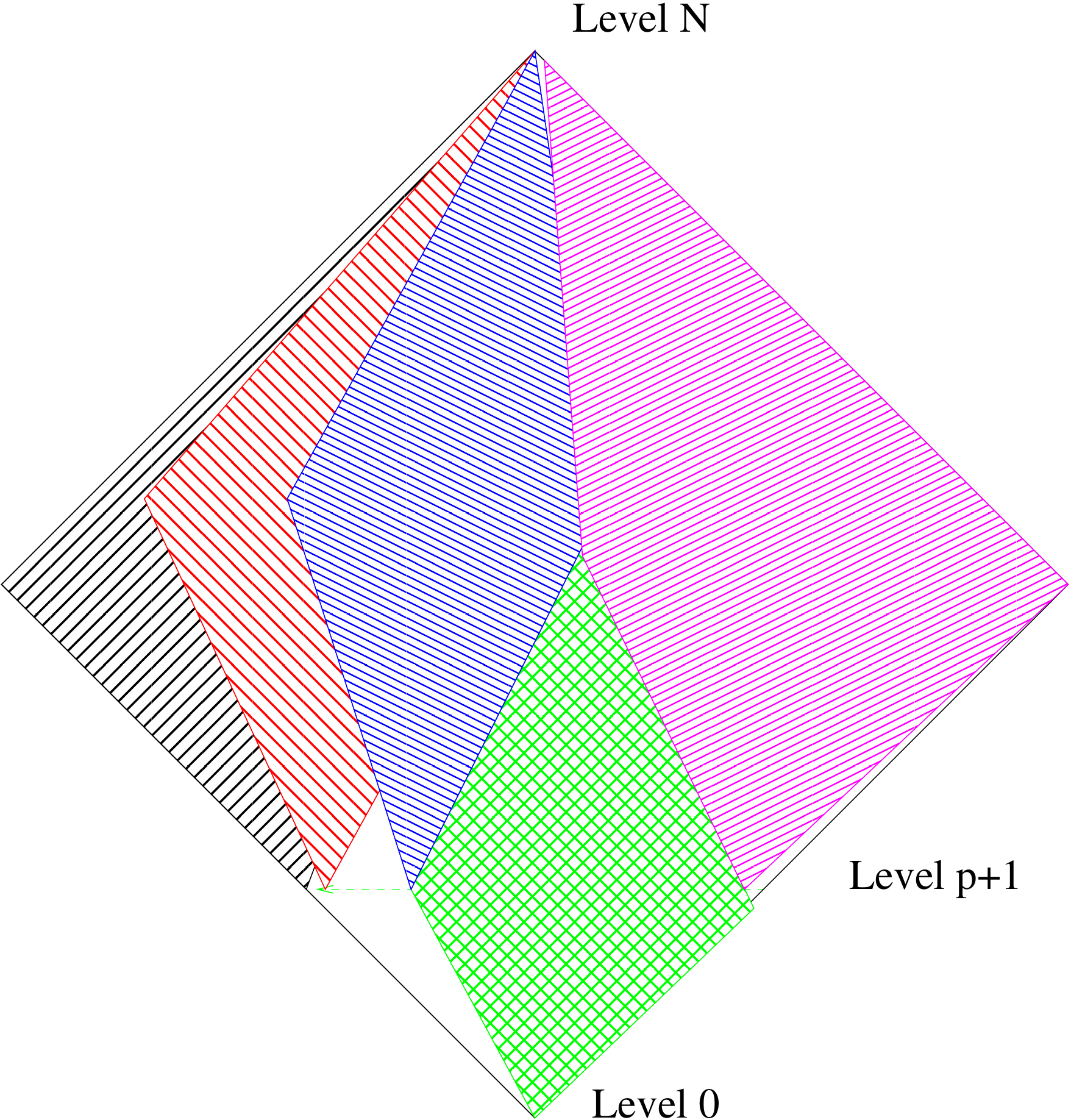} 
    \label{shadow:sub-second}
}}
\caption{Shadows of Infeasible Sets and MaxCon Solution.
(Left) Infeasible sets are defined by minimum infeasible sets {\em all of cardinality $p+1$}. All sets of cardinality $p$ and less are feasible because of the geometry of the underlying LP-type problem. Everything in the sub-cubes extending upwards from the minimum infeasible sets is infeasible (because they contain an infeasible subset) - so the feasible sets are those in the ``white region'' defined by the ``gaps'' between the infeasible regions.
(Right) There are maximum  cardinality feasible sets and these define other sub-cubes - feasible sub-cubes corresponding to ``structures'' or solutions (that can't be expended further by adding more points) - here we shade in green the one of maximum size (a MaxCon solution and all subsets of that solution).}
\end{figure}
Given that MaxCon is a special case of  LP-type formulations \cite{7873535}, we are particularly interested in a subclass of Monotone Boolean Functions: stemming from the fact that LP-type problems (and hence MaxCon) have an important property - that of ``Basis'' \cite{7873535}.
A basis, for a set is a small subset such that the value of the function on that subset is equal to the value of the function on the whole set.
Intuitively, a basis in the MaxCon setting is a subset of points ``that prevents the enclosing structure from shrinking further''.
Technically, for a given problem, the basis may be of varied cardinality up to some fixed cardinality - associated with the concept of ``combinatorial dimension'' for that problem.
This observation is the key to recognising that we are, {\em at least for these LP-type problems}, dealing with a very special case of Monotone Boolean Functions.
Those where the size of the minimum infeasible subsets (the presence of which, in a set, makes the whole set infeasible) has a known upper bound/cardinality (and usually small), see below and Fig. \ref{shadow:sub-first}.
Even more so, for many of our problems {\em all bases for that problem have the same cardinality, that is set by the combinatorial dimension for that type of problem} \cite{7873535}.
For example, in line fitting, the minimum sized subsets are all of size 3 (any two points can fit any line perfectly). This means that all of the minimum sized infeasible subsets (which collectively define the whole problem) sit at exactly the same height in the Hasse Diagram.
The above statement needs to be relaxed slightly, for cases where the bases can have varying size (up to some fixed maximum cardinality) - all minimum sized infeasible sets are at or below that fixed (maximum) cardinality.

We close this section by acknowledging that there is another simple characterisation of this class of functions. It is well known (and indeed inherent in our ``shadows'' description), that Monotone Boolean Functions can be described by Disjunctive Normal Form expressions (``OR's of AND's), formed from the minimum sized sets where the function is ``1''. (In our case, minimum infeasible sets). The function takes on a 1 exactly when it is in the shadow of one or more of these sets and so it is the ``OR'' of the requirements for being in the shadow of any of the sets. The latter is just the logical AND's of the bits in the bit positions set to one in the minimum infeasible sets. Using this characterisation, we would say that the class of functions we are interested in are those with fixed-width terms (fixed cardinality of minimum infeasible sets). Again, in cases of varying sized bases, we modify this to ``of fixed maximum width'' $p+1$ - where this is much smaller than the maximum possible width on $N$ variables (N).

\subsection{Ideal Single and Multiple Structures}
\label{ideal-structures}
It is useful, theoretically, to appeal to the ``ideal'' single structure case.
By this we mean ``no $p+1$-tuple of points is feasible {\em unless that tuple is completely contained in the one single structure''}. Of course, this rules out any bigger sized sets being feasible unless they are contained completely in that single structure. Pictorially, in Figure \ref{shadow:sub-first} we do not have any the ``secondary'' gaps between the upward shadows of infeasible subsets.

In practice, the ideal is a very unlikely situation. It implies, for example, that one never has feasible subsets that include even just one outlier to the structure. It is easy to see in the 2-dimensional line fitting case that this means that the data points have to be very well spread along the structure (so that no triple formed from 2 data points and 1 outlier is feasible (thus the spread of the data points in the to orthogonal directions, towards the outlier and along the structure, needs to be at least $\epsilon$).

\subsection{Outliers Have Larger Influence - Inliers Smaller Influence}
\label{influence_analysis}

\begin{figure}
\includegraphics[width=0.8\textwidth]{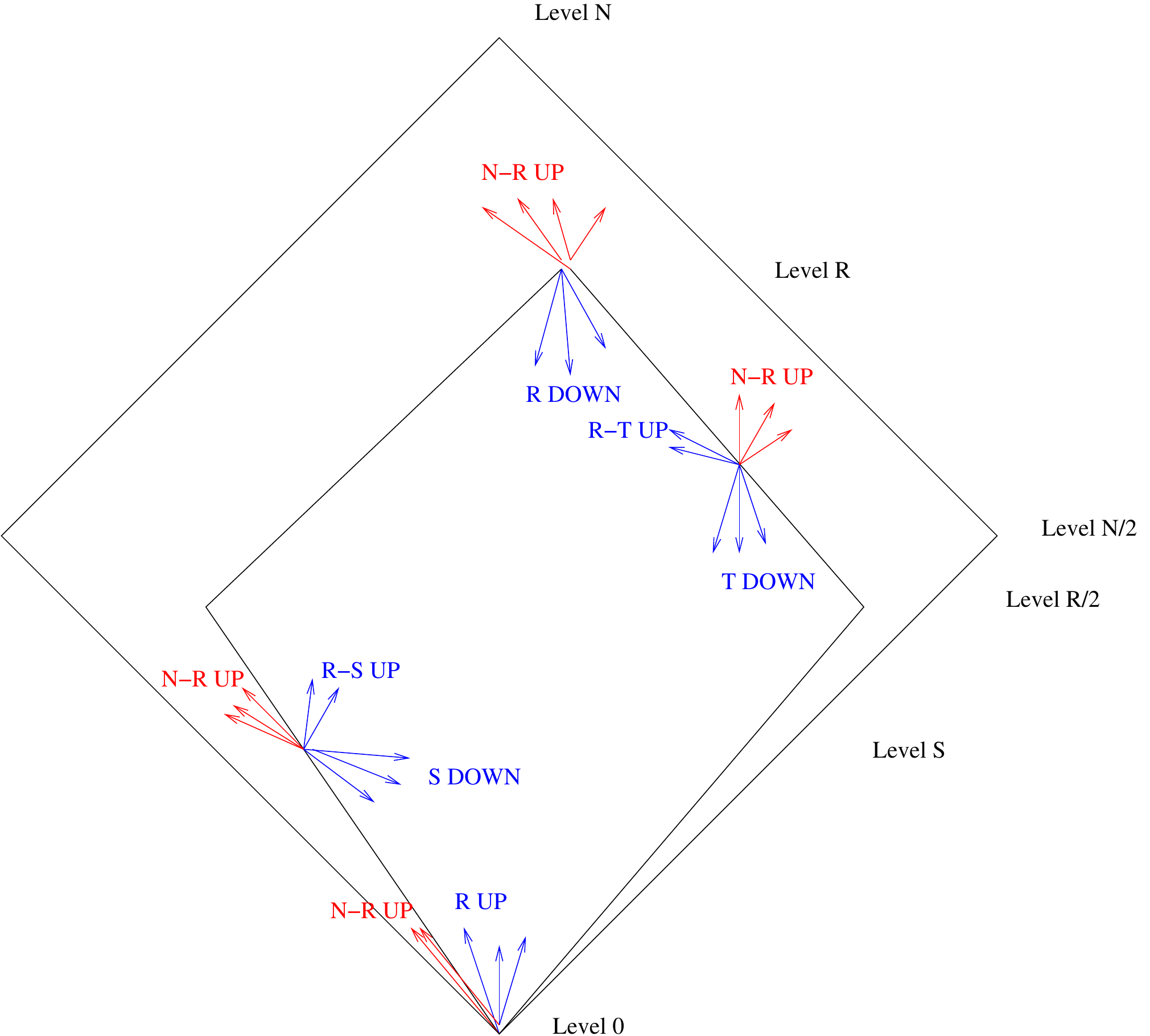}
\caption{Feasible Sub-cube}
\label{singleSub}
\end{figure}

\subsubsection{Influence}
\label{influence-section}
Thus far we have sketched new insights about the relation of MaxCon to the theory of Monotone Boolean Functions: and we have identified that we are interested in a special sub-class of such functions. But we have not indicated how the theory may be used for practical algorithms. The theory is vast and there are probably many results that suggest how to deal with MaxCon in innovative ways. Here, in this paper, we focus on only one aspect - we take the first steps towards leveraging the theory of Influence in Boolean Functions (which takes on a special form in the case of Monotone Boolean Functions).

As explained by Kalai's plenary lecture at the
2016 European Congress of Mathematics \cite{kalai:2016}, the Influence Function (I(f)) \cite{o'donnell_2014} is related to the idea of expansion in graphs (and notions akin to surface area, isoperimetric inequalities). For our purposes, the Boolean cube is divided into two sets by our Monotone Boolean  feasibility function: feasible and infeasible) and I(f) (and its components) relate to the number of edges between these sets.
Now our insight around this is that firstly, outliers to data structures will have more influence (this is {\em very} different to the notion of influence in formal robust statistics literature, and though there is a similarity in ``spirit'', the two notions of influence have vastly different character.
Here, influence refers to driving from feasible to infeasible by inclusion of an outlier in a subset; in standard robust theory it is the notion of ``dragging away an estimate of the fit'' by outliers influencing the fit more than inliers). 
Another insight, is that, in the {\em ideal} case (no secondary structures/accidental alignments etc.) all outliers will have the same (and high) influence, and all inliers will have the same (and low) influence.
Thus, in the ideal case, a very simple process of estimation of the influences, followed by thresholding, would immediately lead to the sought after MaxCon solution. Of course, the non-ideal case (which applies to all realistic data sets) will be a perturbation away from this, where the influences of inlier and outlier come closer together, and the uniformity of inlier influences, and the uniformity of outlier influences will be lost.
Nonetheless, continuity of behaviour arguments (which ought to hold in the absence of phase transitions) suggest the departure from ideal will be only partial for many data sets;
and that relatively standard ways for attacking such scenarios, should still be viable. We empirically verify that this is often true in our experiments.


\subsubsection{Influence Function in the Ideal Case}
For a Boolean Function, the total Influence is the fraction of edges in the Boolean Cube where the function changes value between the ends of that edge (\cite{o'donnell_2014} (section 2.2)). 
The total influence is contributed to by edges that correspond to flipping one particular bit (include or exclude a particular data point). Thus one can 
break the total influence up into $N$ components, where $N$ is the number of data items, to obtain the influence of each data item.
In this section, we show that for the {\em ideal} single structure case,
is such that the influences of outliers (to the single structure) is generally larger than the influence of the inlier.

\paragraph{The ``ideal single structure case'' - Individual Influences}
\label{totalsubsection}
We first formally define ideal (in a way that captures the intuitive notion of only one structure - points are either inlier or outliers to that single structure, and no other structure exists other than the trivial ($p$ or less points).

Let $L_k:=\{x\in\{0,1\}^N | \|x\|_1=k\}$ be the level $k$ in the cube $\{0,1\}^N$, $L_{\leqslant k}:=\{x\in\{0,1\}^N | \|x\|_1\leqslant k\}$ the levels below level $k+1$.

\begin{definition}
Given a monotone Boolean function $f:\{0,1\}^N\rightarrow \{0,1\}$, for $x^k\in L_k$ $(p<k\leqslant N)$, $f$ is called {\em ideal} with respect to $x^k$ if
\begin{align}
f(x)=\begin{cases}
0,~~~~&\forall~~ x\in B_{x^k}\cup L_{\leqslant p}\\
1,~~~~&\hbox{others},
\end{cases}
\end{align}
where $B_{x^k}=\{x\in\{0,1\}^N|\Delta(x,x^k)=i,~~x\in L_{k-i},~~0\leqslant i\leqslant k-p-1\}$ is the Boolean {\em sub-cube} determined by $x^k$, $\Delta(x,x^k)=\#\{i:x_i\neq x^k_i\}$ is the {\em Hamming distance} on the Boolean cube.
\end{definition}
In words, $B_{x^k}$ is the downward shadow of $x^k$.

\begin{theorem}\label{idealthm}
If a monotone Boolean function $f:\{0,1\}^N\rightarrow \{0,1\}$ is ideal with respect to $x^k\in L_k$, then
\begin{align}
2^N\hat{f}(i)&=C_p^{N-1}-C_p^{k-1},\\
2^N\hat{f}(j)&=C_p^{N-1}+\sum_{l=p+1}^k C_l^k,
\end{align}
where $i$ is an inlier and $j$ is an outlier.
\end{theorem}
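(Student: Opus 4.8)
The plan is to turn the two identities into edge-counting statements on the Boolean cube and then evaluate the counts by distinguishing whether the flipped coordinate lies in $\mathrm{supp}(x^k)$. First I would unwind the definition of ``ideal'': since $B_{x^k}$ is the downward shadow of $x^k$ truncated to levels $p+1,\dots,k$, combining it with $L_{\leqslant p}$ shows that $f(x)=0$ precisely when $\|x\|_1\leqslant p$ or $x\leqslant x^k$ (coordinatewise), and $f(x)=1$ otherwise (one also checks directly that this $f$ is monotone). Next I would record that, unwinding the definition of the degree-one Fourier--Walsh coefficient, $2^N\hat f(i)=\sum_{y:\,y_i=0}\bigl(f(y\cup\{i\})-f(y)\bigr)$; by monotonicity each summand is $0$ or $1$, so $2^N\hat f(i)$ is exactly the number of direction-$i$ edges of the cube along which $f$ jumps from $0$ to $1$. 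The whole problem then reduces to counting those edges.

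For an inlier $i$ (i.e. $x^k_i=1$): if $y_i=0$ then $y\cup\{i\}\leqslant x^k\iff y\leqslant x^k$, so adjoining an inlier never leaves the shadow of $x^k$; hence $f$ jumps on the edge $(y,y\cup\{i\})$ iff $f(y)=0$, $y\nleqslant x^k$ and $\|y\|_1+1>p$, which forces $\|y\|_1=p$ and $y\nleqslant x^k$. There are $C_p^{N-1}$ weight-$p$ sets avoiding coordinate $i$, of which $C_p^{k-1}$ lie below $x^k$ (those must be supported on $\mathrm{supp}(x^k)\setminus\{i\}$), giving $2^N\hat f(i)=C_p^{N-1}-C_p^{k-1}$. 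For an outlier $j$ (i.e. $x^k_j=0$): if $y_j=0$ then $y\cup\{j\}\nleqslant x^k$ automatically, so $f(y\cup\{j\})=1\iff\|y\|_1\geqslant p$, and thus $f$ jumps iff $f(y)=0$ and $\|y\|_1\geqslant p$. Such $y$ are of two disjoint kinds: weight-$p$ sets avoiding $j$ ($C_p^{N-1}$ of them, all feasible), and sets $y\leqslant x^k$ of weight $p+1,\dots,k$ ($\sum_{l=p+1}^k C_l^k$ of them, the condition $y\leqslant x^k$ already forcing $y_j=0$); summing gives $2^N\hat f(j)=C_p^{N-1}+\sum_{l=p+1}^k C_l^k$.

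The arithmetic is elementary; the one point that needs care is the overlap of the two pieces $L_{\leqslant p}$ and $B_{x^k}$ that define $f^{-1}(0)$, together with the resulting asymmetry between the cases: adjoining an inlier to a set preserves membership in the shadow of $x^k$, whereas adjoining an outlier destroys it, which is precisely why the outlier count picks up the extra term $\sum_{l=p+1}^k C_l^k$ coming from within that shadow (and hence why outliers have larger influence). As an independent check one can argue purely Fourier-analytically: writing $f=\mathbf 1[\|x\|_1\geqslant p+1]-\mathbf 1[x\leqslant x^k,\,\|x\|_1\geqslant p+1]$, the first indicator is symmetric with level-one coefficient $2^{-N}C_p^{N-1}$, and a one-line conditional-probability computation shows the second has level-one coefficient $2^{-N}C_p^{k-1}$ at an inlier and $-2^{-N}\sum_{l=p+1}^k C_l^k$ at an outlier; taking the difference reproduces both identities. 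The main obstacle is nothing more than performing these two counts (equivalently, the two indicator computations) without error.
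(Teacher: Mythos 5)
Your proof is correct and follows essentially the same route as the paper's: both reduce $2^N\hat f(\cdot)$ to counting boundary edges of the ideal function and split them into the level-$p$ edges leaving $L_{\leqslant p}$ and the edges leaving the shadow $B_{x^k}$ (which, as you and the paper both observe, occur only in outlier directions). The only difference is bookkeeping --- you count edges in a fixed direction $i$ or $j$ directly, whereas the paper counts all boundary edges level by level and then divides by $k$ or $N-k$ using symmetry, which is why its version needs the Vandermonde-type binomial simplifications that your count avoids.
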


\begin{proof}
At level $l$ $(p+1\leqslant l\leqslant k)$, by the definition of the Boolean sub-cube $B_{x^k}$, boundary edges (pointing from feasible area to infeasible area) only come from flipping outliers (changing from $0$ to $1$), which have $C_l^k(N-k)$ edges.

At level $p$, boundary edges are composed by three parts: $(1)$ edges pointing from $B_{x^k}\cap L_p$ to $L_{p+1}\setminus B_{x^k}$ by flipping outliers, $(2)$ edges pointing from $L_p\setminus B_{x^k}$ to $L_{p+1}\setminus B_{x^k}$ by flipping inliers, and $(3)$ edges pointing from $L_p\setminus B_{x^k}$ to $L_{p+1}\setminus B_{x^k}$ by flipping outliers.The first part has $C_p^k(N-k)$ edges. Since
\begin{align*}
2\leqslant \Delta(x,y)\leqslant \min\{2p,2(N-k)\}
\end{align*}
for any $x\in L_p\setminus B_{x^k}$ and $y\in L_p\cap B_{x^k}$, then the second part has $\sum_{l=\max\{0,p+k-N\}}^{p-1}C_l^k C_{p-l}^{N-k}(k-l)$ edges and the third part has $\sum_{l=\max\{0,p+k-N\}}^{p-1}C_l^k C_{p-l}^{N-k} (N-k-(p-l))$ edges. Since all inliers (or outliers) have the same influences, then we have
\begin{align*}
2^N\hat{f}(i)&=\sum_{l=\max\{0,p+k-N\}}^{p-1}C_l^k C_{p-l}^{N-k} (1-\frac{l}{k})\\
&=\sum_{l=\max\{0,p+k-N\}}^{p-1}(C_l^k C_{p-l}^{N-k} -C_{l-1}^{k-1} C_{p-l}^{N-k})\\
&=C_p^N-C_p^k-(C_{p-1}^{N-1}-C_{p-1}^{k-1})\\
&=C_p^{N-1}-C_p^{k-1}
\end{align*}
and
\begin{align*}
2^N\hat{f}(j)&=\sum_{l=p}^k C_l^k+\sum_{l=\max\{0,p+k-N\}}^{p-1}C_l^k C_{p-l}^{N-k} (1-\frac{p-l}{N-k})\\
&=\sum_{l=p}^k C_l^k+\sum_{l=\max\{0,p+k-N\}}^{p-1}(C_l^k C_{p-l}^{N-k}-C_l^k C_{p-l-1}^{N-k-1})\\
&=\sum_{l=p}^k C_l^k+C_p^N-C_p^k-C_{p-1}^{N-1}\\
&=C_p^{N-1}+\sum_{l=p+1}^k C_l^k.
\end{align*}
\end{proof}

\begin{corollary}
\begin{align}
2^N(\hat{f}(j)-\hat{f}(i))=\sum_{l=p+1}^k C_l^k+C_p^{k-1}>0
\end{align}
\end{corollary}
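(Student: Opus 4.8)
The plan is to obtain the Corollary as an immediate consequence of Theorem~\ref{idealthm} by subtraction, and then to read off positivity from the signs of the terms. First I would take the two closed forms established in the theorem --- $2^N\hat{f}(i)=C_p^{N-1}-C_p^{k-1}$ for an inlier $i$ and $2^N\hat{f}(j)=C_p^{N-1}+\sum_{l=p+1}^k C_l^k$ for an outlier $j$ --- and subtract the first from the second, so that the common term $C_p^{N-1}$ cancels:
\begin{align}
2^N(\hat{f}(j)-\hat{f}(i))
&=\Bigl(C_p^{N-1}+\sum_{l=p+1}^k C_l^k\Bigr)-\bigl(C_p^{N-1}-C_p^{k-1}\bigr)\\
&=\sum_{l=p+1}^k C_l^k+C_p^{k-1}.
\end{align}
This is precisely the claimed identity, so nothing beyond careful bookkeeping is needed for the equality.

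For the strict inequality I would argue from the signs of the summands. Each $C_l^k$ with $p+1\leqslant l\leqslant k$ is a binomial coefficient and hence non-negative, so $\sum_{l=p+1}^k C_l^k\geqslant 0$. The remaining term $C_p^{k-1}$ is strictly positive: the standing hypothesis $p<k$ of Theorem~\ref{idealthm} gives $0\leqslant p\leqslant k-1$, so $C_p^{k-1}\geqslant 1$. A non-negative quantity plus a positive one is positive, giving $2^N(\hat{f}(j)-\hat{f}(i))>0$; equivalently, in the ideal single-structure model the influence of an outlier strictly exceeds that of an inlier, which is exactly the ordering announced in Section~\ref{influence_analysis}.

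There is no real obstacle here --- all the work was done in Theorem~\ref{idealthm}, and the Corollary is a one-line consequence. The only point worth a moment's attention is the edge cases: one should confirm that under the assumptions ($p<k\leqslant N$, $x^k\in L_k$) the index range $\{p+1,\dots,k\}$ is meaningful and $C_p^{k-1}$ lies in the usual combinatorial range. Since $p<k$ forces $p+1\leqslant k$ the sum is over a non-empty set (and even were it empty, positivity would still follow from the $C_p^{k-1}$ term alone), and $p\leqslant k-1$ keeps $C_p^{k-1}$ well defined and positive. Hence the statement holds with no further hypotheses than those already in force.
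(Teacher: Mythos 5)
Your proposal is correct and matches the paper's (implicit) derivation: the corollary is stated as an immediate consequence of Theorem~\ref{idealthm}, obtained exactly by subtracting the inlier formula from the outlier formula so that $C_p^{N-1}$ cancels, with positivity following from $p<k$ making $C_p^{k-1}\geqslant 1$. Your extra attention to the edge cases is sound but not needed beyond what the theorem's hypotheses already guarantee.
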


\begin{example}
\label{single-example1}
We chose $N=7$, $p=2$, $k=5$ and $x^k=1010111$. For MaxCon, this would correspond to 5 inliers (points 1,3,5,6,7) and two outliers. By direct calculation with Matlab\footnote{Codes are available after the publication of this paper.}, the scaled influences (multiply by $2^7$) are $[9~~ 31~~ 9~~ 31~~ 9~~ 9~~ 9]$. By Theorem \ref{idealthm}, we have
\begin{align*}
    &2^7\hat{f}(i)=C_2^6-C_2^4=9,\\
    &2^7\hat{f}(j)=C_2^6+\sum_{l=3}^5C_l^5=31,
\end{align*}
where $i$ is an inlier and $j$ is an outlier.
\end{example}

\paragraph{Ideal Two Structure Case}
Now we consider the ideal $2$-structure, namely, there are two upper zeros only and the Boolean sub-cubes determined by these two upper zeros disjoint above level $p$.

\begin{definition}\label{defideal2}
Let $f:\{0,1\}^N\rightarrow \{0,1\}$ be a monotone Boolean function, $x^{k_1}$ and $x^{k_2}$ are two upper zeros, where $x^{k_i}\in L_{k_i}$, $i=1,2$, $p<k_1\leqslant k_2<N$, then $f$ is called {\em ideal} with respect to $x^{k_1}$ and $x^{k_2}$, if
\begin{flalign*}
&(1)~~ \Delta(\mathcal{B}_{x^{k_1}}\setminus L_{\leqslant p},\mathcal{B}_{x^{k_2}}\setminus L_{\leqslant p})>0,~~~\hbox{and}\\
&(2)~~ f(x)=\begin{cases}
0,~~~~&\forall~~ x\in \mathcal{B}_{x^{k_1}}\cup \mathcal{B}_{x^{k_2}}\cup L_{\leqslant p},\\
1,~~~~&\hbox{others},
\end{cases}&&
\end{flalign*}
where $\Delta$ is the Hamming distance and $\mathcal{B}_{x^{k_i}}$ is the Boolean sub-cube determined by $x^{k_i}$, $i=1,2$.
\end{definition}

Let $\mathcal{S}_{x^{k_i}}^j:=\{l\in [n] | x^{k_i}_l=j\}$, $i=1,2$, $j=0,1$, we can define four types of sets
\begin{align*}
&\mathcal{S}_{11}:=S_{x^{k_1}}^1\cap S_{x^{k_2}}^1,~~~\mathcal{S}_{10}:=S_{x^{k_1}}^1\cap S_{x^{k_2}}^0,\\
&\mathcal{S}_{01}:=S_{x^{k_1}}^0\cap S_{x^{k_2}}^1,~~~\mathcal{S}_{00}:=S_{x^{k_1}}^0\cap S_{x^{k_2}}^0.
\end{align*}

In terms of consensus maximization language, $\mathcal{S}_{11}$ is the index set of points that are both inliers with respect to $x^{k_1}$ and $x^{k_2}$, $\mathcal{S}_{10}$ is the index set of points that are outliers with respect to $x^{k_1}$ while inliers with respect to $x^{k_2}$, $\mathcal{S}_{01}$ is the index set of points that are inliers with respect to $x^{k_1}$ while outliers with respect to $x^{k_2}$, $\mathcal{S}_{00}$ is the index set of points that are both outliers with respect to $x^{k_1}$ and $x^{k_2}$.

\begin{theorem}\label{idealthm2}
If a monotone Boolean function $f:\{0,1\}^N\rightarrow \{0,1\}$ is ideal with respect to $x^{k_i}\in L_{k_i}$, $i=1,2$, then
\begin{align}
2^N \hat{f}(\mathcal{S}_{11})=&C_p^{N-1}-C_p^{k_1-1}-C_p^{k_2-1},\\
2^N \hat{f}(\mathcal{S}_{10})=&C_p^{N-1}+\sum_{l=p+1}^{k_2}C_{l}^{k_2}-
C_p^{k_1-1},\\
2^N \hat{f}(\mathcal{S}_{01})=&C_p^{N-1}+\sum_{l=p+1}^{k_1}C_{l}^{k_1}-
C_p^{k_2-1},\\
2^N \hat{f}(\mathcal{S}_{00})=&C_p^{N-1}+\sum_{l=p+1}^{k_1}C_{l}^{k_1}
+\sum_{l=p+1}^{k_2}C_{l}^{k_2},
\end{align}
where $\hat{f}(\mathcal{S}_\bullet)$ denotes the influence of $j\in \mathcal{S}_\bullet$. If $\mathcal{S}_\bullet =\emptyset$, the associated influences $\hat{f}(\mathcal{S}_\bullet)$ doesn't exist.
\end{theorem}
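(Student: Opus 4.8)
The plan is to rerun the argument behind Theorem~\ref{idealthm}, with the single sub-cube $B_{x^k}$ replaced by the disjoint pair $\mathcal{B}_{x^{k_1}}$, $\mathcal{B}_{x^{k_2}}$. The only fact about influence that is needed is the one used (implicitly) in the proof of Theorem~\ref{idealthm}: for a monotone Boolean $f$ the scaled influence $2^N\hat f(j)$ equals the number of boundary edges of the cube in coordinate direction $j$, i.e.\ the number of pairs of vertices differing only in coordinate $j$ on which $f$ jumps from $0$ (on the vertex excluding $j$) to $1$ (on the vertex including $j$). Since any permutation of coordinates internal to one of $\mathcal{S}_{11},\mathcal{S}_{10},\mathcal{S}_{01},\mathcal{S}_{00}$ preserves $x^{k_1}$ and $x^{k_2}$ as sets, hence preserves $\mathcal{B}_{x^{k_1}}$, $\mathcal{B}_{x^{k_2}}$, hence $f$, the influence is constant on each block (which is what lets the theorem speak of ``the influence of $j\in\mathcal{S}_\bullet$''), and it suffices to count boundary edges in direction $j$ for one representative $j$ of each block; recall that $x^{k_i}_j=1$ is exactly what is meant by ``$j$ is an inlier to structure $i$''.

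First I would isolate the single structural consequence of condition~(1) of Definition~\ref{defideal2}: since $\mathcal{B}_{x^{k_1}}\cap\mathcal{B}_{x^{k_2}}=\emptyset$, no subset of $\mathcal{S}_{11}$ can have size $\geqslant p+1$ (such a subset of size $p+1$ would lie in both sub-cubes), so $|\mathcal{S}_{11}|\leqslant p$. This is precisely what makes the inclusion--exclusion corrections indexed by $\mathcal{S}_{11}$, and every term counting sets simultaneously in both sub-cubes, vanish.

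Next, fixing $j$, I would split the boundary-edge count by the level $\|y\|_1$ of the lower vertex $y$, using three observations: (a)~the upper vertex can lie in $\mathcal{B}_{x^{k_i}}$ only if $j$ is an inlier to structure $i$; (b)~once the upper vertex has level $\geqslant p+1$, $f$ equals $1$ there iff it avoids both sub-cubes; and (c)~at level $\|y\|_1=p$ the lower vertex is automatically feasible. So at level $p$ the boundary edges in direction $j$ are the size-$p$ subsets of $[N]\setminus\{j\}$ that are \emph{not} contained in structure $i$ for any structure $i$ to which $j$ is an inlier; a one-line inclusion--exclusion over those (at most two) forbidden families gives $C_p^{N-1}$ minus $C_p^{k_i-1}$ for each such $i$, the would-be overlap term being $0$ by $|\mathcal{S}_{11}|\leqslant p$. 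Above level $p$, feasibility of $y$ forces $y\in\mathcal{B}_{x^{k_1}}\cup\mathcal{B}_{x^{k_2}}$; a $y\in\mathcal{B}_{x^{k_i}}$ (automatically with $j$ excluded) yields a boundary edge iff $j$ is an \emph{outlier} to structure $i$ --- if $j$ were an inlier, adjoining $j$ keeps $y$ inside that same sub-cube and $f$ does not jump --- and by disjointness such a $y$ lies in only one sub-cube, so this contributes $\sum_{l=p+1}^{k_i}C_l^{k_i}=|\mathcal{B}_{x^{k_i}}|$ for each structure $i$ to which $j$ is an outlier. Adding the level-$p$ and above-$p$ contributions and specialising $j$ to $\mathcal{S}_{11}$ (inlier to both), $\mathcal{S}_{10}$ (inlier to $x^{k_1}$, outlier to $x^{k_2}$), $\mathcal{S}_{01}$, and $\mathcal{S}_{00}$ (outlier to both) yields the four stated identities directly. (Equivalently, one may count for each type the \emph{total} number of such boundary edges and divide by $|\mathcal{S}_\bullet|$, as in the proof of Theorem~\ref{idealthm}; this route reproduces the $(1-l/k)$-type weighted sums there and needs the same Vandermonde collapses.)

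The substance is entirely in the edge/level bookkeeping, not the algebra: one must combine the two level-$p$ forbidden families with the correct inclusion--exclusion sign and check their intersection is empty (this is exactly where $|\mathcal{S}_{11}|\leqslant p$ is used), and verify that no set above level $p$ is credited to both sub-cubes (again condition~(1)). I expect this accounting to be the only real obstacle; once it is pinned down, everything else is the computation already carried out for Theorem~\ref{idealthm}, performed once per structure that $j$ sees. The degenerate cases $\mathcal{S}_\bullet=\emptyset$ need no argument --- there is then simply no point of that type and the corresponding formula is vacuous.
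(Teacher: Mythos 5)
Your proposal is correct, and its overall strategy --- identify $2^N\hat f(j)$ with the number of boundary edges in direction $j$, use condition (1) to get $|\mathcal{S}_{11}|\leqslant p$, and split the edge count into the level-$p$ slice and the levels above $p$ --- is the same as the paper's. Where you genuinely diverge is in how the level-$p$ contribution is computed. The paper aggregates: it counts \emph{all} boundary edges of a given type (e.g.\ all edges obtained by flipping some inlier-to-both coordinate) via triple and quadruple sums over the sizes of the intersections of the lower vertex with $\mathcal{S}_{11}$, $\mathcal{S}_{00}$, and the two structures, weighted by factors like $\bigl(1-\tfrac{l}{|\mathcal{S}_{11}|}\bigr)$, and then collapses these with Vandermonde-type identities to reach $C_p^{N-1}-C_p^{k_1-1}-C_p^{k_2-1}$ and $C_p^{N-1}-C_p^{k_i-1}$. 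You instead fix a single representative coordinate $j$ (justified by the permutation-symmetry observation, which the paper leaves implicit as ``all inliers have the same influence'') and count the size-$p$ subsets of $[N]\setminus\{j\}$ whose union with $\{j\}$ escapes both sub-cubes, by a two-term inclusion--exclusion whose cross term vanishes because $|\mathcal{S}_{11}|\leqslant p$. This buys a much shorter and more transparent derivation --- the only place the disjointness hypothesis enters is visibly isolated (killing the overlap term at level $p$ and preventing double-counting above level $p$) --- at the cost of not exhibiting the explicit per-level edge tallies that the paper's aggregate sums record. Both routes land on identical formulas, and your above-level-$p$ accounting (a boundary edge in direction $j$ at a feasible $y$ with $\|y\|_1>p$ exists iff $y$ lies in a sub-cube to which $j$ is an outlier, contributing $\sum_{l=p+1}^{k_i}C_l^{k_i}$ per such structure) matches the paper's exactly.
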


\begin{proof}
By the first condition in Definition \ref{defideal2}, we have $0\leqslant |\mathcal{S}_{11}|\leqslant p$.

For $i=1,2$, at level $l$ $(p+1\leqslant l\leqslant k_i)$, boundary edges coming from flipping outliers with respect to $x^{k_i}$ have
\begin{align*}
\sum_{l=p+1}^{k_i}\sum_{s=\max\{0,l-(k_i-|\mathcal{S}_{11}|)\}}^{|\mathcal{S}_{11}|} C_{l-s}^{k_i-|\mathcal{S}_{11}|} C_s^{|\mathcal{S}_{11}|}=\sum_{l=p+1}^{k_i} C_l^{k_i}.
\end{align*}
At level $p$, boundary edges coming from flipping inliers with respect to both $x^{k_1}$ and $x^{k_2}$ have
\begin{align*}
&\sum_{l=0}^{|\mathcal{S}_{11} |}\sum_{h=0}^{\min \{p-l,N+|\mathcal{S}_{11} |-k_1-k_2\}}\sum_{s=\max\{0,p-h-l-(k_2-|\mathcal{S}_{11} |)\}}^{\min\{p-h-l,k_1-|\mathcal{S}_{11} |\}}C_l^{|\mathcal{S}_{11} |}C_{h}^{N+|\mathcal{S}_{11} |-k_1-k_2}C_s^{k_1-|\mathcal{S}_{11} |}C_{p-h-s-l}^{k_2-|\mathcal{S}_{11} |}(1-\frac{l}{|\mathcal{S}_{11} |})\\
&-\sum_{l=0}^{|\mathcal{S}_{11} |}C_l^{|\mathcal{S}_{11} |}(C_{p-l}^{k_1-|\mathcal{S}_{11} |}+C_{p-l}^{k_2-|\mathcal{S}_{11} |})(1-\frac{l}{|\mathcal{S}_{11} |})\\
&=C_p^N-C_{p-1}^{N-1}-(C_p^{k_1}+C_p^{k_2}-C_{p-1}^{k_1-1}-C_{p-1}^{k_2-1})\\
&=C_p^{N-1}-C_p^{k_1-1}-C_p^{k_2-1},
\end{align*}
boundary edges coming from flipping inliers with respect to $x^{k_i}$ but outliers with respect to $x^{k_j}$ $(i,j\in\{1,2\},~i\neq j)$ have
\begin{align*}
&\sum_{l=0}^{|\mathcal{S}_{11} |}\sum_{h=0}^{\min \{p-l,N+|\mathcal{S}_{11} |-k_1-k_2\}}\sum_{s=\max\{0,p-h-l-(k_i-|\mathcal{S}_{11} |)\}}^{\min\{p-h-l,k_j-|\mathcal{S}_{11} |\}}C_l^{|\mathcal{S}_{11} |}C_{h}^{N+|\mathcal{S}_{11} |-k_1-k_2}C_s^{k_1-|\mathcal{S}_{11} |}C_{p-h-s-l}^{k_2-|\mathcal{S}_{11} |}(1-\frac{p-h-s-l}{k_i-|\mathcal{S}_{11} |})\\
&-\sum_{l=0}^{|\mathcal{S}_{11} |}C_l^{|\mathcal{S}_{11} |}C_{p-l}^{k_i-|\mathcal{S}_{11} |}(1-\frac{p-l}{k_i-|\mathcal{S}_{11} |})\\
&=C_p^N-C_{p-1}^{N-1}-(C_p^{k_i}-C_{p-1}^{k_i-1})\\
&=C_p^{N-1}-C_{p}^{k_i-1}.
\end{align*}
Then, the proof completes by combining all analysis above.
\end{proof}

Based on Theorem \ref{idealthm2}, we find
\begin{corollary}
\begin{align}
&\hat{f}(\mathcal{S}_{11})<\hat{f}(\mathcal{S}_{10})\leqslant \hat{f}(\mathcal{S}_{01})<\hat{f}(\mathcal{S}_{00}),\\
&\hat{f}(\mathcal{S}_{11})+\hat{f}(\mathcal{S}_{00})=
\hat{f}(\mathcal{S}_{10})+\hat{f}(\mathcal{S}_{01}).
\end{align}
\end{corollary}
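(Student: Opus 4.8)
The plan is to prove both lines of the corollary by substituting the four closed forms of Theorem~\ref{idealthm2} directly into the claimed (in)equalities; no new edge count is needed. It is convenient to abbreviate $A=C_p^{N-1}$, $B_i=C_p^{k_i-1}$ and $T_i=\sum_{l=p+1}^{k_i}C_l^{k_i}$ for $i=1,2$, so that Theorem~\ref{idealthm2} reads $2^N\hat f(\mathcal S_{11})=A-B_1-B_2$, $2^N\hat f(\mathcal S_{10})=A+T_2-B_1$, $2^N\hat f(\mathcal S_{01})=A+T_1-B_2$ and $2^N\hat f(\mathcal S_{00})=A+T_1+T_2$. With this notation the additive identity $\hat f(\mathcal S_{11})+\hat f(\mathcal S_{00})=\hat f(\mathcal S_{10})+\hat f(\mathcal S_{01})$ is immediate, since after multiplying through by $2^N$ both sides equal $2A+T_1+T_2-B_1-B_2$. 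I would record this identity first, because it also economises on the ordering: once we know that four scaled values $a<b\leqslant c<d$ satisfy $a+d=b+c$, it suffices to establish the left-hand inequality $a<b$ and the middle one $b\leqslant c$, after which $d=b+c-a>c$ is automatic.

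For the ordering itself, each comparison collapses to a one-line statement once the $A$'s cancel. The two extreme inequalities reduce to inequalities of the shape $T_i>0$ (or $T_i+B_i>0$), all of which are clear because $p<k_i$ forces $T_i=\sum_{l=p+1}^{k_i}C_l^{k_i}\geqslant C_{k_i}^{k_i}=1$ and $C_p^{k_i-1}\geqslant0$. The one genuinely substantive step is the middle inequality: after cancellation it is exactly the monotonicity statement $T_1+B_1\leqslant T_2+B_2$, i.e. that $k\mapsto\sum_{l=p+1}^{k}C_l^{k}+C_p^{k-1}$ is non-decreasing on $k>p$. I would get this from Pascal's rule: summing $C_l^{k+1}=C_l^{k}+C_{l-1}^{k}$ over $l=p+1,\dots,k+1$ gives the recurrence $\sum_{l=p+1}^{k+1}C_l^{k+1}=2\sum_{l=p+1}^{k}C_l^{k}+C_p^{k}$, so the partial sum at least doubles (up to the additive $C_p^{k}\geqslant1$) at each step; combined with the obvious monotonicity of $k\mapsto C_p^{k-1}$ this yields $T_1+B_1\leqslant T_2+B_2$, with equality precisely when $k_1=k_2$ --- which is exactly why the corollary writes ``$\leqslant$'' and not ``$<$'' in that slot. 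Finally I would note the caveat inherited from Theorem~\ref{idealthm2}: all comparisons are understood only among the $\mathcal S_{\bullet}$ that are actually non-empty (for instance $\mathcal S_{11}$ may be empty, in which case the first link of the chain is simply vacuous).

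The main obstacle here is not analytic --- the estimates are elementary --- but bookkeeping: one must keep the identification of the two middle sets ($\mathcal S_{10}$ versus $\mathcal S_{01}$, ``inlier to the larger structure'' versus ``inlier to the smaller structure'') consistent across the set definitions, the closed forms of Theorem~\ref{idealthm2}, and the orientation of the inequality chain, so that the direction in which $\sum_{l=p+1}^{k}C_l^{k}+C_p^{k-1}$ is monotone lines up with the claimed sign. That single alignment is the only place where a slip would actually change the statement; everything else is routine algebra on partial sums of binomial coefficients.
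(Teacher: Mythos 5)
Your overall strategy --- substitute the four closed forms from Theorem \ref{idealthm2} and compare --- is exactly what the paper does (it offers no further argument, simply asserting the corollary ``based on'' the theorem). Your treatment of the additive identity, of the two outer strict inequalities, and your observation that $a<b$, $b\leqslant c$, $a+d=b+c$ together force $c<d$ are all correct, as is the substantive lemma you isolate: that $k\mapsto\sum_{l=p+1}^{k}C_l^{k}+C_p^{k-1}$ is non-decreasing (indeed strictly increasing) for $k>p$, via Pascal's rule.

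The gap is in the one step you yourself flagged as the dangerous one: the alignment of the middle link. With your abbreviations, Theorem \ref{idealthm2} gives $2^N\hat f(\mathcal S_{10})=A+T_2-B_1$ and $2^N\hat f(\mathcal S_{01})=A+T_1-B_2$, so the corollary's middle inequality $\hat f(\mathcal S_{10})\leqslant\hat f(\mathcal S_{01})$ cancels to $T_2+B_2\leqslant T_1+B_1$ --- the \emph{reverse} of the monotonicity statement $T_1+B_1\leqslant T_2+B_2$ that you then prove. Since $k_1\leqslant k_2$, what you have actually established is $\hat f(\mathcal S_{01})\leqslant\hat f(\mathcal S_{10})$, i.e.\ the chain with the two middle terms transposed. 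This is not a defect of your lemma but of the statement you are matching it to: under the paper's displayed definition $\mathcal S_{10}:=S_{x^{k_1}}^1\cap S_{x^{k_2}}^0$ (inlier to the smaller structure, outlier to the larger), the chain as printed is false, and the paper's own Example \ref{two-example1} confirms this with $2^9\hat f(\mathcal S_{10})=41>27=2^9\hat f(\mathcal S_{01})$. The printed chain is consistent only with the paper's \emph{verbal} gloss of $\mathcal S_{10}$ (``outliers with respect to $x^{k_1}$ while inliers with respect to $x^{k_2}$''), which contradicts the displayed set definition. So your write-up, read literally, asserts an equivalence of inequalities that does not hold; read charitably, it proves the corrected chain $\hat f(\mathcal S_{11})<\hat f(\mathcal S_{01})\leqslant\hat f(\mathcal S_{10})<\hat f(\mathcal S_{00})$. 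You should fix the direction of the cancellation, state which convention for $\mathcal S_{10},\mathcal S_{01}$ you adopt, and note (as you correctly do for the non-emptiness caveat) that equality in the middle occurs precisely when $k_1=k_2$.
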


\begin{example}
\label{two-example1}
Choosing $N=9$, $p=2$, $k_1=4$, $k_2=5$, $x^{k_1}=111100000$, $x^{k_2}=001001111$,
we can `read off' the following:
there are 4 inliers to structure 1 (points 1-4), 5 inliers to structure 2 (points 3, 6-9): 1 point being a common inlier to both (point 3), and one point being an outlier to all structures (point 5).
Direct calculations by Matlab give the scaled influences as $[41~~41~~19~~ 41~~ 49~~ 27~~ 27~~ 27~~ 27]$. By Theorem \ref{idealthm2},
\begin{align*}
2^{9} \hat{f}(i_1)=&C_2^8-C_2^4-C_2^3=19, \text{inlier to both structures}\\
2^{9} \hat{f}(i_2)=&\sum_{l=3}^5C_{l}^5-C_2^3+C_2^8=41, \text{inlier to the smaller structure (structure 1) only}\\
2^{9} \hat{f}(i_3)=&\sum_{l=3}^4C_{l}^4-C_2^4+C_2^8=27, \text{inlier to the larger structure (structure 2) only}\\
2^{9} \hat{f}(i_4)=&\sum_{l=3}^5C_{l}^5+\sum_{l=3}^4C_{l}^4+C_2^8=49, \text{outlier to all structures}
\end{align*}
which are consistent with the experiments.
\end{example}

\paragraph{Ideal K-structure Case}
In what follows, we discuss the ideal $K$-structure $(K\geqslant 1)$, namely, there are $K$ upper zeros only and the Boolean sub-cubes determined by these upper zeros are disjoint above level $p$.

\begin{definition}
Let $f:\{0,1\}^N\rightarrow \{0,1\}$ be a monotone Boolean function, $x^{k_i}\in L_{k_i}$ $(1\leqslant i\leqslant K)$ are upper zeros, where $p<k_1\leqslant k_2\leqslant \cdots \leqslant k_K<N$, then $f$ is called (K-Structure) {\em $K$-ideal} with respect to $\{x^{k_i}\}_{i=1}^K$, if
\begin{flalign*}
&(1)~~ \Delta(\mathcal{B}_{x^{k_i}}\setminus L_{\leqslant p},\mathcal{B}_{x^{k_j}}\setminus L_{\leqslant p})>0,~~~\forall~~ k_i\neq k_j,~ 1\leqslant i,j\leqslant K~\hbox{and}\\
&(2)~~ f(x)=\begin{cases}
0,~~~~&\forall~~ x\in \cup_{i=1}^K \mathcal{B}_{x^{k_i}}\cup L_{\leqslant p},\\
1,~~~~&\hbox{others},
\end{cases}&&
\end{flalign*}
where $\Delta$ is the Hamming distance and $\mathcal{B}_{x^{k_i}}$ is the Boolean sub-cube determined by $x^{k_i}$, $1\leqslant i \leqslant K$.
\end{definition}
The first condition states they have no elements in common (in MaxCon they do not share data points).

Let 
$$\mathcal{S}_{x^{k_i}}^l := \{j\in [n] | x^{k_i}_j=l\},$$ 
where $l=0,1$, 
$$\mathcal{S}_{j_1 j_2\cdots j_K} := \cap_{i=1}^K \mathcal{S}_{x^{k_i}}^{j_i}, j_i\in \{0,1\}, 1\leqslant i\leqslant K.$$ 
For examples, $\mathcal{S}_{11\cdots 1}=\cap_{i=1}^K \mathcal{S}_{x^{k_i}}^1$ is the index set of points that are inliers with respect to all $x^{k_i}$, $\mathcal{S}_{00\cdots 0}=\cap_{i=1}^K \mathcal{S}_{x^{k_i}}^0$ is the index set of points that are outliers with respect to all $x^{k_i}$, $1\leqslant i\leqslant K$.

\begin{theorem}\label{idealthm3}
\begin{align}\label{infarb}
2^N\hat{f}(\mathcal{S}_{j_1 j_2\cdots j_K})=C_p^{N-1}+\sum_{j_i=0,1\leqslant i\leqslant K}\sum_{l=p+1}^{k_i}C_l^{k_i}-\sum_{j_i=1,1\leqslant i\leqslant K}C_p^{k_i-1},
\end{align}
where $\hat{f}(\mathcal{S}_{j_1 j_2\cdots j_K})$ denotes the influence $\hat{f}(j)$ of $j\in \mathcal{S}_{j_1 j_2\cdots j_K}$.
\end{theorem}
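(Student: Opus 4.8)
The plan is to proceed exactly as in the proofs of Theorems \ref{idealthm} and \ref{idealthm2}, counting boundary edges of the feasibility function level by level, but now bookkeeping the $2^K$ partition classes $\mathcal{S}_{j_1\cdots j_K}$ simultaneously. Fix a coordinate $j$ lying in class $\mathcal{S}_{j_1\cdots j_K}$; I want the number of edges in direction $j$ along which $f$ changes value, which (since $f$ is monotone and all coordinates in a common class are symmetric) equals $2^N\hat f(j)$. The first observation, from condition (1) of $K$-ideality, is that the sub-cubes $\mathcal{B}_{x^{k_i}}$ pairwise intersect only inside $L_{\leqslant p}$, so above level $p$ every boundary edge belongs to exactly one sub-cube $\mathcal{B}_{x^{k_i}}$ and the counting there decouples across structures.

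First I would handle the contribution above level $p$. For each $i$ with $j_i=0$ (i.e. $j$ is an outlier to structure $i$, hence $x_j^{k_i}=0$), flipping bit $j$ up from a vertex of $\mathcal{B}_{x^{k_i}}$ at level $l$ ($p+1\leqslant l\leqslant k_i$) exits the sub-cube and thus crosses the feasible/infeasible boundary; summing the binomial counts over $l$ and over the ways of distributing the remaining $l$ ones among the $k_i$ inlier slots of structure $i$ collapses — exactly as in Theorem \ref{idealthm2} via Vandermonde — to $\sum_{l=p+1}^{k_i}C_l^{k_i}$. If $j_i=1$ for all $i$ in some structure, bit $j$ is already "inside" and contributes nothing above level $p$. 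This yields the term $\sum_{j_i=0,\,1\leqslant i\leqslant K}\sum_{l=p+1}^{k_i}C_l^{k_i}$.

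Next comes the level-$p$ contribution, which is where the arithmetic is heaviest. Here one must count direction-$j$ edges from level $p$ to level $p+1$ whose lower endpoint is feasible (in some $\mathcal{B}_{x^{k_i}}\cap L_p$ or in $L_p$ itself, all of which are feasible) and whose upper endpoint is infeasible. The endpoint at level $p+1$ is infeasible precisely when it lies outside every $\mathcal{B}_{x^{k_i}}$. One organizes the level-$p$ vertices by how their $p$ ones split among the classes, uses the telescoping identity $C_l^{m}(1-\tfrac{l}{m}) = C_l^{m}-C_{l-1}^{m-1}$ (the same device used twice in the earlier proofs) to turn each "fraction of edges that cross the boundary" into a difference of binomials, and then repeatedly applies Vandermonde's convolution to collapse the multisum. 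The claim is that after this collapse the net level-$p$ count is $C_p^{N-1}-\sum_{j_i=1,\,1\leqslant i\leqslant K}C_p^{k_i-1}$: the $C_p^{N-1}$ is the "generic" count one already sees for a single structure, and each structure to which $j$ is an inlier subtracts a $C_p^{k_i-1}$ correction, mirroring the $-C_p^{k_i-1}$ terms in Theorems \ref{idealthm} and \ref{idealthm2}. Adding the two contributions gives (\ref{infarb}).

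The main obstacle is purely combinatorial: the level-$p$ multisum now ranges over all $2^K$ ways of allocating the $p$ ones among the classes, and one must verify that the telescope-and-Vandermonde collapse still goes through cleanly with $K$ structures rather than one or two — in particular that the cross terms coming from pairs (and triples, etc.) of structures all cancel, leaving only the single-structure corrections $-\sum_{j_i=1}C_p^{k_i-1}$. The cleanest route is induction on $K$: assume (\ref{infarb}) for $K-1$ structures, split off the $K$-th structure's inlier slots, condition on how many of the $p$ ones (and of the bit being flipped) fall there, and reduce the inner sum over the remaining $K-1$ structures to the inductive hypothesis; the earlier $K=1$ and $K=2$ theorems then serve as base cases and as a sanity check that the formula specializes correctly. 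A useful consistency check throughout is that (\ref{infarb}) reproduces Theorem \ref{idealthm2} when $K=2$ (matching all four $\mathcal{S}_{j_1j_2}$ values) and Theorem \ref{idealthm} when $K=1$.
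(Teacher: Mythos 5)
Your proposal is correct and, in its recommended form (induction on $K$ with Theorem \ref{idealthm} as the base case, splitting off the $K$-th structure and computing the incremental boundary-edge count), is essentially the paper's own proof: the paper likewise shows that adding the upper zero $x^{k_K}$ contributes $+\sum_{l=p+1}^{k_K}C_l^{k_K}$ when $j_K=0$ (new boundary edges from flipping outliers to $x^{k_K}$, collapsed by Vandermonde) and $-C_p^{k_K-1}$ when $j_K=1$ (boundary edges at level $p$ killed by the new shadow, via the telescoping identity). Your level-by-level accounting and the identification of the level-$p$ term $C_p^{N-1}-\sum_{j_i=1}C_p^{k_i-1}$ match the paper's computation.
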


\begin{proof}
We prove the theorem by induction on $K$. When $K=1$, \eqref{infarb} holds by Theorem \ref{idealthm}.

Suppose \eqref{infarb} holds for $1,2,\cdots, K-1$, namely,
\begin{align}
2^N\hat{f}(\mathcal{S}_{j_1 j_2\cdots j_{K-1}})=C_p^{N-1}+\sum_{j_i=0,1\leqslant i\leqslant K-1}\sum_{l=p+1}^{k_i}C_l^{k_i}-\sum_{j_i=1,1\leqslant i\leqslant K-1}C_p^{k_i-1},
\end{align}
now we only have to prove
\begin{align*}
2^N\hat{f}(\mathcal{S}_{j_1 j_2\cdots j_{K}})=\begin{cases}
2^N\hat{f}(\mathcal{S}_{j_1 j_2\cdots j_{K-1}})+\sum_{l=p+1}^{k_K}C_l^{k_K},~~~~&j_K=0,\\
2^N\hat{f}(\mathcal{S}_{j_1 j_2\cdots j_{K-1}})-C_p^{k_K-1},~~~~&j_K=1.
\end{cases}
\end{align*}
Since $\mathcal{S}_{j_1\cdots j_K}=\mathcal{S}_{j_1\cdots j_{K-1}}\cap \mathcal{S}_{x^{k_K}}^{j_K}$, we consider four types of sets: $\mathcal{S}_{x^{k_K}}^{1}\setminus (\mathcal{S}_{j_1\cdots j_{K-1}}\cap \mathcal{S}_{x^{k_K}}^{1})$, $\mathcal{S}_{j_1\cdots j_{K-1}}\cap \mathcal{S}_{x^{k_K}}^{1}$, $\mathcal{S}_{j_1\cdots j_{K-1}}\cap \mathcal{S}_{x^{k_K}}^{0}$, $\mathcal{S}_{x^{k_K}}^{0}\setminus (\mathcal{S}_{j_1\cdots j_{K-1}}\cap \mathcal{S}_{x^{k_K}}^{0})$. When adding one more upper zero $x^{k_K}$, the increased boundary edges by flipping outliers with respect to $x^{k_K}$ (changing from $0$ to $1$ in $\mathcal{S}_{x^{k_K}}^0$ from level $p+1$ to $k_K$) have
\begin{align*}
\sum_{l=p+1}^{k_K}\sum_{s=\max\{0,l-(k_K-| \mathcal{S}_{j_1\cdots j_{K-1}}\cap \mathcal{S}_{x^{k_K}}^{1} |)\}}^{| \mathcal{S}_{j_1\cdots j_{K-1}}\cap \mathcal{S}_{x^{k_K}}^{1} |} C_{l-s}^{k_K-| \mathcal{S}_{j_1\cdots j_{K-1}}\cap \mathcal{S}_{x^{k_K}}^{1} |}C_s^{| \mathcal{S}_{j_1\cdots j_{K-1}}\cap \mathcal{S}_{x^{k_K}}^{1} |}=\sum_{l=p+1}^{k_K}C_l^{k_K}.
\end{align*}
While boundary edges coming from flipping inliers with respect to $x^{k_K}$ (changing from $0$ to $1$ and pointing from $L_p\setminus \mathcal{B}_{x^{k_K}}$ to $L_{p+1}\setminus \mathcal{B}_{x^{k_K}}$) decrease, which have
\begin{align*}
&\sum_{l=0}^{| \mathcal{S}_{j_1\cdots j_{K-1}}\cap \mathcal{S}_{x^{k_K}}^{1} |} C_{l}^{| \mathcal{S}_{j_1\cdots j_{K-1}}\cap \mathcal{S}_{x^{k_K}}^{1} |} C_{p-l}^{k_K-| \mathcal{S}_{j_1\cdots j_{K-1}}\cap \mathcal{S}_{x^{k_K}}^{1} |}(1-\frac{p-l}{k_K-| \mathcal{S}_{j_1\cdots j_{K-1}}\cap \mathcal{S}_{x^{k_K}}^{1} |})\\
&=C_p^{k_K}-\sum_{l=0}^{| \mathcal{S}_{j_1\cdots j_{K-1}}\cap \mathcal{S}_{x^{k_K}}^{1} |} C_{l}^{| \mathcal{S}_{j_1\cdots j_{K-1}}\cap \mathcal{S}_{x^{k_K}}^{1} |} C_{p-l-1}^{k_K-| \mathcal{S}_{j_1\cdots j_{K-1}}\cap \mathcal{S}_{x^{k_K}}^{1} |-1}\\
&=C_p^{k_K}-C_{p-1}^{k_K-1}\\
&=C_{p}^{k_K-1},
\end{align*}
which completes the proof.
\end{proof}

\begin{corollary}\label{corK}
The influences have the following order relationship
\begin{align*}
\forall~ x,y\in\{0,1\}^K,~x>y\implies \hat{f}(\mathcal{S}_{x})<\hat{f}(\mathcal{S}_{y}),
\end{align*}
which means $\hat{f}$ is a real-valued monotone decrease Boolean function. If any $ \mathcal{S}_\bullet =\emptyset$, then $\hat{f}(\mathcal{S}_\bullet)$ doesn't exist.

For any $x\in \{0,1\}^K$, suppose $x$ is in level $k$, let $\hat{\mathcal{B}}_{x}=\{y\in\{0,1\}^K | \Delta(x,y)=i, y\in L_{k+i}, 0\leqslant i \leqslant K-k\}$ be the Boolean sub-cube determined by $x$, then
\begin{align}
\hat{f}(\mathcal{S}_x)+\hat{f}(\mathcal{S}_{\bf 1})=\sum_{y\in(\hat{\mathcal{B}}_x-x-{\bf 1})}\hat{f}(\mathcal{S}_y),
\end{align}
where ${\bf 1}$ denotes the top point of the Boolean cube $\{0,1\}^K$.
\end{corollary}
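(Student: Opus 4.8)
The plan is to read off from Theorem~\ref{idealthm3} the single structural fact that does all the work: $\hat f(\mathcal S_z)$, regarded as a function of the vertex $z=(j_1,\dots,j_K)\in\{0,1\}^K$, is \emph{affine}. Writing $A_i:=\sum_{l=p+1}^{k_i}C_l^{k_i}$ and $B_i:=C_p^{k_i-1}$, the identity of Theorem~\ref{idealthm3} reads $2^N\hat f(\mathcal S_z)=C_p^{N-1}+\sum_{i:z_i=0}A_i-\sum_{i:z_i=1}B_i$, which I would immediately rewrite as
\begin{align*}
2^N\hat f(\mathcal S_z)=\Big(C_p^{N-1}+\sum_{i=1}^K A_i\Big)-\sum_{i=1}^K (A_i+B_i)\,z_i .
\end{align*}
So the first step is pure bookkeeping: record this affine form and note that every coefficient $A_i+B_i$ is strictly positive, since $p<k_i$ forces $A_i\ge C_{p+1}^{k_i}\ge 1$ and $B_i=C_p^{k_i-1}\ge 1$. (One restricts to vertices with $\mathcal S_z\neq\emptyset$, as the Corollary notes; this does not affect the arguments.)

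The monotone-decrease statement then falls out at once. If $x>y$ in $\{0,1\}^K$ then $x_i\ge y_i$ for all $i$ with strict inequality somewhere, so positivity of the coefficients gives $\sum_i(A_i+B_i)x_i>\sum_i(A_i+B_i)y_i$ and hence $\hat f(\mathcal S_x)<\hat f(\mathcal S_y)$; this is exactly the claim that $\hat f$ is a real-valued strictly monotone-decreasing function on $\{0,1\}^K$, and it also pins down the signs in the displayed chain of inequalities.

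For the summation identity I would combine affineness with the antipodal symmetry of the sub-cube $\hat{\mathcal B}_x$. Fix $x$ at level $k$, set $Z=\{i:x_i=0\}$ ($|Z|=K-k$), and index the vertices of $\hat{\mathcal B}_x$ by subsets $T\subseteq Z$: let $y_T$ be $x$ with the bits in $T$ flipped to $1$, so $y_\emptyset=x$ and $y_Z={\bf 1}$. The affine form gives $2^N\hat f(\mathcal S_{y_T})=2^N\hat f(\mathcal S_x)-\sum_{i\in T}(A_i+B_i)$, so for complementary $T$ and $Z\setminus T$ one gets $\hat f(\mathcal S_{y_T})+\hat f(\mathcal S_{y_{Z\setminus T}})=\hat f(\mathcal S_x)+\hat f(\mathcal S_{\bf 1})$: every pair of vertices of $\hat{\mathcal B}_x$ antipodal \emph{within} that sub-cube has the same $\hat f(\mathcal S_\cdot)$-sum, namely the value already realised by the pair $\{x,{\bf 1}\}$ itself. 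Grouping the $2^{K-k}$ vertices into $2^{K-k-1}$ such pairs, summing, and deleting the pair $\{x,{\bf 1}\}$ produces the stated relation; with two free coordinates ($K-k=2$) exactly one pair survives and one recovers $\hat f(\mathcal S_x)+\hat f(\mathcal S_{\bf 1})=\sum_{y\in(\hat{\mathcal B}_x-x-{\bf 1})}\hat f(\mathcal S_y)$, the $K=2$, $x={\bf 0}$ case being the two-structure corollary $\hat f(\mathcal S_{00})+\hat f(\mathcal S_{11})=\hat f(\mathcal S_{01})+\hat f(\mathcal S_{10})$.

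The hard part will not be any calculation but getting this last count exactly right: one must use the antipodal involution \emph{inside} $\hat{\mathcal B}_x$ (complementation only on the coordinates in $Z$, not on all of $\{0,1\}^K$), keep correct track of the multiplicity $2^{K-k-1}$, and state the hypothesis on the level of $x$ cleanly — equivalently, present the level-free form $\sum_{y\in\hat{\mathcal B}_x}\hat f(\mathcal S_y)=2^{K-k-1}\big(\hat f(\mathcal S_x)+\hat f(\mathcal S_{\bf 1})\big)$ — so that the identity is correctly scoped. Once the affine representation is in hand, everything else is routine.
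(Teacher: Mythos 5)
Your proof is correct, and it supplies an argument the paper itself omits: Corollary~\ref{corK} is stated immediately after Theorem~\ref{idealthm3} with no proof at all (only the $K=2$ analogue is even flagged as ``based on'' the preceding theorem). The affine rewriting $2^N\hat f(\mathcal S_z)=\bigl(C_p^{N-1}+\sum_i A_i\bigr)-\sum_i(A_i+B_i)z_i$ with $A_i+B_i\ge 2>0$ is exactly the structural content of Theorem~\ref{idealthm3}, and it yields the strict monotone decrease in one line; this is surely the intended (implicit) argument, just made explicit. Your treatment of the second claim is in fact \emph{more} careful than the paper: pairing antipodal vertices within $\hat{\mathcal B}_x$ gives the level-free identity $\sum_{y\in\hat{\mathcal B}_x}\hat f(\mathcal S_y)=2^{K-k-1}\bigl(\hat f(\mathcal S_x)+\hat f(\mathcal S_{\bf 1})\bigr)$, from which the displayed equation of the corollary follows only when $K-k=2$ (two free coordinates, one surviving pair). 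You are right to insist on this scoping: one can check against Example~\ref{K example1} that for $K=3$ and $x={\bf 0}$ the six middle vertices sum to $3\bigl(\hat f(\mathcal S_x)+\hat f(\mathcal S_{\bf 1})\bigr)$, not $1$ times it, while for $x$ at level $1$ (e.g.\ $41+11=33+19$) the stated form holds. So the corollary's summation identity as printed needs either the hypothesis $k=K-2$ or replacement by your general form; your proof correctly identifies and repairs this, and the restriction to nonempty $\mathcal S_\bullet$ is handled as the paper intends.
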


\begin{example}
\label{K example1}
Choosing $N=8$, $p=2$, $k_1=3$, $k_2=4$, $k_3=5$, $x^{k_1}=10010100$, $x^{k_2}=11110000$, $x^{k_3}=01011101$, direct calculation using Matlab gives the influences as $2^{-8}[33~~13~~35~~11~~21~~19~~43~~21]$.

By Theorem \ref{idealthm3}, we get
\begin{align*}
2^8\hat{f}(\mathcal{S}_{111})&=C_2^7-C_2^2-C_2^3-C_2^4=11,\\
2^8\hat{f}(\mathcal{S}_{011})&=C_2^7+\sum_{l=3}^3C_l^3-C_2^3-C_2^4=13,\\
2^8\hat{f}(\mathcal{S}_{101})&=C_2^7-C_2^2+\sum_{l=3}^4C_l^4-C_2^4=19,\\
2^8\hat{f}(\mathcal{S}_{110})&=C_2^7-C_2^2-C_2^3+\sum_{l=3}^5C_l^5=33,\\
2^8\hat{f}(\mathcal{S}_{001})&=C_2^7+\sum_{l=3}^3C_l^3+\sum_{l=3}^4C_l^4-C_2^4=21,\\
2^8\hat{f}(\mathcal{S}_{010})&=C_2^7+\sum_{l=3}^3C_l^3-C_2^3+\sum_{l=3}^5C_l^5=35,\\
2^8\hat{f}(\mathcal{S}_{000})&=C_2^7+\sum_{l=3}^3C_l^3+\sum_{l=3}^4C_l^4+\sum_{l=3}^5C_l^5=43,
\end{align*}
where $\mathcal{S}_{100} =\emptyset$, $\hat{f}(\mathcal{S}_{100})$ doesn't exist. 

Indeed, $\hat{f}:\{0,1\}^3\rightarrow \mathbb{R}$ is a real-valued monotone decreasing Boolean function, as shown in Figure \ref{infRMBF}.

\begin{figure}
\begin{center}
\begin{tikzcd}
& \textcolor{red}\bullet \textcolor{red}\bullet \textcolor{red}\bullet(11) &\\
\bullet \textcolor{red}\bullet \textcolor{red}\bullet(13) \arrow[ur] & \textcolor{red}\bullet \bullet \textcolor{red}\bullet(19) \arrow[u] & \textcolor{red}\bullet \textcolor{red}\bullet \bullet(33) \arrow[ul]\\
\bullet \bullet \textcolor{red}\bullet(21) \arrow[u] \arrow[ur] & \bullet \textcolor{red}\bullet \bullet(35) \arrow[ul] \arrow[ur] & \textcolor{red}\bullet \bullet \bullet(\#) \arrow[ul] \arrow[u]\\
& \bullet \bullet \bullet(43) \arrow[ul] \arrow[u] \arrow[ur] &
\end{tikzcd}
\caption{Real-valued monotone decreasing Boolean function $\hat{f}$: red dot denotes $1$, black dot denotes $0$, numbers in the brackets denote the scaled influence (multiplied by $2^8$) and $\#$ denotes empty value.}\label{infRMBF}
\end{center}
\end{figure}
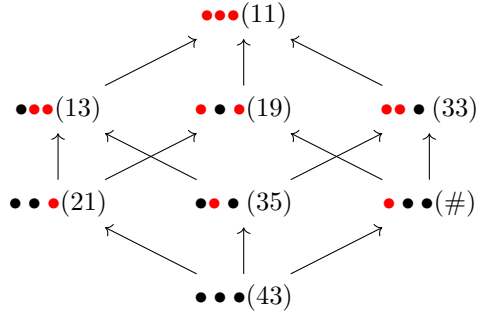

\end{example}
\subsubsection{K-Structure Non-Ideal Case}
\begin{definition}
\label{non-ideal-defn}
Let $f:\{0,1\}^N\rightarrow \{0,1\}$ be a monotone Boolean function, $x^{k_i}\in L_{k_i}$ $(1\leqslant i\leqslant K)$ are upper zeros, where $p<k_1\leqslant k_2\leqslant \cdots \leqslant k_K<N$, then $f$ is called (K-Structure) K-{\em non-ideal}, if
\begin{flalign*}
&(1)~~ \exists~ k_i\neq k_j, ~ ~(\mathcal{B}_{x^{k_i}}\setminus L_{\leqslant p})\cap(\mathcal{B}_{x^{k_j}}\setminus L_{\leqslant p})\neq \emptyset,~\hbox{and}\\
&(2)~~ f(x)=\begin{cases}
0,~~~~&\forall x\in \cup_{i=1}^K \mathcal{B}_{x^{k_i}}\cup L_{\leqslant p},\\
1,~~~~&\hbox{others},
\end{cases}&&
\end{flalign*}
where $\Delta$ is the Hamming distance and $\mathcal{B}_\bullet$ is the Boolean sub-cube determined by $\bullet$. Further,
\begin{align*}
x^{\alpha_{ij}}~:=~\underset{{x\in (\mathcal{B}_{x^{k_i}}\setminus L_{\leqslant p})\cap(\mathcal{B}_{x^{k_j}}\setminus L_{\leqslant p})}}{\mathrm{argmax}} \Vert x\Vert_1
\end{align*}
is called a {\em pseudo upper zero} with respect to $x^{k_i}$ and $x^{k_j}$. The upper index $\alpha_{ij}$ denotes the maximum $1$-norm, which is the level that the pseudo upper zero belongs to.
\end{definition}
Note: pseudo-upper zeros (and their downward shadows) are essentially of use for characterising the {\em diversity of influences} taken on by those data (in that downward shadow) compared to other inliers to the structures these pseudo-upper zeros are (in turn) inside the shadow of. In short, inliers to a given structure will have differing influences precisely when they are in different pseudo upper zeros.

Let $\{x^{\alpha_i}\}_{i=1}^M$ be the set of all pseudo upper zeros\footnote{To simplify notations, we use single sub index for $\alpha$ without indicating which two upper zeros generate the pseudo upper zero.},
\begin{align*}
\mathcal{S}_{x^{k_i}}^l := \{j\in [N] | x^{k_i}_j=l\},~~~\mathcal{S}_{x^{\alpha_i}}^l := \{j\in [N] | x^{\alpha_i}_j=1-l\},
\end{align*}
where $l=0,1$.
In words, $\mathcal{S}_{x^{k_i}}^1$ encodes (is the set of) those data that are in structure $i$ (``underneath' $x^{k_i}$),
and
$\mathcal{S}_{x^{k_i}}^0$ encodes those data that are outlier to that structure.
Likewise, $\mathcal{S}_{x^{\alpha_i}}^1$ encodes data that {\em does not} lie underneath the pseudo- upper zero $x^{\alpha_i}$ (not in the common intersection that that signifies), and $\mathcal{S}_{x^{\alpha_i}}^0$ encodes data that does.

\begin{align*}
\mathcal{S}_{j_1 j_2\cdots j_{K+M}} := (\cap_{i=1}^K \mathcal{S}_{x^{k_i}}^{j_i})\cap (\cap_{i=K+1}^{K+M} \mathcal{S}_{x^{\alpha_i}}^{j_i}), ~~j_i\in \{0,1\},~~ 1\leqslant i\leqslant K+M.
\end{align*}
In words, $\mathcal{S}_{j_1 j_2\cdots j_{K+M}}$ is the set of data in the common intersection of two collections of sets. Where the index in the first K coordinates is 1, we are finding the common inliers to those structures, and where it is a 0 the common outliers. Where the index in the last M coordinates is 1 we are finding the common data not underneath the pseudo upper zero, and where it is a 1 we are finding common data with those outside the pseudo upper zero.

For example, if $K=2$ and $M=1$, $\mathcal{S}_{000}$ is the set of points outlier to both structures (first two zeros) and also in the common intersection given by the first pseudo- upper zero. Clearly this would define an empty set in any real problem.
This example highlights that because the sets behind the coding were defined without regard to whether certain combinations will be empty sets, the Boolean Cube defined by all possible indices has vertices that we would never observe. The flip side is that we have simple encoding and ``nice symmetry''.

\begin{theorem}\label{nonidealthm}
\begin{equation}
\begin{aligned}
2^N\hat{f}(\mathcal{S}_{j_1 j_2\cdots j_{K+M}})=&C_p^{N-1}+\sum_{j_i=0,1\leqslant i\leqslant K}\sum_{l=p+1}^{k_i}C_l^{k_i}-\sum_{j_i=1,1\leqslant i\leqslant K}C_p^{k_i-1}\\
&+\sum_{j_i=0,K+1\leqslant i\leqslant K+M}C_p^{\alpha_i-1}-\sum_{j_i=1,K+1\leqslant i\leqslant K+M}\sum_{l=p+1}^{\alpha_i}C_l^{\alpha_i}
\end{aligned}
\end{equation}
where $\hat{f}(\mathcal{S}_{j_1 j_2\cdots j_{K+M}})$ denotes the influence $\hat{f}(j)$ of $j\in \mathcal{S}_{j_1 j_2\cdots j_{K+M}}$. Note $\hat{f}(\mathcal{S}_\bullet)$ doesn't exist if $\mathcal{S}_\bullet=\emptyset$.
\end{theorem}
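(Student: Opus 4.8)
The plan is to prove the formula by induction on the number of structures $K$, in the spirit of the inductive proof of Theorem \ref{idealthm3}, but now tracking the extra terms produced by overlapping sub-cubes. The base case $K=1$ forces $M=0$ and is exactly Theorem \ref{idealthm}. Throughout I would use two elementary structural facts: first, each sub-cube $\mathcal{B}_{x^{k}}=\{y\le x^{k}:\|y\|_1\ge p+1\}$ is downward closed above level $p$; second, the intersection of two such sub-cubes is again one of the same form, $\mathcal{B}_{x^{k_i}}\cap\mathcal{B}_{x^{k_j}}=\mathcal{B}_{x^{k_i}\wedge x^{k_j}}$, and its apex $x^{k_i}\wedge x^{k_j}$ is precisely the pseudo upper zero $x^{\alpha_{ij}}$ of Definition \ref{non-ideal-defn} (when this apex has more than $p$ ones). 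Hence, when the feasible region is enlarged from $\bigcup_{i\le K}\mathcal{B}_{x^{k_i}}\cup L_{\le p}$ by one more upper zero $x^{k_{K+1}}$, the only ``new'' pseudo upper zeros are the $x^{\alpha_{i,K+1}}$, $i\le K$.

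For a fixed coordinate $j$, I would count the direction-$j$ boundary edges of $f$ (pairs $x\to x'$ with $x_j=0$, $x'$ equal to $x$ with coordinate $j$ set to $1$, $f(x)=0$, $f(x')=1$) by splitting on the level of the lower endpoint. At level $p$ every lower endpoint is feasible, so the count is $C_p^{N-1}$ minus the number of level-$p$ vertices $x$ with $x_j=0$ whose up-flip lies in $\bigcup_i\mathcal{B}_{x^{k_i}}$; inclusion--exclusion over the $\mathcal{B}_{x^{k_i}}$, using the sub-cube identity above, yields the $C_p^{N-1}-\sum_{j\text{ inlier to }k_i}C_p^{k_i-1}+\sum_{j\text{ underneath }\alpha_i}C_p^{\alpha_i-1}$ part of the claim. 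At levels $\ge p+1$, downward-closedness forces a boundary edge to start inside some $\mathcal{B}_{x^{k_i}}$ for which $j$ is an \emph{outlier} (otherwise the up-flip would stay in that same sub-cube), and its up-flip is infeasible exactly when $x$ avoids every $\mathcal{B}_{x^{k_{i'}}}$ with $j$ an inlier to $k_{i'}$; a second inclusion--exclusion then produces the $\sum_{j\text{ outlier to }k_i}\sum_{l=p+1}^{k_i}C_l^{k_i}-\sum_{j\text{ outside }\alpha_i}\sum_{l=p+1}^{\alpha_i}C_l^{\alpha_i}$ part. Summing and translating ``inlier/outlier'' and ``underneath/outside'' into the bit coding of $\mathcal{S}_{j_1\cdots j_{K+M}}$ gives the stated identity. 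Equivalently, in the inductive formulation one checks that adjoining $x^{k_{K+1}}$ changes $2^N\hat f(j)$ by $+\sum_{l=p+1}^{k_{K+1}}C_l^{k_{K+1}}$ if $j$ is an outlier to the new structure and by $-C_p^{k_{K+1}-1}$ if it is an inlier, plus one term per new pseudo upper zero ($+C_p^{\alpha-1}$ if $j$ lies underneath it, $-\sum_{l}C_l^{\alpha}$ if outside), which is exactly the gap between the $K$- and $(K{+}1)$-structure versions of the formula. The only nontrivial algebra is the Pascal-type collapse $C_p^{m}-C_{p-1}^{m-1}=C_p^{m-1}$ and the telescoping binomial sums already exploited in the proofs of Theorems \ref{idealthm} and \ref{idealthm2}.

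The step I expect to be the genuine obstacle is the inclusion--exclusion bookkeeping when three or more of the sub-cubes $\mathcal{B}_{x^{k_i}}$ have a common vertex above level $p$. A literal inclusion--exclusion then also contributes triple (and higher) intersections $\mathcal{B}_{x^{k_i}}\cap\mathcal{B}_{x^{k_j}}\cap\mathcal{B}_{x^{k_l}}=\mathcal{B}_{x^{k_i}\wedge x^{k_j}\wedge x^{k_l}}$, which are not among the \emph{pairwise} pseudo upper zeros $\{x^{\alpha_i}\}_{i=1}^{M}$. The identity as stated is the one obtained when these higher-order overlaps are absent --- equivalently, when every three-fold common-inlier set has at most $p$ elements --- so I would either impose this as a standing hypothesis or extend the proof by iterating the pseudo-upper-zero construction and attaching the natural alternating signs; getting that iteration to telescope back to a closed form is the delicate point, and I would present the pairwise case first and then remark on the general situation.
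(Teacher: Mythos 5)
Your proposal is correct in substance and lands on the same closed form, but it is organised differently from the paper's argument. The paper first works out the smallest non-ideal case $K=2$, $M=1$ (Theorem \ref{lem21}) by arguing that the overlap sub-cube $\mathcal{B}_{x^{\alpha_1}}$ ``kills'' $\sum_{l=p+1}^{\alpha_1}C_l^{\alpha_1}$ boundary edges for coordinates lying outside the pseudo upper zero and ``adds back'' $C_p^{\alpha_1-1}$ edges at level $p$ for coordinates underneath it, and then disposes of the general statement in one line: ``by Theorem \ref{lem21} and Theorem \ref{idealthm3}, Theorem \ref{nonidealthm} can be proved by induction on $M$.'' You instead induct on $K$ and, more importantly, back the per-pseudo-upper-zero correction with an explicit inclusion--exclusion count of direction-$j$ boundary edges, split at level $p$ versus levels $\ge p+1$, using the identity $\mathcal{B}_{x^{k_i}}\cap\mathcal{B}_{x^{k_j}}=\mathcal{B}_{x^{k_i}\wedge x^{k_j}}$. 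That buys you something the paper does not have: you see exactly why each pairwise overlap contributes $+C_p^{\alpha-1}$ and $-\sum_{l=p+1}^{\alpha}C_l^{\alpha}$, and you correctly observe that a literal inclusion--exclusion also produces triple-intersection terms $\mathcal{B}_{x^{k_i}\wedge x^{k_j}\wedge x^{k_l}}$ which the stated formula (and the paper's induction on $M$, which silently adds the pairwise corrections independently) does not account for. The theorem as written is therefore only valid under the unstated hypothesis that every three-fold common-inlier set has at most $p$ elements, i.e.\ that no three sub-cubes meet above level $p$; your suggestion to make this a standing assumption, or to iterate the pseudo-upper-zero construction with alternating signs, is the right fix and is a genuine improvement on the paper's one-line dispatch. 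One small caution: in your level-$\ge p+1$ count, remember that the lower endpoint must avoid \emph{every} $\mathcal{B}_{x^{k_{i'}}}$ with $j$ an inlier to $k_{i'}$ as well as lie in some $\mathcal{B}_{x^{k_i}}$ with $j$ an outlier, so the second inclusion--exclusion runs over mixed pairs (outlier/outlier and outlier/inlier), both of which collapse onto the same $\sum_{l=p+1}^{\alpha}C_l^{\alpha}$ term because $j$ is outside $x^{\alpha}$ in either case.
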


Before proving Theorem \ref{nonidealthm}, let us consider the simplest case where $K=2,M=1$ first.
\begin{theorem}\label{lem21}
\begin{align}
2^N\hat{f}(\mathcal{S}_{111})&=C_p^{N-1}-C_p^{k_1-1}-C_p^{k_2-1}-\sum_{l=p+1}^{\alpha_1}C_l^{\alpha_1},\label{eq1}\\
2^N\hat{f}(\mathcal{S}_{110})&=C_p^{N-1}-C_p^{k_1-1}-C_p^{k_2-1}+C_p^{\alpha_1-1},\label{eq2}\\
2^N\hat{f}(\mathcal{S}_{101})&=C_p^{N-1}-C_p^{k_1-1}+\sum_{l=p+1}^{k_2}C_l^{k_2}-\sum_{l=p+1}^{\alpha_1}C_l^{\alpha_1},\label{eq3}\\
2^N\hat{f}(\mathcal{S}_{011})&=C_p^{N-1}+\sum_{l=p+1}^{k_1}C_l^{k_1}-C_p^{k_2-1}-\sum_{l=p+1}^{\alpha_1}C_l^{\alpha_1},\label{eq4}\\
2^N\hat{f}(\mathcal{S}_{100})&=C_p^{N-1}-C_p^{k_1-1}+\sum_{l=p+1}^{k_2}C_l^{k_2}+C_p^{\alpha_1-1},\label{eq5}\\
2^N\hat{f}(\mathcal{S}_{010})&=C_p^{N-1}+\sum_{l=p+1}^{k_1}C_l^{k_1}-C_p^{k_2-1}+C_p^{\alpha_1-1},\label{eq6}\\
2^N\hat{f}(\mathcal{S}_{001})&=C_p^{N-1}+\sum_{l=p+1}^{k_1}C_l^{k_1}+\sum_{l=p+1}^{k_2}C_l^{k_2}-\sum_{l=p+1}^{\alpha_1}C_l^{\alpha_1},\label{eq7}\\
2^N\hat{f}(\mathcal{S}_{000})&=C_p^{N-1}+\sum_{l=p+1}^{k_1}C_l^{k_1}+\sum_{l=p+1}^{k_2}C_l^{k_2}+C_p^{\alpha_1-1}.\label{eq8}
\end{align}
If $\mathcal{S}_\bullet=\emptyset$, $\hat{f}(\mathcal{S}_\bullet)$ doesn't exist.
\end{theorem}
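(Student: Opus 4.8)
The plan is to reduce Theorem~\ref{lem21} to the single-structure formula of Theorem~\ref{idealthm} by writing the infeasibility function $f$ of the $K=2$, $M=1$ non-ideal configuration as a \emph{signed sum of ideal single-structure infeasibility functions}, and then invoking linearity of the Fourier coefficients. First I would pin down the pseudo upper zero. Since $\mathcal{B}_{x^{k_1}}$ and $\mathcal{B}_{x^{k_2}}$ are downward shadows (sub-cubes), $x\in\mathcal{B}_{x^{k_1}}\cap\mathcal{B}_{x^{k_2}}$ iff $\mathrm{supp}(x)\subseteq\mathrm{supp}(x^{k_1})\cap\mathrm{supp}(x^{k_2})$, so $\mathcal{B}_{x^{k_1}}\cap\mathcal{B}_{x^{k_2}}=\mathcal{B}_{x^{k_1}\wedge x^{k_2}}$, the shadow of the bitwise AND. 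Condition~(1) of Definition~\ref{non-ideal-defn} for $K=2$ says exactly that this AND has more than $p$ ones, so $x^{\alpha_1}=x^{k_1}\wedge x^{k_2}$, $\alpha_1=|\mathcal{S}_{x^{k_1}}^1\cap\mathcal{S}_{x^{k_2}}^1|$, and $p<\alpha_1\leqslant\min\{k_1,k_2\}<N$; hence the function ``ideal with respect to $x^{\alpha_1}$'' of Theorem~\ref{idealthm} is well defined. Write $f_1,f_2,f_\alpha$ for the ideal single-structure infeasibility functions with upper zeros $x^{k_1},x^{k_2},x^{\alpha_1}$.

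The second step is the decomposition $f=f_1+f_2-f_\alpha$, which is just two-set inclusion--exclusion applied to the feasible region. The feasible set of $f$ is $\mathcal{B}_{x^{k_1}}\cup\mathcal{B}_{x^{k_2}}\cup L_{\leqslant p}$; using $(\mathcal{B}_{x^{k_1}}\cup L_{\leqslant p})\cap(\mathcal{B}_{x^{k_2}}\cup L_{\leqslant p})=(\mathcal{B}_{x^{k_1}}\cap\mathcal{B}_{x^{k_2}})\cup L_{\leqslant p}=\mathcal{B}_{x^{\alpha_1}}\cup L_{\leqslant p}$, the indicator of the feasible region equals the sum of the indicators of $\mathcal{B}_{x^{k_i}}\cup L_{\leqslant p}$ minus the indicator of $\mathcal{B}_{x^{\alpha_1}}\cup L_{\leqslant p}$; taking $1$ minus both sides gives the claim. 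One must check the right-hand side stays $\{0,1\}$-valued: a bad value would require a vertex with $f_\alpha=0$ but $f_1=1$ or $f_2=1$, which is impossible since $\mathcal{B}_{x^{\alpha_1}}\subseteq\mathcal{B}_{x^{k_1}}\cap\mathcal{B}_{x^{k_2}}$ forces the feasible set of $f_\alpha$ to lie inside those of $f_1$ and $f_2$. (The same identity could instead be obtained by the boundary-edge counting used in Theorems~\ref{idealthm}--\ref{idealthm2}, subtracting the doubly-counted edges over $\mathcal{B}_{x^{\alpha_1}}$, but the inclusion--exclusion route avoids the level-by-level sums.)

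Finally, linearity of the transform gives $2^N\hat{f}(i)=2^N\hat{f}_1(i)+2^N\hat{f}_2(i)-2^N\hat{f}_\alpha(i)$ for every $i\in[N]$, and each term is read off from Theorem~\ref{idealthm}: for a coordinate $i$ the ideal function with upper zero at level $k$ contributes $C_p^{N-1}-C_p^{k-1}$ when $x_i=1$ on that upper zero (``inlier'') and $C_p^{N-1}+\sum_{l=p+1}^{k}C_l^{k}$ when $x_i=0$ (``outlier''). Running through the eight sign patterns $(x^{k_1}_i,x^{k_2}_i,x^{\alpha_1}_i)$ — and recalling the reversed convention $\mathcal{S}_{x^{\alpha_1}}^l=\{j:x^{\alpha_1}_j=1-l\}$, so that $j_3=0$ means $x^{\alpha_1}_i=1$ — produces exactly \eqref{eq1}--\eqref{eq8}. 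Four of the eight index types are in fact unrealizable, since $x^{\alpha_1}_i=x^{k_1}_i\wedge x^{k_2}_i$ forces $j_3$ from $j_1,j_2$; for those the displayed formula is merely the value linearity assigns, consistent with the ``doesn't exist'' caveat in the statement.

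I do not expect a genuine obstacle here: the arithmetic is routine once Theorem~\ref{idealthm} is available. The one place to be careful is the set identity $\mathcal{B}_{x^{k_1}}\cap\mathcal{B}_{x^{k_2}}=\mathcal{B}_{x^{\alpha_1}}$ together with the handling of levels $\leqslant p$ (where the union already absorbs everything and all three ideal functions vanish), and the verification that $f_1+f_2-f_\alpha$ never leaves $\{0,1\}$. This argument is also the natural seed for the general Theorem~\ref{nonidealthm}, whose right-hand side is the analogous signed sum over the $K$ structures and $M$ pseudo upper zeros, with higher-order overlaps accounted for by a longer inclusion--exclusion.
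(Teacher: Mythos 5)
Your proof is correct, but it takes a genuinely different route from the paper's. The paper starts from the ideal two-structure values of Theorems \ref{idealthm2}/\ref{idealthm3} and then argues, by a rather terse boundary-edge count over the overlap region $\mathcal{B}_{x^{\alpha_1}}$, that introducing the overlap ``kills'' $\sum_{l=p+1}^{\alpha_1}C_l^{\alpha_1}$ edges for coordinates with $j_3=1$ and ``adds'' $C_p^{\alpha_1-1}$ edges for coordinates with $j_3=0$ --- i.e.\ it performs the inclusion--exclusion at the level of edge counts. You instead perform the inclusion--exclusion at the level of the functions themselves: after identifying $x^{\alpha_1}=x^{k_1}\wedge x^{k_2}$ (which is exactly what the argmax in Definition \ref{non-ideal-defn} produces, with $p<\alpha_1\leqslant\min\{k_1,k_2\}<N$ guaranteed by condition (1)), you write $f=f_1+f_2-f_\alpha$ as an exact identity of indicators and invoke linearity of the degree-1 coefficients, reducing everything to three applications of the single-structure Theorem \ref{idealthm} and bypassing the two-structure results entirely. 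Your route buys several things: it avoids the level-by-level combinatorial sums; it makes explicit why the decomposition is legitimate (all four functions are monotone, so each influence coincides with the same linear functional of the function, namely the signed directional difference sum); and it sidesteps the sign/index slips in the paper's displayed recursion, which writes $+C_p^{\alpha_1}$ and $-\sum_{l}C_l^{\alpha_1-1}$ where the theorem statement requires $+C_p^{\alpha_1-1}$ and $-\sum_{l}C_l^{\alpha_1}$. What the paper's route buys is that the edge-count correction is stated in exactly the incremental form that feeds the induction on $M$ used for Theorem \ref{nonidealthm}; your closing remark about extending via ``a longer inclusion--exclusion'' would need care there, since the general formula as stated carries only pairwise overlap corrections and no higher-order intersection terms. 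Your observation that four of the eight index classes are unrealizable (because $j_3$ is forced by $j_1\wedge j_2$ when $K=2$, $M=1$) is consistent with, and sharper than, the paper's ``doesn't exist'' caveat.
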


\begin{proof}
By Theorem \ref{idealthm3}, we only have to prove
\begin{align*}
2^N\hat{f}(\mathcal{S}_{j_1j_2j_3})=\begin{cases}
2^N\hat{f}(\mathcal{S}_{j_1j_2})+C_p^{\alpha_1},~~~~&j_3=0,\\
2^N\hat{f}(\mathcal{S}_{j_1j_2})-\sum_{l=p+1}^{\alpha_1}C_l^{\alpha_1-1},~~~~&j_3=1.
\end{cases}
\end{align*}
This is true because in the overlap area $\mathcal{B}_{x^{\alpha_1}}$, it will kill some boundary edges from flipping $j\in\mathcal{S}_{j_1j_21}$ (which is $\sum_{l=p+1}^{\alpha_1}C_l^{\alpha_1}$) and add\footnote{Increasing is because $(\mathcal{B}_{x^{k_1}}\setminus \mathcal{B}_{x^{\alpha_1}})\cup \mathcal{B}_{x^{\alpha_1}} \cup (\mathcal{B}_{x^{k_2}}\setminus \mathcal{B}_{x^{\alpha_1}})=(\mathcal{B}_{x^{k_1}}\cup \mathcal{B}_{x^{k_2}})\setminus \mathcal{B}_{x^{\alpha_1}}$.} some boundary edges from flipping $j\in\mathcal{S}_{j_1j_20}$ (which is $C_p^{\alpha_1-1}$).
\end{proof}

Comparing all the influences, we can arrange them in a real-valued monotone decreasing Boolean function, as shown in Figure \ref{lem21RMBF}.

\begin{figure}
\begin{center}
\begin{tikzcd}
& \textcolor{red}\bullet \textcolor{red}\bullet \textcolor{red}\bullet\eqref{eq1} &\\
\bullet \textcolor{red}\bullet \textcolor{red}\bullet\eqref{eq4} \arrow[ur] & \textcolor{red}\bullet \bullet \textcolor{red}\bullet\eqref{eq3} \arrow[u] & \textcolor{red}\bullet \textcolor{red}\bullet \bullet\eqref{eq2} \arrow[ul]\\
\bullet \bullet \textcolor{red}\bullet\eqref{eq7} \arrow[u] \arrow[ur] & \bullet \textcolor{red}\bullet \bullet\eqref{eq6} \arrow[ul] \arrow[ur] & \textcolor{red}\bullet \bullet \bullet\eqref{eq5} \arrow[ul] \arrow[u]\\
& \bullet \bullet \bullet\eqref{eq8} \arrow[ul] \arrow[u] \arrow[ur] &
\end{tikzcd}
\caption{All influences in Theorem \ref{lem21} are arranged as a real-valued monotone decreasing Boolean function $\hat{f}$.}\label{lem21RMBF}
\end{center}
\end{figure}
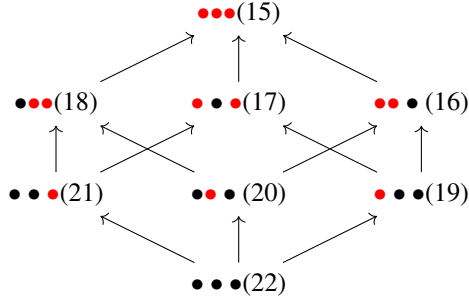

By Theorem \ref{lem21} and Theorem \ref{idealthm3}, Theorem \ref{nonidealthm} can be proved by induction on $M$.

\begin{corollary}
$\hat{f}:\{0,1\}^{K+M}\rightarrow \mathbb{R}$ is a monotone decrease Boolean function, which means
\begin{align*}
\forall x,y\in\{0,1\}^{K+M},~x>y\implies \hat{f}(\mathcal{S}_{x})<\hat{f}(\mathcal{S}_{y}),
\end{align*}
\end{corollary}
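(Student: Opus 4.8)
The plan is to deduce the ordering directly from the closed-form expression for the influences established in Theorem~\ref{nonidealthm}. That theorem asserts
\begin{equation*}
2^N\hat{f}(\mathcal{S}_{j_1\cdots j_{K+M}})=C_p^{N-1}+\sum_{j_i=0,1\leqslant i\leqslant K}\sum_{l=p+1}^{k_i}C_l^{k_i}-\sum_{j_i=1,1\leqslant i\leqslant K}C_p^{k_i-1}+\sum_{j_i=0,K+1\leqslant i\leqslant K+M}C_p^{\alpha_i-1}-\sum_{j_i=1,K+1\leqslant i\leqslant K+M}\sum_{l=p+1}^{\alpha_i}C_l^{\alpha_i},
\end{equation*}
and the crucial structural feature of the right-hand side is that it is \emph{affine} in the bits $j_1,\dots,j_{K+M}$: each coordinate $i$ contributes a term depending on $j_i$ alone, with no interaction between coordinates. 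Hence it suffices to understand the effect of flipping a single coordinate from $0$ to $1$ and then to sum these contributions.

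First I would record the elementary positivity facts that make the hypotheses of Definition~\ref{non-ideal-defn} bite. Every structure obeys $k_i>p$, so $k_i\geqslant p+1$, whence $C_p^{k_i-1}=\binom{k_i-1}{p}\geqslant 1$ and $\sum_{l=p+1}^{k_i}C_l^{k_i}\geqslant C_{p+1}^{k_i}\geqslant 1$; and every pseudo upper zero obeys $\alpha_i>p$ (immediate, since $x^{\alpha_{ij}}$ is an argmax of $\|\cdot\|_1$ over a subset of $\{0,1\}^N\setminus L_{\leqslant p}$), whence $C_p^{\alpha_i-1}\geqslant 1$ and $\sum_{l=p+1}^{\alpha_i}C_l^{\alpha_i}\geqslant 1$. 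Reading off the displayed formula, flipping a structure coordinate $1\leqslant i\leqslant K$ from $0$ to $1$ changes $2^N\hat{f}$ by $-\bigl(\sum_{l=p+1}^{k_i}C_l^{k_i}+C_p^{k_i-1}\bigr)$, while flipping a pseudo-upper-zero coordinate $K+1\leqslant i\leqslant K+M$ from $0$ to $1$ changes it by $-\bigl(C_p^{\alpha_i-1}+\sum_{l=p+1}^{\alpha_i}C_l^{\alpha_i}\bigr)$; by the positivity facts both quantities are strictly negative. Now for arbitrary $x>y$ in $\{0,1\}^{K+M}$, put $D=\{i:x_i=1,\ y_i=0\}$; since $x\geqslant y$ coordinatewise and $x\neq y$ we have $D\neq\emptyset$ and no coordinate with $x_i<y_i$, so summing the per-coordinate contributions gives
\begin{equation*}
2^N\bigl(\hat{f}(\mathcal{S}_{x})-\hat{f}(\mathcal{S}_{y})\bigr)=-\sum_{i\in D,\,i\leqslant K}\Bigl(\sum_{l=p+1}^{k_i}C_l^{k_i}+C_p^{k_i-1}\Bigr)-\sum_{i\in D,\,i> K}\Bigl(C_p^{\alpha_i-1}+\sum_{l=p+1}^{\alpha_i}C_l^{\alpha_i}\Bigr)<0,
\end{equation*}
i.e.\ $\hat{f}(\mathcal{S}_{x})<\hat{f}(\mathcal{S}_{y})$, which is exactly the claimed relation. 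As in the earlier corollaries, any index vector with $\mathcal{S}_\bullet=\emptyset$ carries no influence value and is simply omitted from the comparison.

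\textbf{Main obstacle.} With Theorem~\ref{nonidealthm} in hand the argument is essentially bookkeeping; the only genuine points of care are the two positivity assertions (this is where the hypotheses $k_i>p$ and $\alpha_i>p$ are actually used) and the convention for empty $\mathcal{S}_\bullet$. If one preferred a self-contained route that does not quote Theorem~\ref{nonidealthm}, the difficulty would migrate there: one would reprove the closed form by induction on $M$ from Theorems~\ref{lem21} and~\ref{idealthm3}, and the delicate step would be controlling how adjoining a pseudo upper zero simultaneously deletes boundary edges along the new coordinate it introduces and alters the edge counts attached to the pre-existing structure coordinates.
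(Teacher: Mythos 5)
Your proposal is correct and matches the paper's (implicit) reasoning: the paper states this corollary without proof as an immediate consequence of Theorem~\ref{nonidealthm}, and your argument --- reading off that the formula is separable across coordinates, checking that each $0\to 1$ flip contributes a strictly negative increment (using $k_i>p$ and $\alpha_i>p$ for positivity), and summing over the differing coordinates --- is exactly the intended derivation. The handling of empty $\mathcal{S}_\bullet$ also agrees with the paper's convention.
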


\begin{example}
\label{non-ideal-example1}
We choose $N=8$, $p=2$, $k_1=4$, $k_2=5$, $k_3=5$, $x^{k_1}=11001100$, $x^{k_2}=10101110$, $x^{k_3}=10110110$, then the pseudo upper zeros are $x^{\alpha_1}=10001100$, $x^{\alpha_2}=10100110$, which indicate $\alpha_1=3$ and $\alpha_2=4$. Direct calculations by Matlab gives the scaled influences (multiplied by $2^8$) as $[10~~ 44~~ 16~~ 30~~ 24~~ 10~~ 16~~ 52 ]$.

By Theorem \ref{nonidealthm}, we have
\begin{align*}
&2^8\hat{f}(\mathcal{S}_{11100})=C_2^7-C_2^3-C_2^4-C_2^4+C_2^2+C_2^3=10,\\
&2^8\hat{f}(\mathcal{S}_{01110})=C_2^7+\sum_{l=3}^4C_l^4-C_2^4-C_2^4-\sum_{l=3}^3C_l^3+C_2^3=16,\\
&2^8\hat{f}(\mathcal{S}_{11001})=C_2^7-C_2^3-C_2^4+\sum_{l=3}^5C_l^5+C_2^2-\sum_{l=3}^4C_l^4=24,\\
&2^8\hat{f}(\mathcal{S}_{00111})=C_2^7+\sum_{l=3}^4C_l^4+\sum_{l=3}^5C_l^5-C_2^4-\sum_{l=3}^3C_l^3-\sum_{l=3}^4C_l^4=30,\\
&2^8\hat{f}(\mathcal{S}_{10011})=C_2^7-C_2^3+\sum_{l=3}^5C_l^5+\sum_{l=3}^5C_l^5-\sum_{l=3}^3C_l^3-\sum_{l=3}^4C_l^4=44,\\
&2^8\hat{f}(\mathcal{S}_{00011})=C_2^7+\sum_{l=3}^4C_l^4+\sum_{l=3}^5C_l^5+\sum_{l=3}^5C_l^5-\sum_{l=3}^3C_l^3-\sum_{l=3}^4C_l^4=52.
\end{align*}

The relationships of influences are shown as follows
\begin{center}
\begin{tikzcd}
\textcolor{red}\bullet\textcolor{red}\bullet\textcolor{red}\bullet\bullet\bullet(10) & \bullet\textcolor{red}\bullet\textcolor{red}\bullet\textcolor{red}\bullet \bullet(16) & \textcolor{red}\bullet\textcolor{red}\bullet \bullet\bullet \textcolor{red}\bullet(24) & \bullet\bullet \textcolor{red}\bullet\textcolor{red}\bullet\textcolor{red}\bullet(30) & \textcolor{red}\bullet \bullet\bullet \textcolor{red}\bullet\textcolor{red}\bullet(44)\\
& & & & \bullet \bullet \bullet \textcolor{red}\bullet \textcolor{red}\bullet(52) \arrow[ul] \arrow[u]
\end{tikzcd}
\end{center}

Therefore, in MaxCon, guided by the sizes of the influences, we might eliminate points in the following sequence (largest first):
$$x_8\rightarrow x_2\rightarrow x_4\rightarrow x_5\rightarrow x_3,x_7\rightarrow x_1,x_6$$
until remain points are feasible. In this example, we would eliminate $x_8,x_2,x_4$, the remaining points correspond to $x^{k_2}$, which is exactly one of the two equal maximum upper zero\footnote{For multiple maximum upper zeros, choosing anyone would be fine since they are in the same level.}.

However, if we consider the actual membership of points to the structures (and their intersections)
we would arrive the Venn diagram in Figure \ref{fig:venn} and thus recognise that the structure corresponding to  $x^{k_2}$ is in a sense redundant. In that sense (ignoring the outlier data item in this example, and also ignoring any additional structure other then the three drawn in Figure \ref{realisefig:sub-first}) Figure \ref{realisefig:sub-first} could be seen as a realisation (line fitting) of this abstract MaxCon problem, and perhaps we might prefer to find the structure corresponding to  $x^{k_3}$ (the other maximum upper zero) in preference.
We must note, however, that the situation shown here is rather atypical of realisations of the real problems - usually the intersections of structures of much less data then the whole structure.

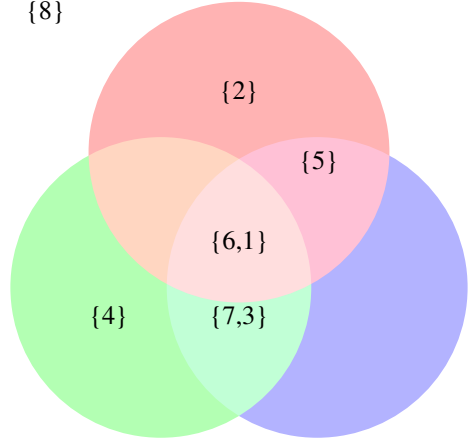
\begin{figure}
    \centering
    
\begin{tikzpicture}
\begin{scope}[blend group=soft light]
\fill[red!30!white]   ( 90:1.2) circle (2);
\fill[green!30!white] (210:1.2) circle (2);
\fill[blue!30!white]  (330:1.2) circle (2);
  \end{scope}

\node at ( 90:2)    {\{2\}};
  \node at (210:2)    {\{4\}};
  \node at (330:2)    {};
  \node  {\{6,1\}};
  \node at ( 45:1.5)    {\{5\}};
   \node at ( 270:1)    {\{7,3\}};
    \node at ( 130:4)    {\{8\}};
  \end{tikzpicture}

    \caption{Venn Diagram for Structure Membership of Example \ref{non-ideal-example1}}
    \label{fig:venn}
\end{figure}

\end{example}

\subsection{q-Weighted Influence and Influence Estimation}
\label{q-weighted}

Naively calculating the influence would involve visiting every edge in the Boolean cube (obviously infeasible for any real sized problems).

Thus one needs to estimate the influences and this leads to the topic of sampling the edges. The most obvious strategy would be to uniformly sample the cube. However, this may not be the most efficient.

This connects with the notion of weighted influence. In such an approach one typically defines a Bernoulli measure $\mu_q(x)$ over the vertices of the Boolean Cube. Operationally, this can correspond to sampling by selecting to ``turn bits on'' independently and with probability $q$. 
Uniform sampling corresponds to $q=0.5$. Sampling with $q$ low concentrates measure towards the bottom of the cube (small Hadamard norm) and high towards the top. 

One can put a slightly different interpretation on this (see for example \cite{kalai:2016}). One can view Influence as being defined in terms of this measure. Up to this point, what we have called influence ($I(f)$) can now be identified with $I_{0.5}(f)$. Hence we can now view sampling with measures $q\neq 0.5$ as either biased estimation of $I(f)$ or as estimation of $I_{q}(f)$.
In our experiments (section \ref{infl_est}) we set $q={{p+3}\over N}$ as a heuristic that seems to work better then uniform sampling for the purposes of determining likely inliers/outliers to the MaxCon solution. However, this heuristic deserves better theoretical and experimental investigation.

\subsection{Metric Regularity and Geometry of MaxCon Search}
\label{metric-reg}
Every geometric structure (e.g., our linear slab in Fig. \ref{fig:sub-second})  defines a sub-cube or face of the Boolean Cube (where $N-n$ of the coordinates - 
corresponding to the 
$N-n$ outliers to the structure - are set to zero, and the remaining coordinates can take on any values).
Faces of the Boolean Cubes actually are examples of  {\em metrically regular sets} \cite{Oblaukhov2019}. Such sets always come in pairs - and the paired set to a given structure will be also a face 
(see Fig. \ref{metric:fig}) where the $N-n$ bits are now set to one. The geometric relationship between these two sets is that they are in essence maximally far apart in the Boolean Cube. For multi-structured data, there will be a metric regular pair for each structure; but here we, for simplicity, discuss the single-structure ideal case.


It is intuitively obvious that the opposite set (in a metrically regular pair) is the region of the Boolean cube we wish to avoid (or exit early) in any search based approach to finding structures (or large feasible sets in general).
RANSAC, by it's very nature, never enters such a region. 
(The entire region consists of infeasible subsets.)
The $A^*$ tree search algorithm \cite{tjcvpr2015}, by its very nature, {\em starts} at the ``top'' of this region. The optimal path must leave that region in the first step (and spend only one step on every contour to the maximum feasible region). So the efficiency of the search relates to how many extra ``nodes'' expanded in the fringe for each contour.

The crucial point is that this metric regular pair {\em totally define the search landscape}. Contours extending out from these sets at fixed increments of {\em Hadamard size  (essentially the number bits changed)} define the gradient directions from (maximally) infeasible to feasible.

\begin{figure}
  \includegraphics[width=.5\linewidth]{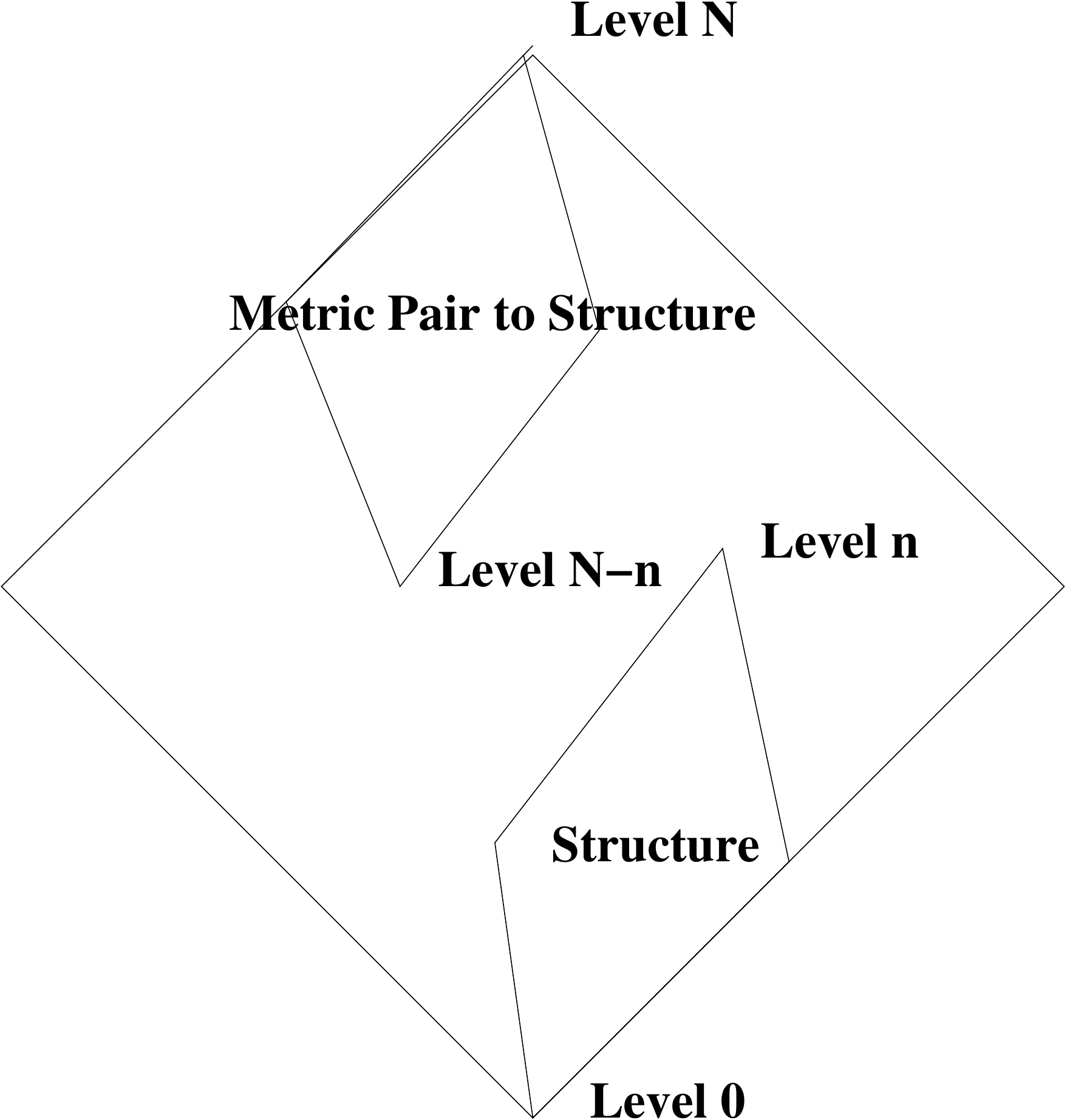}  
  \caption{Metric Regular Pairs Associated to a Structure}
\label{metric:fig}
For $N$ data, the MaxCon solution of size $n$ defines an $n$-dimensional face of the $N$ dimensional cube. The metric regular pair to this is also an $n$-dimensional face. It has ``lower'' vertex whose bits are inversion of the bits of ``upper vertex'' of the MaxCon face. In other words, the ``upper vertex'' of the MaxCon face has bits set to one for all inliers (and outlier bit set to 0) and the face is ``completed'' by removing inliers (downwards), whereas the metric regular pair face has a ``lower vertex'' with ones in all outlier positions, and the face is completed (upwards) by inclusion of inliers.

In all cases, the distance from any point in one set, to the nearest point in the other set is $N-n$. Though the diagram doesn't faithfully depict this, the closest point to $0$ (which is in the MaxCon face) is the ``lower'' vertex of the metric regular pair face. Likewise, the closest point to the top of the $N$ cube (which is in the metric regular pair face) is the MaxCon solution (the ``top'' of the MaxCon face).
Obviously, in the extreme case where the MaxCon solution is the top of the $N$ cube (all inliers), there is no longer a metric regular pair. 
\end{figure}

\subsection{Searches on the Boolean Cube}
\label{searches-section}

\begin{figure}
\includegraphics[width=.5\linewidth]{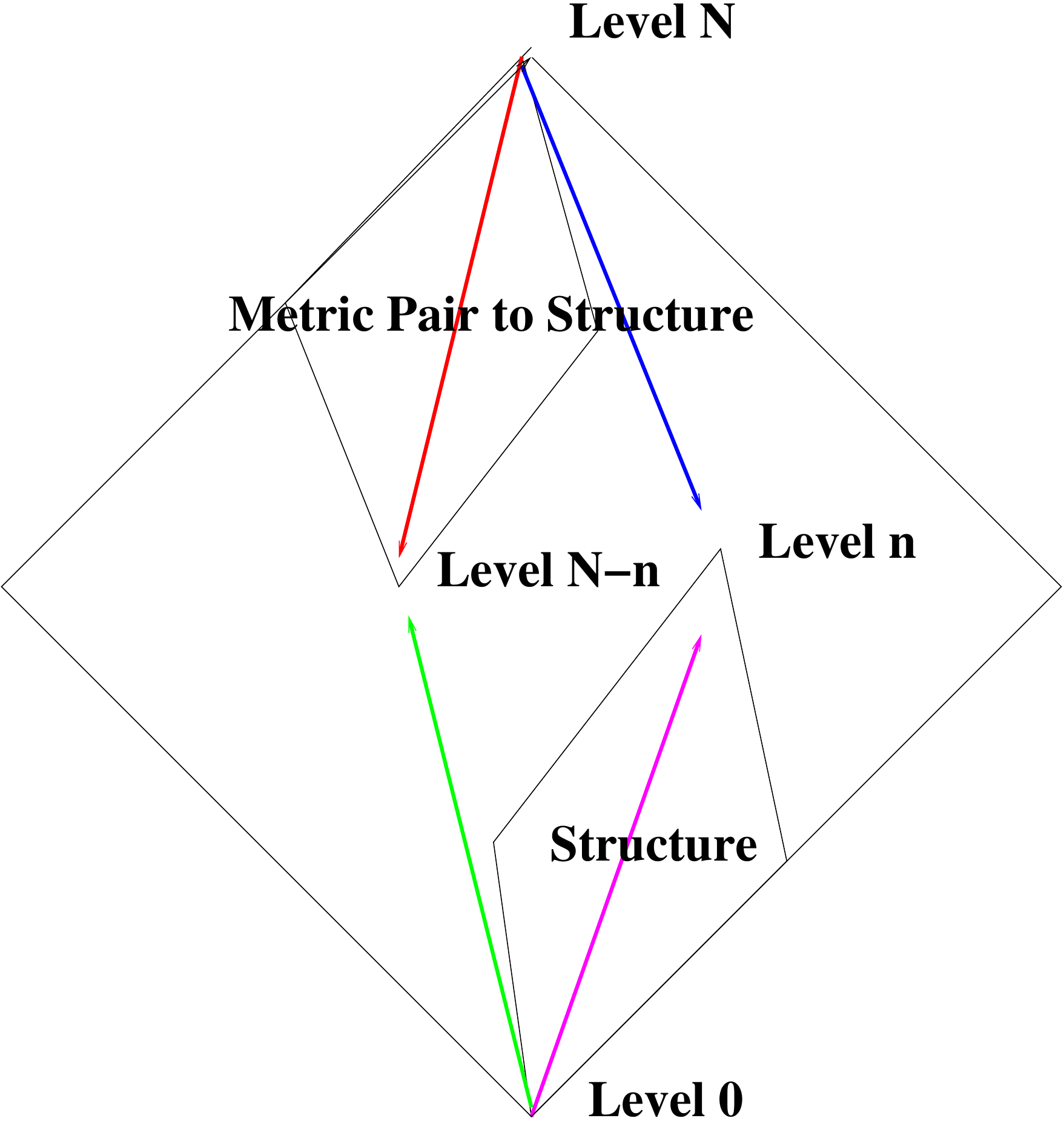}
\caption{Searches on the Boolean Cube}
\label{searches-fig}
\end{figure}

\subsubsection{Four Basic Strategies}

With the geometry of the MaxCon solution, on the Boolean Cube, in mind, a handful of search strategies seem ``natural''. 
The first is to start at the top and try to exclude outliers. This is indicated by the blue path in Figure \ref{searches-fig}. One requires an Oracle that returns feasibility/infeasibility and one stops when the first feasible subset is reached. One also requires some way of selecting the data point to eliminate at each step. $A^*$ searches in the literature \cite{tjcvpr2015} adopt this strategy and the chosen heuristic guide the order in which candidate outliers are removed.

The second is to start at the bottom and include likely inliers (the magenta path). This is not natural for the $A^*$ algorithms, at least those published so far,  because they start from the premise that the Oracle also identifies candidate points to {\em exclude} (by finding a basis). However, if one has some heuristic for determining likely inliers, this clearly is a viable search strategy. Albeit, one would also want the inliers to be ``ranked'' so as to avoid selecting inliers to sub-optimal structures. 
The analysis in section \ref{non-ideal-defn} clearly gives some indication of how ranked (estimated) influences might be employed here.

The other two strategies are perhaps less obvious but are suggested by symmetry. One can choose all four combinations (two start positions - which then dictates whether one includes or excludes) and the two choices for inliers/outliers to be included/excluded. In particular, the red path excludes likely inliers and heads for the metric regular pair vertex (all outliers). Termination can be determined by the same oracle but applied to the bit-inversion of the subset being visited. Likewise, the green path is inclusion of outliers and again the termination condition is applied to the inversion of the bits.

Of course, if the choices made (which data point to include/exclude) were based on perfect information, there would not be much to choose between these four choices. The paths are of different lengths but the complexity (with perfect information) is still essentially the same order. 
However, if the decisions are based on imperfect information (heuristics, estimates rather than exact values) then the four may behave differently and perhaps there would be reasons to combine strategies. We don't follow this suggestion here. Rather, in the experimental part of the paper we follow the same basic path strategy chosen thus far for  $A^*$ algorithms - to search from the top, looking for the blue path.
However, since our heuristic measure simultaneously gives both estimates of likely outlier (large values of our heuristic) or likely inlier (small values of our heuristic), all four strategies could be employed, in principle.

\subsubsection{Local Refinement for Sub-Optimal}

Given a feasible, but likely sub-optimal solution, the question naturally arises as to how to try to ``greedily'' improve the solution. 
Of course, many proposals exist in the literature, including the common one of using the inliers to estimate (Least Squares being. natural choice) a model fit, an then collecting all inliers within tolerance of that model fit.

Alternatively, one could cycle through all excluded points and add them to the set unless their inclusion makes the set infeasible. Such a strategy is, in general, not likely to give the optimal solution because one may add an outlier (to the optimal) early and thus be precluded from ``ascending'' to the top of the MBF shadow of the optimal upper zero. However, as section \ref{influence_analysis} shows, so long as the current set is outside of the shadow of pseudo-upper zeros (intersections of shadows of upper zeros of actual ``local optimal'' feasible sets) it should be possible to ascend to the highest upper zero. The ranking of influences of inliers outlined in section \ref{influence_analysis} (where inliers to the largest structure only - not in the overlap with other structures - are the smallest, etc.), holds some hope that such a strategy may be devised. In section \ref{refine} we implement  local improvement, which works surprisingly well, but without such sophisticated analysis. It is therefore only heuristically justified and its relative success, only empirically justified.

\subsection{All instances of the abstract problem - Another Boolean Cube (and set of monotone functions).}
\label{grand_bool}
\begin{figure}
{\centerline
\small
\setlength{\unitlength}{800sp}%
\begingroup\makeatletter\ifx\SetFigFont\undefined%
\gdef\SetFigFont#1#2#3#4#5{%
  \reset@font\fontsize{#1}{#2pt}%
  \fontfamily{#3}\fontseries{#4}\fontshape{#5}%
  \selectfont}%
\fi\endgroup%
\begin{picture}(8424,8730)(2389,-8176)
\put(6826,-3511){\makebox(0,0)[lb]{\smash{{\SetFigFont{12}{14.4}{\rmdefault}{\mddefault}{\updefault}{\color[rgb]{0,0,0}Values generally not equal at each level}%
}}}}
\thinlines
{\color[rgb]{0,0,0}\put(5476,-7036){\line( 1, 0){2250}}
}%
{\color[rgb]{0,0,0}\put(5476,-886){\line( 1, 0){2250}}
}%
{\color[rgb]{0,0,0}\put(6601,-1336){\vector( 0, 1){  0}}
\put(6601,-1336){\vector( 0,-1){5175}}
}%
\put(6901,389){\makebox(0,0)[lb]{\smash{{\SetFigFont{12}{14.4}{\rmdefault}{\mddefault}{\updefault}{\color[rgb]{0,0,0}Level $C^N_k$ - MaxCon=k-1}%
}}}}
\put(7051,-8161){\makebox(0,0)[lb]{\smash{{\SetFigFont{12}{14.4}{\rmdefault}{\mddefault}{\updefault}{\color[rgb]{0,0,0}Level 0 MaxCon=N}%
}}}}
\put(8101,-6961){\makebox(0,0)[lb]{\smash{{\SetFigFont{12}{14.4}{\rmdefault}{\mddefault}{\updefault}{\color[rgb]{0,0,0}Level 1 - MaxCon=N-1}%
}}}}
\put(8101,-886){\makebox(0,0)[lb]{\smash{{\SetFigFont{12}{14.4}{\rmdefault}{\mddefault}{\updefault}{\color[rgb]{0,0,0}Level $C^N_k-1$ - MaxCon=k}%
}}}}
{\color[rgb]{0,0,0}\put(6601,239){\line(-1,-1){4200}}
\put(2401,-3961){\line( 1,-1){4200}}
\put(6601,-8161){\line( 1, 1){4200}}
\put(10801,-3961){\line(-1, 1){4200}}
}%
\end{picture}%
}

  \caption{$C^N_k$ cube of all instances of problems with min infeasible size $k$}
  \label{metric:sub-second}
\end{figure}

For the set of problems where infeasible sets have fixed minimum size, $k$, another boolean cube comes into play (see Fig. \ref{metric:sub-second}).
This is the Boolean Cube $\{0,1\}^{C^N_k}$.
This is a much larger Boolean Cube than $\{0,1\}^{N}$: whereas the latter describes the combinatorics of {\em an instance} of our problem, the former describes the combinatorics of {\em the collection of all problems} (of this restricted type - all minimal infeasible sets of equal size).

Specifically, $\{0,1\}^{C^N_k}$ is the set of possible choices of infeasible sets, organized into a partially ordered set (poset) on inclusion (just as the Boolean Cube $\{0,1\}^{N}$ is the set of all possible choices of data points,
in our setting, ordered by inclusion).

However, there is an added level of complication coming from the interpretation of the cube. In $\{0,1\}^{N}$ each of the $N$ elements have no ``essential relationship'' to each other. We are free to arbitrarily change the labels. In $\{0,1\}^{C^N_k}$ we have relationships between the ${C^N_k}$ elements (their ``overlap'' - where one of the infeasible subsets, of size $k$ shares elements with others).
Thus, we are not as free to ``relabel'' our constraints - relabelling must respect (preserve) these overlap relationships.

On the face of it, if $\{0,1\}^{N}$ is already a challenge to deal with (size and complexity of geometry etc.), $\{0,1\}^{C^N_k}$ would seem to be much more of a challenge. However, at least for some purposes, it has been shown (e.g., \cite{Raymond2018}) that it has characterisations that are much more efficient than would first be expected. 
It is not immediately clear whether such ideas are helpful for our purposes, but we will study the extent to which similar ideas can be used in our setting.

What are the associated monotone functions? Since every node in  $\{0,1\}^{C^N_k}$ represents an instance of our abstract feasibility problem, an obvious function is the size of the maximum feasible set for that instance (i.e., the MaxCon solution to that instance). 
At the bottom of the cube (no infeasible subsets) the function evaluates to $N$ and at the top (all k-subsets infeasible) the value of the function is $0$.
This  function is clearly monotone (since adding infeasible subsets to the problem creates a new problem instance where the MaxCon solution can only get smaller). Thresholding this function creates a new monotone function (characterising all problem instances that have a maxCon solution above/below a certain size). This is obviously a useful concept/object of study for determining conditions that ensure solutions of a certain size, and for characterising the distribution of problems of certain sizes: and perhaps the difficulty of solving sub-classes of problems (with particular configurations of minimal infeasible sets).

\subsection{Independence Systems, Matroids, Greedy Algorithm, and Simplicial Complexes}
\label{matroid}
In this section we make note of some known connections - that not connect to MaxCon.

Firstly, we note that a Boolean Monotone Function defines an Independence System. The latter is simply a ground set, and a collection of sets of the ground set closed under taking subsets. This is clearly the ``lower part'' of our Boolean Monotone Function. That is, the sets of the Boolean Cube where the Boolean Monotone Function takes on the value 0. (In our framework, the feasible sets of points form an independence system).

Now, the axioms of an an independence system as two of the axioms from a common set of axioms for the definition of a Matroid (matroids are well known to have many cryptomorphic alternative definitions). The ``missing'' third axiom (not mandated of an independence system but mandated for a matroid, is the ``exchange axiom''. In other words, a Matroid is a special case of an Independence System where one requires the matroid exchange axiom to hold. Thus, every BMF defines an Independence System (at least one - actually, by symmetry one can define an Independence System on the negated ``up-set'' as well as one the down set, the latter is the aforementioned independence set). Moreover, {\em some} Boolean Monotone Functions define a matroid (where the exchange axiom happens to hold). 

Now consider the maximum weight problem (a minimum weight problem can analogously be defined): given a set of weights on the ground set, find the maximum weight set (sum of weights of elements in a set) within the independence system. The Greedy Algorithm (start with the empty set, select from any element that can be added - leaving one still within the independence system) the one with the maximum weight (any of the choices if there is more than one). It has been known for many years that this algorithm provides the global optimal iff the independence system is a matroid. What does this mean for MaxCon? Well consider all weights equal on all of the points, then add points by the greedy algorithm, hoping to achive the maxCon solution. That is, add, at any stage, any point that when added leaves the set feasible. This naive algorithm will work iff the BMF associated with data defines a matroid (rather than any independence system that isn't also a matroid).

Note, the maximum sized sets of matroid, all have the same size. Thus, it is clear that our BMF's only ever correspond to matroids if ther is only one upper zero, or all upper zero's are equal sized. Thus, we can see that the greedy algorithm for MaxCon is essentially doomed as the aforementioned conditions are very rare in practice.

Lastly, we also note another well known connection. Independence sets are also essentially (abstract) simplicial complexes. Again, the axioms behind this concept are of a finite ground set and of closure under taking subsets. So we could alternatively view our MaxCon problem as that of finding the maximum sized face (simplex would then be the ``downsets'' of this) of an abstract simplicial complex. Put another way, the downward shadows we have been drawing, with upper zero's at their apex, are nothing other than simplices that constitute a simplicial complex, and intersecting shadows are where the simplices share faces.

\section{Exploiting Influence to Solve MaxCon}
\label{prop-method-section}
\subsection{Problem Definition and Notation}
Given a set of outlier contaminated data $\left \{\mathcal{X}_i \right\}_{i=1}^{N}$ with $N$ data points and a threshold $\epsilon$, the consensus maximization objective is to find the largest subset of data that agrees with a model instance $\theta$, up to the given threshold $\epsilon$.
We will identify the data, for notational simplicity with just their indices ${1,\ldots,N}$ and thus we can encode membership of a data item in a subset by using a binary N-bit long string, as below.

Let $\mathcal{S} =  \left \{ 0,1 \right \}^n  $ be the set of all the possible subsets of data. Each subset $s^{(j)} \in \mathcal{S}$ can be associated with a vector $\mathbf{x}^{(j)} \in \mathbb{R}^N$ where each element $x^{(j)}_i \in \mathbf{x}^{(j)}$ indicates whether the corresponding data point, $\mathcal{X}_i$, is included ($x^{(j)}_i=1$) in the subset or excluded ($x^{(j)}_i=0$). $\mathbf{x}^{(j)}$ is said to be a feasible subset if the data points, $s^{(j)} = \left \{ i : x_i^{(j)}=1 \right \}$, are predicated to be within an $\epsilon$ distance of a model instance, $\theta^{(j)}$ (infeasible otherwise). As explained in the introduction, this relationship can be modelled using a Monotone Boolean function $f : \left \{ 0,1 \right \}^N \to \left \{ 0,1 \right \}$, where $f\left( \mathbf{x} \right) = 0$ indicate the associated subset is feasible (infeasible if $1$). 

In the above setting, finding the maximum consensus set can be viewed as finding the maximum ``upper zero" of the associated MBF.  The maximum upper zero is defined as the point $\hat{\mathbf{x}}$ for which $f\left( \hat{\mathbf{x}} \right) = 0$ and for all $\mathbf{x}$ such that $\left \|  \mathbf{x}  \right \|_1 > \left \|  \hat{\mathbf{x}}  \right \|_1$, $f\left( {\mathbf{x}} \right) = 1$ \cite{KULYANOV1975267}. Note: in this setting the 1-norm is often called the Hadamard Norm and, also in this setting, it is simply a count of the number data in a subset - the cardinality of that set. 

This paper is devoted to the exploitation the information embedded in the Influence function (which is actually a vector function, one component for each data item, and coincides with the first order Fourier coefficient, $\hat{f}\left ( \left \{ i \right \} \right )$ of the MBF): to find the maximum upper zero, efficiently. To explain the method, we first note that the influence of an inlier data point is likely to be smaller than the influence of an outlier data point. The essential and intuitive reason is that inclusion of an outlier, into a feasible subset, most likely turns that subset infeasible (thereby ``influencing'' the function often). In contrast adding an inlier leaves the set feasible. More fully: inliers participate in infeasible $p+1$ sized bases with a large number of data (other outliers, subsets of inliers) whereas inliers participate in less. It is the ``breaking'' or creation of an infeasible $p+1$ sized basis that is responsible for flipping the outcome of the MBF and therefore the addition/deletion of an outlier triggers this more than that of an inlier.

In section \ref{influence_analysis} we proved the above intuition holds in certain mathematically idealized formulations. However, we need to recognise that we can only work with {\em estimated} influences, not the actual influences. Thus we need to assume/hope that the estimated influences largely follow the ordering given mathematically in our derivations and according with the above intuition. 

\subsection{First Algorithm}
For our algorithms, we use the notion of an Oracle function that will return the feasibility of a given subset of data. The actual Oracle that we employ is the same one as in \cite{tj_2015}\cite{Cai_2019} and this Oracle not only returns the required feasibility/infeasibility but it also returns a basis. (Which, as explained before, is a fixed sized set of points, contained in the given subset, that determines the feasibility of the whole subset. Hence if the subset is infeasible, at least one of these points in the basis is an outlier).

In practice our Oracle is used in several ways. Firstly, to estimate the Influence (as that process requires generation of samples from the Boolean Cube and the evaluation of the feasibility at that sample). Secondly as a termination condition (like the $A^*$ tree searches we start at the top - all data included - and terminate when we find the first feasible subset).

For our first algorithm, these are the only uses of the Oracle. (Our second algorithm uses the aforementioned extra information returned by the Oracle - since the Oracle returns a basis we can consider only evaluating the Influences of the elements corresponding to that basis - see below).

Indeed the essence of both algorithms is very simple - we calculate the Influences and remove the data point associated with the largest Influence. In the first algorithm we calculate the Influences of all data points and we call this algorithm `BMF-maxcon-max'. 

\subsection{Second Algorithm}
Since the point to be removed at each iteration must belong to the $p+1$ basis points (calculated by the Oracle, which is an $L_{\infty}$ fit to data)
we can reduce the number of degree-1 Fourier coefficients to be calculated at each iteration to $p+1$ only. This algorithm is called `BMF-maxcon-L$\infty$'. 

\subsection{More Details of Both Algorithms}
\subsubsection{Estimation of Influence}
It is important to note that we have to estimate the first order Fourier coefficients using $m$ (a hyper-parameter in our method) function queries (exact calculation using all the points in the Boolean hyper-cube is intractable for any reasonable size data set). 

In detail, our current method (we have explored others) is:
\begin{enumerate}
\item \label{est-sample} Sample $m/2$ vertices/subsets, $\left \{ \mathbf{x}^{(j)} \right \}_{j=0}^{m/2}$, randomly from Boolean cube with probability of the j'th coordinate being $p(x_j=1) =q$. In all experiments we set $q = (p+3)/N$ for sampling concentrated on level $p+3$. Though this is at the moment heuristic, it is motivated by the following: as noted before, the Monotone Boolean Functions for our setting are a special sub-class: they are fully determined by the slice at level $p+1$. Moreover, because $p$ is, again for our special cases, much less than $N$, the ``width'' of the Boolean Cube is smallest near $p$ rather than higher up. So sampling at close to this level, can be closer to exhaustive, with a given sampling budget. (As to why $p+1$ or $p+2$ as levels for concentrated sampling work less well, in our experiments, that is likely due to the fact that any samples falling at level $p$ or lower are ``wasted'' as the Boolean Function is essentially ``information free'' at those levels - always feasible.) 

\item \label{est-eval} Evaluate the function on the coordinates sampled in step \ref{est-sample}.

\item \label{est-flip} For each influence $i$: Flip the bit $i$ in all coordinates in $\left \{ \mathbf{x}^{(j)} \right \}_{j=0}^{m/2}$ and evaluate the function. The flipped coordinates are $\left \{ \mathbf{x}^{(j)} \right \}_{j=m/2+1}^{m}$. 

\item Use the calculated function value in step \ref{est-eval} and \ref{est-flip} to compute the influence.
\begin{equation}
    \hat{f}\left ( \left \{ i \right \} \right ) = \frac{1}{m} \sum_{j=1}^m f\left ( \mathbf{x}^{(j)} \right ) \cdot  (-1)^{1+x^{(j)}_i}
\end{equation}
Here $\hat{f}\left ( \left \{ i \right \} \right )$ is the influence of $i$-th data point. 
\end{enumerate}

Note: one can use the monotonic nature of the function in step \ref{est-flip} to save evaluations: where evaluation the function of the subset, before bit flip, is infeasible and the flip is $0 \to 1$, or feasible and $1 \to 0$.

\subsubsection{Refinement of Solution}
\label{refine}
The estimation of Influence introduces some noise to the proposed algorithms, and the solution returned by them may not include all the inlier points of a given structure. To partially overcome this, we introduce a local expansion step (Algorithm \ref{alg:BMF-maxcon-lexp}). In this step, starting from the initial solution, at each iteration, the distance-1 upper neighborhood (Hasse Diagram) of the current solution is explored and the current solution is updated if there is any feasible set. This process is repeated until there are no feasible subsets in the distance-1 upper neighborhood. 

The complete algorithms for both of our proposed methods are given in Algorithm \ref{alg:BMF-maxcon-linf} (with the differentiating characteristic appearing at lines 4 and 5), and the detail of local expansion refinement used in both is given in Algorithm \ref{alg:BMF-maxcon-lexp}.

\subsubsection{Re-Estimation of Influence}
\label{infl_est}
We found a further element of sophistication is required. Essentially, one needs to re-estimate the influences at each step. 
If $\hat{f}_{(l)}\left ( \left \{ i \right \} \right ) $ is the degree-1 Fourier coefficient at iteration $l$ of the algorithm, then  $\hat{f}_{(l)}\left ( \left \{ i \right \} \right ) \neq \hat{f}_{(l')}\left ( \left \{ i \right \} \right )$ when $l \neq l'$.  This is because function at level $l$ is a restricted version of the function at level $l-1$. The degree-1 Fourier coefficient at iteration $l$ of the algorithm:
\begin{equation}
\hat{f}_{(l)}\left ( \left \{ i \right \} \right ) = \hat{f}_{(l-1)}\left ( \left \{ i \right \} \right )  + \hat{f}_{(l-1)}\left ( \left \{ i, r \right \} \right ) 
\end{equation}
where $r$ is the data point, removed at iteration $l-1$. In practice re-estimating the degree-1 Fourier coefficients at each iteration is preferred as this will increase the accuracy of the estimation (restriction function progressively becomes simpler). Another allied intuition is that though, as mentioned before, noise (from both the estimation process and from the departure of the data from that of the ideal) will raise the level of the influence of some inliers (and decrease the values of some outliers) to the point where the estimated influences of some inliers will be above those of some outliers: at each stage we only remove the {\em largest} influence data point which will be away from the ``polluted'' data division; and that re-estimation afterwards allows the possibility for the re-estimated influences to be ``cleaner''.  


\begin{algorithm}[!h]                      
	\caption{Algorithm for finding the maximum consensus set using influences of BMFs.}          
	\label{alg:BMF-maxcon-linf}                           
	\begin{algorithmic} [1]
		\REQUIRE $\left \{\mathcal{X}_i \right\}_{i=1}^{N}$, $\mbox{method} \in \left \{ \mbox{`max'}, \mbox{`L}\infty\mbox{'} \right \}$, $m$.

		\STATE $\mathbf{x} \gets \left [ 1, \dots, 1 \right ]_{\left [ 1 \times N \right ]}$
		
		\REPEAT
		\STATE ${b} \gets \left \{ i : x_i = 1 \right \}$
		\IF{$\mbox{method} =  \mbox{`L}\infty\mbox{'} $}
		\STATE ${b} \gets$ The $p+1$ edge points from $L_{\infty}$ fit to $\bar{\mathcal{X}} = \left \{ \mathcal{X}_i : i \in b \right \}$
		\ENDIF
        
        \STATE Estimate $\hat{f}\left ( \left \{ i \right \} \right )~~ \forall i \in {b}$ using $m$ function queries
        \STATE $r \gets \underset{i \in {b} }{argmax}~ \hat{f}\left ( \left \{ i \right \} \right ) $
        \STATE $x_r \gets 0$
        
		\UNTIL{$f(\mathbf{x}) = 0$}
		\STATE $\mathbf{x} \gets$ Run local expansion step in (Algorithm \ref{alg:BMF-maxcon-lexp}) 
		\RETURN maximum consensus set ${s} \gets \left \{ i : x_i = 1 \right \}$
		
	\end{algorithmic}
\end{algorithm}

\begin{algorithm}[!h]                      
	\caption{Local expansion step.}          
	\label{alg:BMF-maxcon-lexp}                           
	\begin{algorithmic} [1]
		\REQUIRE $\left \{\mathcal{X}_i \right\}_{i=1}^{N}$, $m$, initial feasible set $\mathbf{x}$.

		\STATE $\bar{b} \gets \left \{ i : x_i = 0 \right \} $ 
		\REPEAT
		\STATE solFound = false
		\FORALL{$z \in \bar{b}$}
		\STATE $\bar{\mathbf{x}} \gets \mathbf{x}$;~$\bar{{x}}_z \gets 1$
		\IF{$f(\bar{\mathbf{x}}) = 0$}
		\STATE $\mathbf{x} \gets \bar{\mathbf{x}}$; solFound = true;~ break;
		\ENDIF
		\ENDFOR
		\UNTIL{solFound=true}
		\RETURN $\mathbf{x}$
		
	\end{algorithmic}
\end{algorithm}

\section{Results}
We evaluated the performance of both proposed algorithms, on both synthetic and real data, and compared those with the state-of-the-art techniques. All experiments were executed on a computer with Intel Core 2.60GHz i7 CPU, 16GB RAM and Ubuntu 14.04 OS. The publicly available codes were used to obtain the results for improved $A^*$ tree search\footnote{\url{https://github.com/ZhipengCai/MaxConTreeSearch}} (A*-NAPA-DIBP) \cite{Cai_2019} and Lo-RANSAC\footnote{\url{https://github.com/ZhipengCai/Demo---Deterministic-consensus-\\maximization-with-biconvex-programming}} \cite{loransac}. 


\subsection{Synthetic data}
To study the behaviour of the proposed algorithm under a controlled setting, similar to \cite{Cai_2019}, we conducted experiments on an 8-dimensional robust linear regression problem with synthetically generated data. First, a set of $N$ data points on a randomly instantiated model $\theta \in \mathbb{R}^8$ was generated. A subset of ($N - N_{o}$) randomly selected points (inliers) were then perturbed with uniformly distributed noise in the range $\left [ -0.1, 0.1 \right ]$. The remaining $N_{o}$ data points (outliers) were then perturbed with uniformly distributed noise from $\left [-5. -0.1\right ) \cup \left ( 0.1, 5 \right]$. The inlier threshold $\epsilon$ was therefore set to $0.1$ for all the experiments in this section.



In our experiments, the number of outliers, $N_{o}$, were varied in the range of $\left[5, 40\right]$ and the computation time for the proposed methods as well as the state-of-the-art $A^*$ method (A*-NAPA-DIBP) \cite{Cai_2019} are shown in Fig. \ref{fig:syntheticData_example_subfig1}. The figure shows that when the number of outliers are low ($< 30$) $A^*$ converges to a solution relatively quickly. However, the computational time of $A^*$ increases exponentially with the number of outliers. On the other hand, the computational time of the two proposed algorithms increase linearly with the number of outliers. This is clearly predictable as our algorithms take one step across each level and the deeper down is the MaxCon solution, proportionally longer is the ``search''. $A^*$ has a much more sophisticated search that allows backtracking of routes explored and this causes the exponential behaviour when that is heavily exercised. Figure \ref{fig:syntheticData_example_subfig2} shows the difference between the number of inliers returned by the two proposed methods and $A^*$.  On average the method BMF-maxcon-max returns a solution with usually close to the same number of inliers as the A* method, while BMF-maxcon-L$\infty$ that is around one inlier away from the optimal solution. The figure also shows the 0.05th and 0.95th percentile distances from $A^*$ solution over 100 random runs. This shows that in few cases the solution returned by  BMF-maxcon-L$\infty$ can be up to 5 inliers away from the optimal solution (around 3\% error). The main summary is that the proposed methods never ``go exponential'', unlike $A^*$, in runtime, but usually compare reasonably favourably in accuracy. 

\begin{figure}[t!]
    \centering
\subfigure[]{
    \includegraphics[scale=.27]{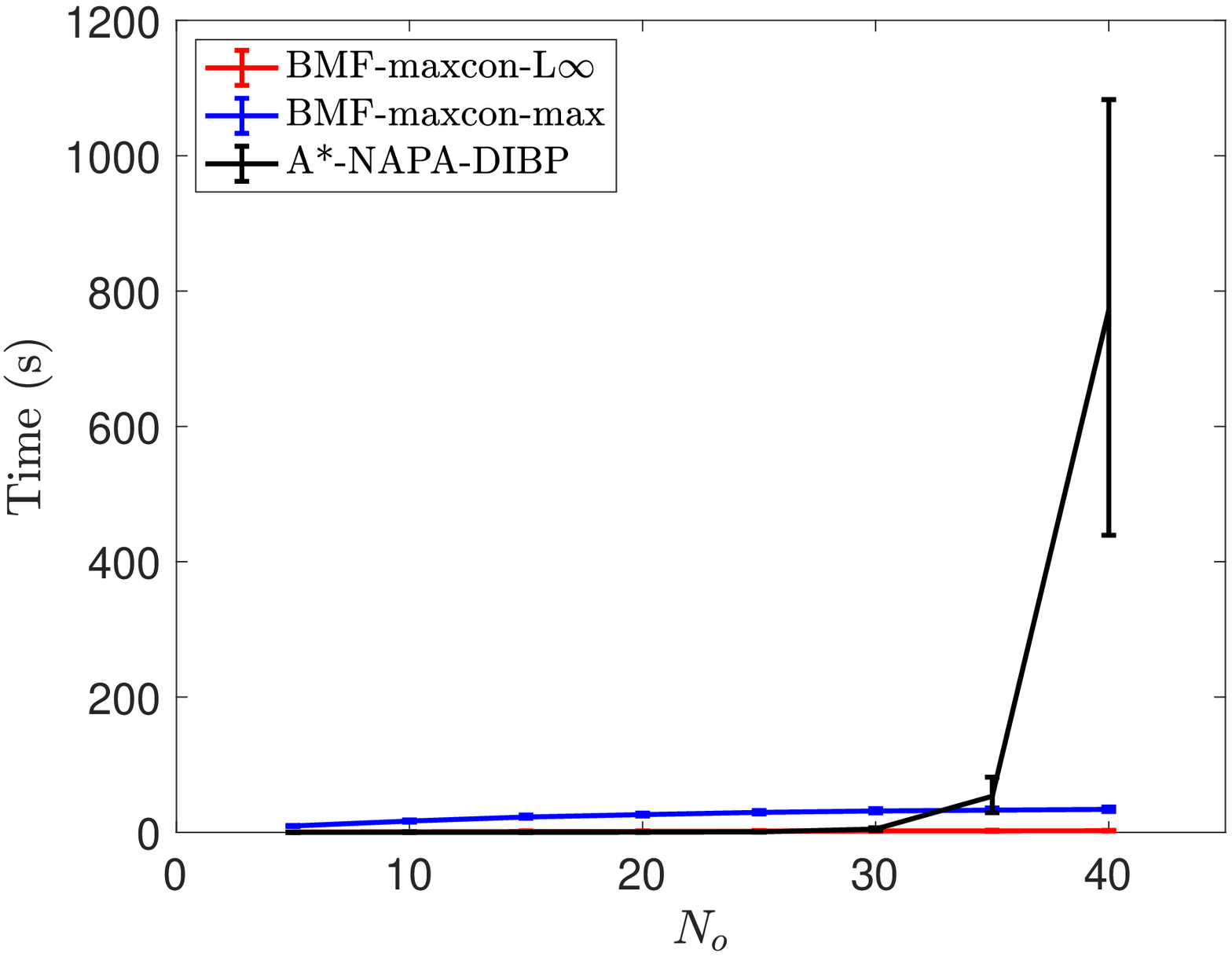} 
    \label{fig:syntheticData_example_subfig1}
}
\subfigure[]{
    \includegraphics[scale=.27]{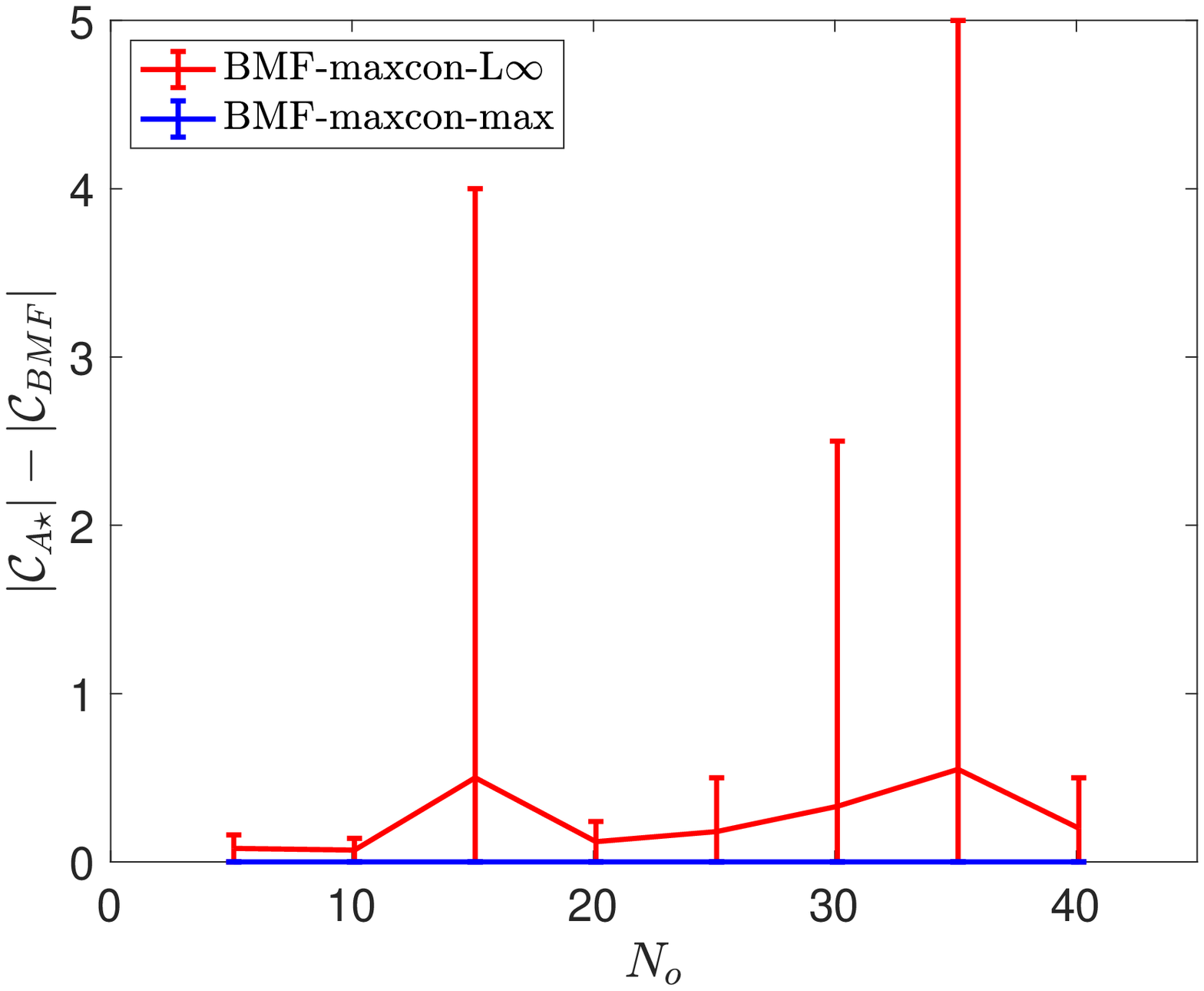} 
    \label{fig:syntheticData_example_subfig2}
}

\caption{Results for 8 dimensional robust linear regression with synthetic data (a) Variation of computational time with number of outliers and (b) Number of inliers found compared with the global optimal (obtained using $A^*$). The experiemets were repeated 100 times and the errorbars indicate the $0.05$-th and $0.95$-th percentile.}
\label{fig:syntheticData_example}
\end{figure}

\subsection{Linearized fundamental matrix estimation}
\label{FM-est}
In this section we examine the performances of the proposed methods for linearized fundamental matrix estimation \cite{tj_2015}. Provided that the point matches between two views are given as $[\textbf{x}_1, \textbf{x}_2]$ where $\textbf{x}_i = (x_i, y_i, 1)^\top$ is a coordinate of a point in view $i$, each rigid motion in the scene can be modeled using the fundamental matrix $F \in \mathcal{R}^{3 \times 3}$  as \cite{Torr1997}: $\textbf{x}_1^\top F \textbf{x}_2 = 0$. In our experiments we use the linearized version presented in \cite{tj_2015} together with the algebraic error \cite{hartley2003multiple} (chapter 11).  
\subsubsection{Motion Estimation by Fundamental Matrix Estimation}
\label{motion-est}

\paragraph{Single dominant motion:}

Following \cite{Cai_2019}, we used the first five crossroads image pairs from the sequence ``00'' of the KITTI Odometry dataset \cite{geiger2012we} in our experiments. For each image pair, the input was a set of SIFT \cite{lowe1999object} feature matches generated using VLFeat \cite{vedaldi08vlfeat}. The inlier threshold $\epsilon$ was set to $0.03$ for all image pairs. The number of inliers returned by each method ($N_i$) and the computation times are shown in Table \ref{tab:lin_fund_res}. The results reported for the proposed methods are the mean (min, max) over 100 random runs. The results show that the proposed methods on average have produced solutions that are close to the optimum solution returned by $A^*$ \cite{Cai_2019}.


{\tiny
\begin{table}[]
\centering
\caption{Linearized fundamental matrix estimation result.}
\begin{tabular}{|cl|c|c|c|c|}
\hline
\multicolumn{1}{|l}{Frame}                     &         & \begin{tabular}[c]{@{}c@{}}A*-NAPA \\ DIBP\end{tabular} & \begin{tabular}[c]{@{}c@{}}BMF-maxcon-\\ $L\infty$\end{tabular} & \begin{tabular}[c]{@{}c@{}}BMF-maxcon-\\ max\end{tabular}   & Lo-RANSAC \\ \hline
\multicolumn{1}{|c|}{\multirow{2}{*}{104-108}} & $N_{i}$ & 289          & \begin{tabular}[c]{@{}c@{}}285.10\\ (274, 289)\end{tabular}     & \begin{tabular}[c]{@{}c@{}}283.40\\ (271, 289)\end{tabular} & \begin{tabular}[c]{@{}c@{}}281.66\\ (277, 284)\end{tabular}      \\ \cline{2-6} 
\multicolumn{1}{|c|}{}                         & time (s)     & 9.627        & 1.697                                                           & 9.904                                                       &      2.21     \\ \hline
\multicolumn{1}{|c|}{\multirow{2}{*}{198-201}} & $N_{i}$ & 296          & \begin{tabular}[c]{@{}c@{}}291.50\\ (278, 296)\end{tabular}     & \begin{tabular}[c]{@{}c@{}}287.10\\ (277, 292)\end{tabular} & \begin{tabular}[c]{@{}c@{}}290.3\\ (288, 292)\end{tabular}       \\ \cline{2-6} 
\multicolumn{1}{|c|}{}                         & time (s)     & 6.047        & 1.739                                                           & 10.996                                                      &    2.22       \\ \hline
\multicolumn{1}{|c|}{\multirow{2}{*}{417-420}} & $N_{i}$ & 366          & \begin{tabular}[c]{@{}c@{}}362.15\\ (334, 366)\end{tabular}     & \begin{tabular}[c]{@{}c@{}}345.90\\ (286, 365)\end{tabular} & \begin{tabular}[c]{@{}c@{}}360.4\\ (359, 361)\end{tabular}       \\ \cline{2-6} 
\multicolumn{1}{|c|}{}                         & time (s)     & 8.037        & 2.694                                                           & 19.341                                                      &     2.28      \\ \hline
\multicolumn{1}{|c|}{\multirow{2}{*}{579-582}} & $N_{i}$ & 523          & \begin{tabular}[c]{@{}c@{}}514.00\\ (498, 523)\end{tabular}     & \begin{tabular}[c]{@{}c@{}}523.00\\ (523, 523)\end{tabular} & \begin{tabular}[c]{@{}c@{}}519.26\\ (514, 520)\end{tabular}       \\ \cline{2-6} 
\multicolumn{1}{|c|}{}                         & time (s)     & 6.942        & 4.032                                                           & 24.688                                                      &    2.26       \\ \hline
\multicolumn{1}{|c|}{\multirow{2}{*}{738-742}} & $N_{i}$ & 462          & \begin{tabular}[c]{@{}c@{}}458.50\\ (443, 462)\end{tabular}     & \begin{tabular}[c]{@{}c@{}}420.50\\ (403, 459)\end{tabular} & \begin{tabular}[c]{@{}c@{}}453.9\\ (450, 456)\end{tabular}      \\ \cline{2-6} 
\multicolumn{1}{|c|}{}                         & time (s)     & 4.151        & 1.884                                                           & 17.872                                                      &     2.25      \\ \hline
\end{tabular}
\label{tab:lin_fund_res}
\end{table}
}

The histogram of inliers returned by BMF-maxcon-L$\infty$ and Lo-RANSAC over 100 repeated runs for two frames of the KITTI Odometry dataset is shown in Figure \ref{fig:linearizedfund_hist}. The main message is that we operate in a time cost regime a little better than $A^*$ and around the same as we allowed for Lo-RANSAC but we generally get much closer to $A^*$ performance - including often finding the optimal which Lo-RANSAC never does.

\begin{figure}[t!]
    \centering
\subfigure[Frame 104-108]{
    \includegraphics[scale=.27]{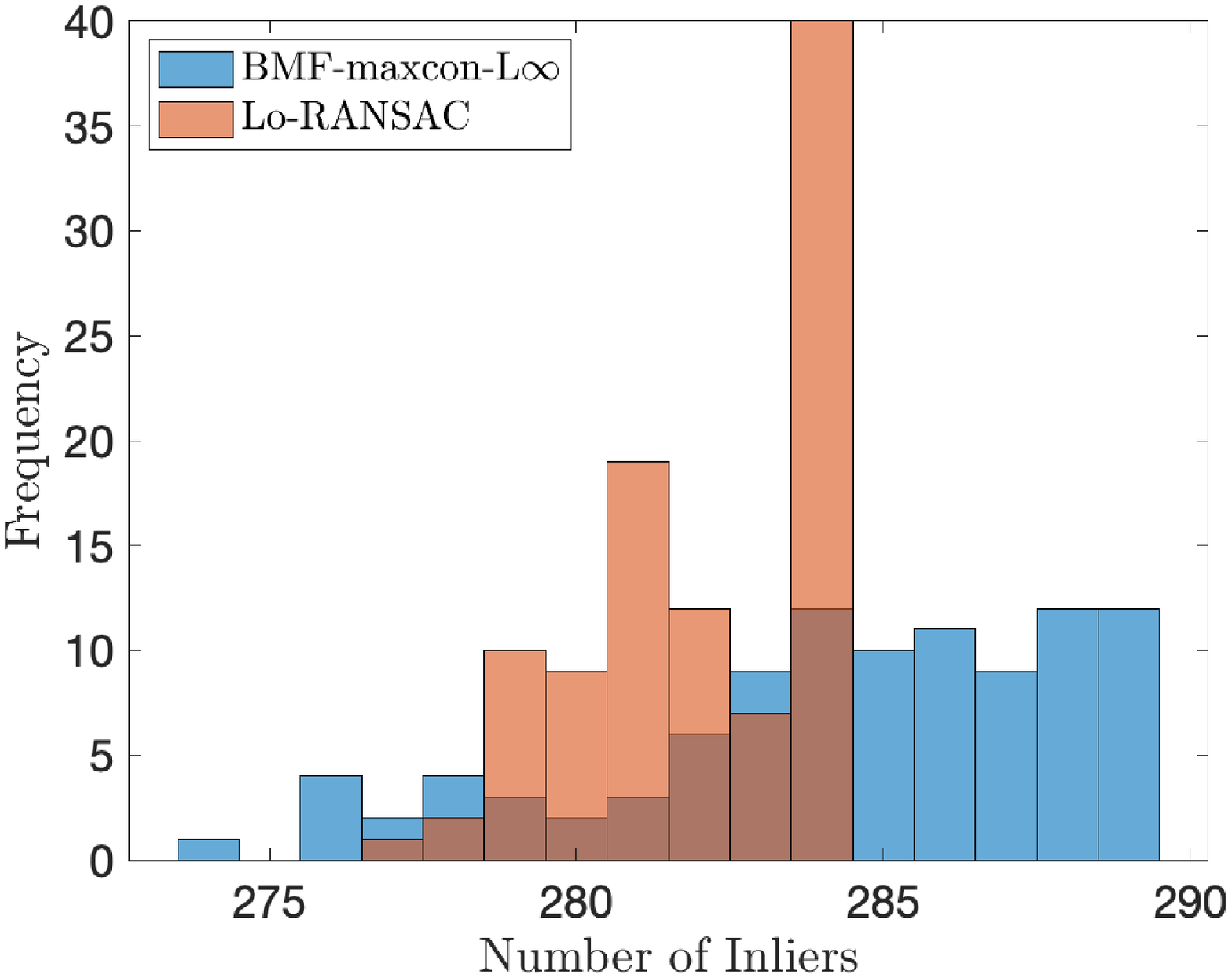} 
    \label{fig:linearizedfund_histe_subfig1}
}
\subfigure[Frame 738-742]{
    \includegraphics[scale=.27]{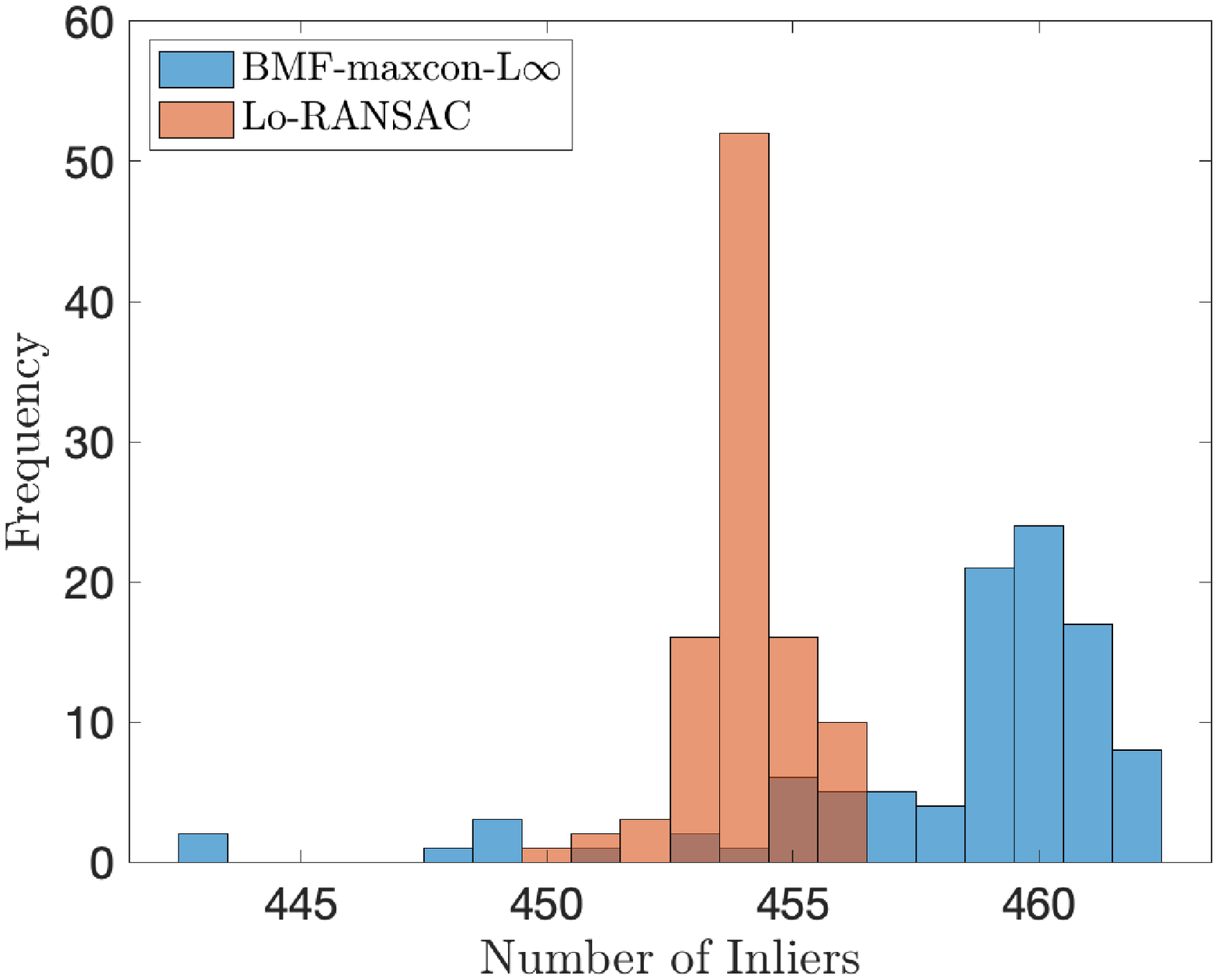} 
    \label{fig:slinearizedfund_hist_subfig2}
}

\caption{The distribution of inliers returned by BMF-maxcon-L$\infty$ and Lo-RANSAC over 100 repeated runs for frame 104-108 and frame 738-742.}
\label{fig:linearizedfund_hist}
\end{figure}

\paragraph{Multiple motions:} 
The above data-set contains a single dominant motion and the number of outliers are limited (around 13-22). To study the behaviour of the proposed algorithms in the presence of multiple structures, we conducted experiments on two sequences from the AdelaideRMF data-set \cite{wong2011dynamic}. For each image pair, the input was a set of SIFT \cite{lowe1999object} feature matches generated using VLFeat \cite{vedaldi08vlfeat}. The inlier threshold $\epsilon$ was set to $0.015$ for all image pairs. In these sequences, there are two rigidly moving objects. The results show that the method A*-NAPA-DIBP did not find a solution after $36000$s (10 hours), where as the proposed methods found solutions closer to the optimal solution in around 30 seconds. Once again, on a similar budget of computation to Lo-RANSAC, we generally obtain higher consensus sets (and thus implicitly closer to what $A^*$ is capable of, but on these data sets, would require astronomically more computation.

\begin{table}[]
\centering
\caption{Linearized fundamental matrix estimation result for cases with multiple structures.}
\begin{tabular}{|l|l|c|c|c|}
\hline
\multicolumn{2}{|l|}{}                & A*-NAPA-DIBP       & \begin{tabular}[c]{@{}c@{}}BMF-maxcon-\\ $L\infty$\end{tabular}  & Lo-RANSAC  \\ \hline
\multirow{2}{*}{breadcube} & $N_i$       & $\sim$             & \begin{tabular}[c]{@{}c@{}}63.25\\ (57, 67)\end{tabular}                       & \begin{tabular}[c]{@{}c@{}}61.5\\ (57, 65)\end{tabular}  \\ \cline{2-5} 
                           & time (s) & \textgreater 36000 & 32.9                                                                                                                            &  31.35   \\ \hline
\multirow{2}{*}{breadtoy}  & $N_i$       & $\sim$             & \begin{tabular}[c]{@{}c@{}}103.25\\ (96, 111)\end{tabular}                                                                         & \begin{tabular}[c]{@{}c@{}}103.4\\ (100, 107)\end{tabular}  \\ \cline{2-5} 
                           & time (s) & \textgreater 36000 & 29.29                                                                                                                     &  31.04   \\ \hline
\end{tabular}
\label{tab:lin_fund_res_multi}
\end{table}

\section{Conclusion}
We have applied a totally new perspective to the long standing problem of MaxCon. This perspective recognises that the underlying mathematical object is a Monotone Boolean Infeasibility function, defined over the Boolean Cube. Such a perspective immediately identifies a rich mathematical theory that can be applied. Very probably, we have only scratched that surface here. But we have been able to take at least one element of the theory - the fact that for MBFs the concept of the Influence of the function is related to the first order Fourier Transform of that function (and, which in turn can be estimated by sampling an Oracle for the function). We have linked that concept to the concept of outlier in MaxCon and shown that already, without borrowing further from the rich theory, that we can derive algorithms that are already at least competitive, in some aspects. Specifically: 
\begin{enumerate}
\item The approach sometimes achieves the true MaxCon, whereas Lo-RANSAC practically never achieves the maxCon solution. Indeed, it is well known that that is a feature of RANSAC based methods in general. It is actually provable that in most circumstances the optimal is not even available to RANSAC like sampling.
\item The approach can (mostly) achieve close to $A^*$ (provably optimal) in a similar time budget or faster.
\item Unlike $A^*$ the approach will never go exponential in runtime or memory requirements, and - as above - unlike RANSAC variants, it can more reliably obtain the optimal or close to optimal result - given a similar time budget.
\end{enumerate}

We recognise the results are thus far empirical and not so extensive as to leave these claims totally without challenge. Nonetheless, we believe it is important to demonstrate that the new theory ``has legs'': it can already claim some noteworthy performance, and exploration of our ideas, and of the wider MBF theory deserves more research attention from the computer vision community.


\bibliographystyle{ieeetr}
\bibliography{egbib,book,refs,conference}

\end{document}